\newtheorem{thm}{Theorem}
\newtheorem{prop}[thm]{Proposition}
\newcolumntype{C}[1]{>{\centering\arraybackslash}m{#1}}
\newcommand{\del}[1]{ \Delta^{\rm C-EP}_{#1} }
\newcommand{\nab}[1]{ \nabla^{\rm BPTT}_{#1} }
\title{\centering Equilibrium Propagation with\\
Continual Weight Updates}
\author{
  {\bfseries Maxence Ernoult$^{1,2}$, Julie Grollier$^2$, Damien Querlioz$^1$, Yoshua Bengio$^{3,4}$, Benjamin Scellier$^3$\vspace{0.5cm}}\\
  $^1$Centre de Nanosciences et de Nanotechnologies, Université Paris-Saclay\\
  $^2$Unité Mixte de Physique, CNRS, Thales, Université Paris-Saclay\\
  $^3$Mila, Université de Montréal\\
  $^4$Canadian Institute for Advanced Research
    }
\begin{document}

\maketitle

\begin{abstract}
Equilibrium Propagation (EP) is a learning algorithm that bridges Machine Learning and Neuroscience, by computing gradients closely matching those of Backpropagation Through Time (BPTT), but with a learning rule local in space.
Given an input $x$ and associated target $y$, EP proceeds in two phases: in the first phase neurons evolve freely towards a first steady state; in the second phase output neurons are nudged towards $y$ until they reach a second steady state.
However, in existing implementations of EP, the learning rule is not local in time:
the weight update is performed after the dynamics of the second phase have converged and requires information of the first phase that is no longer available physically.
In this work, we propose a version of EP named Continual Equilibrium Propagation (C-EP) where neuron and synapse dynamics occur simultaneously throughout the second phase, so that the weight update becomes local in time.
Such a learning rule local both in space and time opens the possibility of an extremely energy efficient hardware implementation of EP. 
We prove theoretically that, provided the learning rates are sufficiently small, at each time step of the second phase the dynamics of neurons and synapses follow the gradients of the loss given by BPTT (Theorem 1).
We demonstrate training with C-EP on MNIST and generalize C-EP to neural networks where neurons are connected by asymmetric connections. We show through experiments that the more the network updates follows the gradients of BPTT, the best it performs in terms of training. 
These results bring EP a step closer to biology by better complying with hardware constraints while maintaining its intimate link with backpropagation.
\end{abstract}

\section{Introduction}
In the deep learning approach to AI \citep{lecun2015deep}, backpropagation thrives as the most powerful algorithm for training artificial neural networks.
It is interesting to note that its implementation on conventional computers or dedicated hardware consumes more energy than the brain by several orders of magnitude \citep{strubell2019energy}.
One path towards reducing the gap between brains and machines in terms of power consumption is by investigating alternative learning paradigms relying, as synapses do in the brain, on locally available information. 
Such local learning rules could be used for the development of extremely energy efficient learning-capable hardware, taking inspiration from information-encoding, dynamics and topology of the brain to achieve fast and energy efficient AI \citep{ambrogio2018equivalent,romera2018vowel}.

In these regards, Equilibrium Propagation (EP) is an alternative style of computation for estimating error gradients that presents significant advantages \citep{Scellier+Bengio-frontiers2017}.
EP belongs to the family of contrastive Hebbian learning (CHL) algorithms \citep{ackley1985learning,movellan1991contrastive,hinton2002training},
in which neural dynamics and synaptic updates depend solely on information that is locally available.
As a CHL algorithm, EP applies to convergent RNNs, which by definition are fed by a static input and converge to a steady state.
Training such a network consists in adjusting the weights so that the steady state corresponding to an input $x$ produces output values
close to associated targets $y$.
CHL algorithms proceed in two phases: 
first, conditioned on $x$, neurons evolve freely without external influence and settle to a (first) steady state; 
second, the values of output neurons are pulled towards the target $y$ and the neurons settle to a second steady state. 
CHL weight updates consist in a Hebbian rule strengthening the connections between co-activated neurons at the first steady state, and an anti-Hebbian rule with opposite effects at the second steady state. 
A difference between Equilibrium Propagation and standard CHL algorithms is that output neurons are not clamped in the second phase but elastically pulled towards
slightly reducing the loss value.

A second key property of EP is that, unlike CHL and related algorithms, it is intimately linked to backpropagation. It has been shown that synaptic updates in EP compute gradients of recurrent backpropagation (RBP) \citep{scellier2019equivalence} and backpropagation through time (BPTT) \citep{ernoult2019updates}.
This makes EP especially attractive to bridge the gap between neural networks developed by neuroscientists, neuromorphic researchers and deep learning researchers.

Nevertheless, the bioplausibility of EP still undergoes two major limitations.
First, although EP is local in space, it is non-local in time. 
In all existing implementations of EP the weight update is performed after the dynamics of the second phase have converged, when the first steady state is no longer physically available: the first steady state has to be stored.
Second, the network dynamics have to derive from a primitive function which, in the Hopfield model, translates into to the requirement of symmetric weights.
These two requirements are biologically unrealistic and also hinder the development of efficient EP computing hardware. 

In this work, we propose an alternative implementation of EP, called \emph{Continual Equilibrium Propagation}  (C-EP) which features temporal locality, by enabling synaptic dynamics to occur throughout the second phase, simultaneously with neural dynamics.
We then address the second issue by adapting C-EP to systems having asymmetric synaptic connections, taking inspiration from \citet{scellier2018generalization}; we call this modified version the Continual Vector Field approach (or C-VF).

More specifically, the contributions of the current paper are the following:
\begin{itemize}
    \item 
    We introduce C-EP (Section \ref{subsec:cep-intuition}-\ref{subsec:cep-defi}), a new version of EP with continual weight updates. 
    Like standard EP, the C-EP algorithm applies to networks whose synaptic connections between neurons are assumed to be symmetric and tied.
    
    \item 
    We show mathematically that, provided that the changes in synaptic strengths are sufficiently slow (i.e. the learning rates are sufficiently small), at each time step of the second phase the dynamics of neurons and synapses follow the gradients of the loss obtained with BPTT (Theorem~\ref{thm:gdd} and Fig.~\ref{fig:gdd}, Section~\ref{subsec:cep-thm}).
    We call this property the \emph{Gradient Descending Dynamics} (GDD) property, following the terminology used in \citet{ernoult2019updates}.
    
    \item
    We demonstrate training with C-EP on MNIST, with accuracy approaching the one obtained with standard EP (Section~\ref{subsec:training}).
    
    \item
    We adapt C-EP to the more bio-realistic situation of a neural network with asymmetric connections between neurons.
    We call this C-VF as it is inspired by the {\em Vector Field} method proposed in \citet{scellier2018generalization}. 
    We demonstrate this approach on MNIST, 
    and show numerically that the training performance is correlated with the satisfaction of the Gradient Descending Dynamics property (Section~\ref{subsec:vf}).
    
    \item We also show how the Recurrent Backpropagation (RBP) algorithm of \citet{almeida1990learning,pineda1987generalization} relates to C-EP, EP and BPTT. We illustrate the equivalence of these four algorithms on a simple analytical model (Fig.~\ref{fig:toy-model}) and we develop their general relationship in Appendix \ref{sec:proof}.
\end{itemize}

\section{Background: Convergent RNNs and Equilibrium Propagation}
\label{sec:ep}

\paragraph{Convergent RNNs With Static Input.}
We consider the supervised setting, where we want to predict a target $y$ given an input $x$.
The model is a recurrent neural network (RNN) parametrized by $\theta$ and evolving according to the dynamics:
\begin{equation}
    s_{t+1} = F \left( x,s_t,\theta \right).
    \label{eq:1st-phase}
\end{equation}
$F$ is the transition function of the system.
Assuming convergence of the dynamics before time step $T$, we have $s_T = s_*$ where $s_*$ is the steady state of the network characterized by
\begin{equation}
    s_* = F \left( x,s_*,\theta \right).
\end{equation}
The number of timesteps $T$ is a hyperparameter that we choose large enough so that $s_T = s_*$ for the current value of $\theta$.
The goal is to optimize the parameter $\theta$
in order to minimize a loss:
\begin{equation}
	{\mathcal L^*} = \ell \left( s_*, y \right).
    \label{eq:loss-steady-state}
\end{equation}
Algorithms that optimize the loss ${\mathcal L^*}$ for RNNs include Backpropagation Through Time (BPTT) and the Recurrent Backpropagation (RBP) algorithm of \citet{Almeida87,Pineda87}, presented in Appendix \ref{sec:bptt-rbp}.

\paragraph{Equilibrium Propagation (EP).}
EP \citep{Scellier+Bengio-frontiers2017} is a learning algorithm that computes the gradient of ${\mathcal L}^*$ in the particular case where the transition function F derives from a scalar function $\Phi$, i.e. with $F$ of the form $F(x,s,\theta) = \frac{\partial \Phi}{\partial s}(x,s,\theta)$.
The algorithm consists in two phases (see Alg.~1 of Fig.~\ref{alg:EP-CEP}).
During the first phase, the network follows a sequence of states $s_1, s_2, s_3 \ldots$ and converges to a steady state denoted $s_*$.
In the second phase, an extra term $\beta \; \frac{\partial \ell}{\partial s}$ pertubs the dynamics of the neurons (where $\beta > 0$ is a scalar hyperparameter):
starting from the steady state $s_0^\beta = s_*$, 
the network follows a second sequence of states $s_1^\beta, s_2^\beta, s_3^\beta \ldots$
and converges to a new steady state denoted $s_*^\beta$.
\citet{Scellier+Bengio-frontiers2017} have shown that the gradient of the loss ${\mathcal L}^*$ can be estimated based on the two steady states $s_*$ and $s_*^\beta$.
Specifically,
in the limit $\beta \to 0$,
\begin{equation}
 \frac{1}{\beta} \left( \frac{\partial \Phi}{\partial \theta} \left( x, s_*^\beta,\theta \right) -  \frac{\partial \Phi}{\partial \theta} \left( x,s_*,\theta \right) \right) \to - \frac{\partial {\mathcal L}^*}{\partial \theta}.
\label{eq:ep-grad}
\end{equation}

\section{Equilibrium Propagation with Continual Weight Updates (C-EP)}
\label{sec:cep}

This section presents the main theoretical contributions of this paper.
We introduce a new algorithm to optimize ${\mathcal L^*}$ (Eq.~\ref{eq:loss-steady-state}): a new version of EP with continual parameter updates that we call C-EP.
Unlike typical machine learning algorithms (such as BPTT, RBP and EP) in which the weight updates occur after all the other computations in the system are performed, our algorithm offers a mechanism in which the weights are updated continuously as the states of the neurons change.

\begin{figure}[ht!]
\begin{minipage}[c]{.48\textwidth}
\begin{algorithm}[H]{\emph{Input}: $x$, $y$, $\theta$, $\beta$, $\eta$. \\
\emph{Output}: $\theta$.}
\caption{EP}\label{alg:ep}
\begin{algorithmic}[1]
\State $s_0 \gets 0$ \Comment{First Phase}
\Repeat
\State $s_{t+1}\gets \frac{\partial \Phi}{\partial s} \left( x, s_t, \theta \right)$
\Until $s_t = s_*$
\State Store $s_*$
\State $s_0^\beta \gets s_*$ \Comment{Second Phase}
\Repeat
\State $s_{t+1}^\beta \gets \frac{\partial \Phi}{\partial s} \left( x, s_t^\beta, \theta \right) - \beta \frac{\partial \ell}{\partial s} \left( s_t^\beta,y \right)$
\Until $s_t^\beta = s_*^\beta$
\State \Comment{Global Parameter Update}
\State $\theta \gets \theta + \frac{\eta}{\beta} \left( \frac{\partial \Phi}{\partial \theta} \left( s_*^\beta,\theta \right) -  \frac{\partial \Phi}{\partial \theta} \left(s_*,\theta\right) \right)$
\end{algorithmic}
\end{algorithm}
\end{minipage}
\hfill
\begin{minipage}[c]{.50\textwidth}
\begin{algorithm}[H]{\emph{Input}: $x$, $y$, $\theta$, $\beta$, $\eta$. \\
\emph{Output}: $\theta$.}
\caption{C-EP (with simplified notations)}\label{alg:cep}
\begin{algorithmic}[1]
\State $s_0 \gets 0$ \Comment{First Phase}
\Repeat
\State $s_{t+1}\gets \frac{\partial \Phi}{\partial s} \left( x, s_t, \theta \right)$
\Until $s_t = s_*$
\State $s_0^\beta \gets s_*$ \Comment{Second Phase}
\Repeat
\State $s_{t+1}^{\beta}\gets \frac{\partial \Phi}{\partial s} \left( x, s_t^\beta, \theta \right) - \beta \frac{\partial \ell}{\partial s} \left( s_t^\beta,y \right)$
\State \Comment{Parameter Update at Time $t$}
\State $\theta \gets \theta + \frac{\eta}{\beta} \left( \frac{\partial \Phi}{\partial \theta} \left( s_{t+1}^\beta \right) -  \frac{\partial \Phi}{\partial \theta} \left( s_{t}^\beta \right) \right)$
\Until $s_t^\beta$ and $\theta$ are converged.
\end{algorithmic}
\end{algorithm}
\end{minipage}
\caption{\textbf{Left.} Pseudo-code of EP. This is the version of EP for discrete-time dynamics introduced in \citet{ernoult2019updates}.
\textbf{Right.} Pseudo-code of C-EP with simplified notations (see section \ref{subsec:cep-defi} for a formal definition of C-EP).
\textbf{Difference between EP and C-EP.} In EP, one global parameter update is performed at the end of the second phase ; in C-EP, parameter updates are performed throughout the second phase.
Eq.~\ref{eq:telescopic-sum} shows that the continual updates of C-EP add up to the global update of EP.
}
\label{alg:EP-CEP}
\end{figure}

\subsection{From EP to C-EP: An intuition behind continual weight updates}
\label{subsec:cep-intuition}
The key idea to understand how to go from EP to C-EP is that the gradient of EP appearing in Eq.~(\ref{eq:ep-grad}) reads as the following telescopic sum:
\begin{equation}
\label{eq:telescopic-sum}
\underbrace{\frac{1}{\beta} \left( \frac{\partial \Phi}{\partial \theta} \left( x, s_*^\beta,\theta \right) -  \frac{\partial \Phi}{\partial \theta} \left( x,s_*,\theta \right) \right)}_{\text{global parameter gradient in EP}} =
\sum_{t=1}^\infty \underbrace{\frac{1}{\beta} \left( \frac{\partial \Phi}{\partial \theta} \left( x,s_t^\beta,\theta \right) -  \frac{\partial \Phi}{\partial \theta} \left( x,s_{t-1}^\beta,\theta \right) \right)}_{
\text{parameter gradient at time t in C-EP}
}.
\end{equation}
In Eq.~(\ref{eq:telescopic-sum}) we have used that $s_0^\beta = s_*$ and $s_t^\beta \to s_*^\beta$ as $t \to \infty$.
Here lies the very intuition of continual updates motivating this work; instead of keeping the weights fixed throughout the second phase and updating them at the end of the second phase based on the steady states $s_*$ and $s_*^\beta$, as in EP (Alg.~\ref{alg:ep} of Fig.~\ref{alg:EP-CEP}), the idea of the C-EP algorithm is to update the weights at each time $t$ of the second phase between two consecutive states $s_{t-1}^\beta$ and $s_t^\beta$  (Alg.~\ref{alg:cep} of Fig.~\ref{alg:EP-CEP}). One key difference in C-EP compared to EP though, is that, in the second phase,
the weight update at time step $t$ influences the neural states at time step $t+1$ in a nontrivial way,
as illustrated in the computational graph of Fig.~\ref{fig:gdd}.
In the next subsection we define C-EP using notations that explicitly show this dependency.

\subsection{Description of the C-EP algorithm}
\label{subsec:cep-defi}

The first phase of C-EP is the same as that of EP (see Fig.~\ref{alg:EP-CEP}).
In the second phase of C-EP the parameter variable is regarded as another dynamic variable $\theta_t$ that evolves with time $t$ along with $s_t$.
The dynamics of $s_t$ and $\theta_t$ in the second phase of C-EP depend on the values of the two hyperparameters $\beta$ (the hyperparameter of influence) and $\eta$ (the learning rate), therefore we write $s_t^{\beta,\eta}$ and $\theta_t^{\beta,\eta}$ to show explicitly this dependence.
With now both the neurons and the synapses evolving in the second phase, the dynamic variables $s_t^{\beta,\eta}$ and $\theta_t^{\beta,\eta}$ start from $s_0^{\beta,\eta} = s_*$ and $\theta_0^{\beta,\eta}=\theta$ and follow:

\begin{align}
\forall t \geq 0: \qquad 
\left\{
\begin{array}{l}
 \displaystyle s_{t+1}^{\beta,\eta} = \frac{\partial \Phi}{\partial s}\left(x, s_t^{\beta, \eta}, \theta_t^{\beta, \eta}\right) - \beta \; \frac{\partial \ell}{\partial s} \left( s_t^{\beta,\eta},y \right), \\
 \displaystyle \theta_{t+1}^{\beta,\eta} = \theta_t^{\beta,\eta} + \frac{\eta}{\beta} \left( \frac{\partial \Phi}{\partial \theta} \left( x,s_{t+1}^{\beta,\eta},\theta_t^{\beta,\eta} \right) -  \frac{\partial \Phi}{\partial \theta} \left( x,s_t^{\beta,\eta},\theta_t^{\beta,\eta} \right) \right).
\end{array}
\right.
 \label{eq:C-EP}
\end{align}

The difference in C-EP compared to EP is that the value of the parameter used to update $s_{t+1}^{\beta,\eta}$ in Eq.~(\ref{eq:C-EP}) is the current $\theta_t^{\beta,\eta}$, not $\theta$.
Provided the learning rate $\eta$ is small enough, i.e. the synapses are slow compared to the neurons, this effect is weak. 
Intuitively, in the limit $\eta \to 0$, the parameter changes are negligible so that $\theta_t^{\beta,\eta}$ can be approximated by its initial value $\theta_0^{\beta,\eta} = \theta$.
Under this approximation, the dynamics of $s_t^{\beta,\eta}$ in C-EP and the dynamics of $s_t^\beta$ in EP are the same. See Fig.~\ref{fig:toy-model} for a simple example, and Appendix \ref{sec:cep-ep} for a proof in the general case.

\begin{figure}[ht!]
\begin{center}
   \fbox{\includegraphics[width=0.98\textwidth]{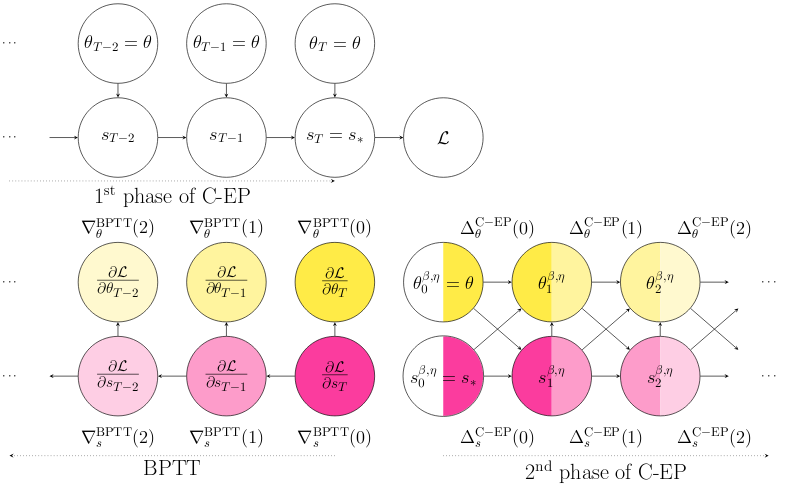}}
\end{center}
  \caption{\textbf{Gradient-Descending Dynamics (GDD, Theorem \ref{thm:gdd}).}
  In the second phase of Continual Equilibrium Prop (C-EP), the dynamics of neurons and synapses descend the gradients of BPTT, i.e. $\Delta^{\rm C-EP}(t) = -\nabla^{\rm BPTT}(t)$.
  The colors illustrate when corresponding computations are realized in C-EP and BPTT.
  \textbf{Top left.} 1\textsuperscript{st} phase of C-EP with static input $x$ and target $y$. The final state $s_T$ is the steady state $s_*$. \textbf{Bottom left.} Backprop through time (BPTT). \textbf{Bottom right.} 2\textsuperscript{nd} phase of C-EP. The starting state $s_0^{\beta,\eta}$ is the final state of the forward-time pass, i.e. the steady state $s_*$.}
  \label{fig:gdd}
\end{figure}

\subsection{Gradient-Descending Dynamics (GDD)}
\label{subsec:cep-thm}
Now we prove that, provided the hyperparameter $\beta$ and the learning rate $\eta$ are small enough,
the dynamics of the neurons and the weights given by Eq.~(\ref{eq:C-EP}) follow the gradients of BPTT (Theorem \ref{thm:gdd} and Fig.~\ref{fig:gdd}).
For a formal statement of this property, we {\em define} the {\em normalized (continual) updates} of C-EP, as well as the gradients of the loss ${\mathcal L} = \ell \left( s_T, y \right)$ after $T$ time steps, computed with BPTT:

\begin{equation}
\left\{
\begin{array}{l}
    \displaystyle \Delta_s^{\rm C-EP}(\beta,\eta,t) = \frac{1}{\beta} \left( s_{t+1}^{\beta,\eta}-s_t^{\beta,\eta} \right), \\
   \displaystyle \Delta_\theta^{\rm C-EP}(\beta,\eta,t) = \frac{1}{\eta} \left( \theta_{t+1}^{\beta,\eta}-\theta_t^{\beta,\eta} \right),
\end{array}
\right.
\qquad
\left\{
\begin{array}{l}
   \displaystyle \nab{s}(t) = \frac{\partial {\mathcal L}}{\partial s_{T-t}},\\
 \displaystyle \nab{\theta}(t)  = \frac{\partial {\mathcal L}}{\partial \theta_{T-t}}.
\end{array}
\right.
\label{eq:updates-cep}
\end{equation}

More details about ${\mathcal L}$ and the gradients $\nab{s}(t)$ and $\nab{\theta}(t)$ are provided in Appendix \ref{sec:bptt-rbp}.
Note that injecting Eq.~(\ref{eq:C-EP}) in Eq.~(\ref{eq:updates-cep}), the normalized updates of C-EP consequently read: 
\begin{equation}
\Delta_{\theta}^{\rm C-EP}(\beta, \eta, t) = \frac{1}{\beta}\left(\frac{\partial\Phi}{\partial \theta} \left( x, s_{t+1}^{\beta, \eta}, \theta_t^{\beta, \eta} \right) - \frac{\partial\Phi}{\partial \theta} \left( x, s_{t}^{\beta, \eta}, \theta_t^{\beta, \eta} \right) \right),
\end{equation}
which corresponds to the parameter gradient at time $t$, defined informally in Eq.~(\ref{eq:telescopic-sum}).
The following result makes this statement more formal.

\begin{restatable}[GDD Property]{thm}{main}
\label{thm:gdd}
Let $s_0, s_1, \ldots, s_T$ be the convergent sequence of states and denote $s_* = s_T$ the steady state.
Further assume that there exists some step $K$ where $0 < K \leq T$ such that $s_* = s_T = s_{T-1} = \ldots s_{T-K}$.
Then, in the limit $\eta \to 0$ and $\beta \to 0$, the first $K$ normalized updates in the second phase of C-EP are equal to the negatives of the first $K$ gradients of BPTT, i.e.
\begin{equation}
\forall t=0,1,\ldots,K: \quad
\left\{
\begin{array}{ll}
   \displaystyle \lim_{\beta \to 0} \; \lim_{\eta \to 0} \; \Delta^{\rm C-EP}_{s}(\beta,\eta,t) = - \nab{s}(t),\\
 \displaystyle \lim_{\beta \to 0} \; \lim_{\eta \to 0} \; \Delta^{\rm C-EP}_{\theta}(\beta,\eta,t) = - \nab{\theta}(t).
\end{array}
\right.
\label{eq:thm-main}
\end{equation}
\end{restatable}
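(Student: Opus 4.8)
The plan is to establish the two identities in Eq.~\eqref{eq:thm-main} by removing the limits in the order in which they are written --- first $\eta\to 0$ at fixed $\beta>0$, then $\beta\to 0$ --- thereby reducing the statement to the Gradient-Descending-Dynamics property of ordinary EP proved in \citet{ernoult2019updates}. Throughout I would use smoothness of $\Phi$ and $\ell$ (implicit in the hypotheses) and restrict attention to the finitely many steps $t=0,1,\ldots,K$.

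\emph{Step 1 (letting $\eta\to 0$ collapses C-EP onto EP).} The second-phase trajectory $(s_t^{\beta,\eta},\theta_t^{\beta,\eta})$ is generated by an explicit iteration started at the $\eta$-independent point $(s_*,\theta)$, whose update map is jointly smooth in $(s,\theta)$ and whose sole $\eta$-dependence is the scalar factor $\eta/\beta$ in the $\theta$-increment. An induction on $t$ would show $s_t^{\beta,\eta}\to s_t^{\beta}$ and $\theta_t^{\beta,\eta}\to\theta$ as $\eta\to 0$: given the inductive hypothesis the bracket $\frac{\partial\Phi}{\partial\theta}(x,s_{t+1}^{\beta,\eta},\theta_t^{\beta,\eta})-\frac{\partial\Phi}{\partial\theta}(x,s_t^{\beta,\eta},\theta_t^{\beta,\eta})$ stays bounded so the $\theta$-increment vanishes, and continuity of $\frac{\partial\Phi}{\partial s}$ and $\frac{\partial\ell}{\partial s}$ then gives $s_{t+1}^{\beta,\eta}\to s_{t+1}^{\beta}$, where $s^{\beta}$ is the ordinary-EP second phase $s_{t+1}^{\beta}=\frac{\partial\Phi}{\partial s}(x,s_t^{\beta},\theta)-\beta\frac{\partial\ell}{\partial s}(s_t^{\beta},y)$ with $s_0^{\beta}=s_*$. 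Since $\Delta^{\rm C-EP}_{\theta}(\beta,\eta,t)=\frac1\beta\bigl(\frac{\partial\Phi}{\partial\theta}(x,s_{t+1}^{\beta,\eta},\theta_t^{\beta,\eta})-\frac{\partial\Phi}{\partial\theta}(x,s_t^{\beta,\eta},\theta_t^{\beta,\eta})\bigr)$ has no $1/\eta$ singularity, passing to the limit yields $\lim_{\eta\to0}\Delta^{\rm C-EP}_{s}(\beta,\eta,t)=\frac1\beta(s_{t+1}^{\beta}-s_t^{\beta})$ and $\lim_{\eta\to0}\Delta^{\rm C-EP}_{\theta}(\beta,\eta,t)=\frac1\beta\bigl(\frac{\partial\Phi}{\partial\theta}(x,s_{t+1}^{\beta},\theta)-\frac{\partial\Phi}{\partial\theta}(x,s_t^{\beta},\theta)\bigr)$, i.e. the normalized updates of ordinary EP. (This is the content of Appendix~\ref{sec:cep-ep}.)

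\emph{Step 2 (letting $\beta\to 0$: EP descends the BPTT gradients).} Since $s_0^{\beta}=s_*$ is independent of $\beta$ and the EP second-phase map is jointly smooth in $(s,\beta)$, induction gives a first-order expansion $s_t^{\beta}=s_*+\beta\,\delta_t+o(\beta)$ for $t\le K$, with $\delta_0=0$. Substituting into the EP recursion and using $\frac{\partial\Phi}{\partial s}(x,s_*,\theta)=s_*$ produces the linear recursion $\delta_{t+1}=J\,\delta_t-\frac{\partial\ell}{\partial s}(s_*,y)$, with $J:=\frac{\partial^2\Phi}{\partial s^2}(x,s_*,\theta)=\frac{\partial F}{\partial s}(x,s_*,\theta)$; hence $\lim_{\beta\to0}\frac1\beta(s_{t+1}^{\beta}-s_t^{\beta})=\delta_{t+1}-\delta_t=-J^{t}\frac{\partial\ell}{\partial s}(s_*,y)$. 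On the BPTT side, the hypothesis $s_*=s_T=\cdots=s_{T-K}$ makes the backward recursion $\frac{\partial\mathcal L}{\partial s_{T-t}}=\bigl(\frac{\partial F}{\partial s}(x,s_{T-t},\theta)\bigr)^{\!\top}\frac{\partial\mathcal L}{\partial s_{T-t+1}}$ have the constant coefficient $J^{\top}=J$ for $t=1,\ldots,K$ --- the symmetry of $J$ being the crucial point, and exactly where $F=\partial\Phi/\partial s$ enters --- so with $\frac{\partial\mathcal L}{\partial s_T}=\frac{\partial\ell}{\partial s}(s_*,y)$ we get $\nab{s}(t)=J^{t}\frac{\partial\ell}{\partial s}(s_*,y)$, and therefore $\lim_{\beta\to0}\lim_{\eta\to0}\Delta^{\rm C-EP}_{s}(\beta,\eta,t)=-\nab{s}(t)$. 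For the parameter identity, a first-order expansion of $\frac{\partial\Phi}{\partial\theta}(x,s_t^{\beta},\theta)$ has linear-term coefficient $\frac{\partial^2\Phi}{\partial s\,\partial\theta}(x,s_*,\theta)=\bigl(\frac{\partial F}{\partial\theta}(x,s_*,\theta)\bigr)^{\!\top}$, so the corresponding limit equals $\bigl(\frac{\partial F}{\partial\theta}(x,s_*,\theta)\bigr)^{\!\top}(\delta_{t+1}-\delta_t)=-\bigl(\frac{\partial F}{\partial\theta}(x,s_*,\theta)\bigr)^{\!\top}J^{t}\frac{\partial\ell}{\partial s}(s_*,y)$, which is $-\nab{\theta}(t)$ once the BPTT parameter gradient is written as $\nab{\theta}(t)=\bigl(\frac{\partial F}{\partial\theta}(x,s_*,\theta)\bigr)^{\!\top}\nab{s}(t)$ (again using that the last states equal $s_*$). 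Equivalently, Step~2 is Theorem~1 of \citet{ernoult2019updates} applied to the EP trajectory produced in Step~1, and can simply be invoked.

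\emph{Main obstacle.} The difficulty is bookkeeping rather than conceptual depth. One must (i) justify the two first-order expansions --- in $\eta$ near $0$ at fixed $\beta$, then in $\beta$ near $0$ --- which is routine given smoothness and finitely many steps but has to be stated carefully; and (ii) check that signs, transposes and time-index offsets line up so that the \emph{forward} linear recursion for $\delta_t$ in the second phase is term-by-term the negative of the \emph{backward} recursion of BPTT. The latter rests on the two facts isolated above: symmetry of the Hessian $\frac{\partial^2\Phi}{\partial s^2}$ (so that the transposed backward Jacobian coincides with the forward one) and the assumption that the last $K+1$ states of phase~1 sit exactly at $s_*$ (so the backward recursion has constant coefficients over the relevant window). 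Getting the $t=K$ boundary case of the parameter identity to come out cleanly is the one place where the index conventions must be tracked with the most care.
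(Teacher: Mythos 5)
Your proposal is correct, and its two-limit structure ($\eta\to 0$ first at fixed $\beta$, then $\beta\to 0$) is exactly the order the paper uses. Your Step 1 is essentially the paper's Lemma \ref{lma:ep-cep} with the same argument: continuity of the iterates in $\eta$ over finitely many steps, the $\theta$-increment vanishing so that $\theta_t^{\beta,0}=\theta$ and $s_t^{\beta,0}=s_t^\beta$, and the observation that $\Delta_\theta^{\rm C-EP}$ carries no $1/\eta$ singularity. Where you diverge is Step 2: the paper factors the remaining limit through Recurrent Backpropagation, splitting it into Lemma \ref{lma:ep-rbp} (EP $\to$ RBP as $\beta\to 0$, outsourced to \citet{scellier2019equivalence}) and Lemma \ref{lma:bptt-rbp} (RBP $=$ BPTT on the first $K$ steps, which is where the steady-state window $s_{T-K}=\cdots=s_T=s_*$ enters), whereas you perform the first-order expansion $s_t^\beta=s_*+\beta\delta_t+o(\beta)$ directly and match the forward recursion $\delta_{t+1}-\delta_t=-J^t\,\frac{\partial\ell}{\partial s}(s_*,y)$ against the BPTT backward recursion, using the symmetry of $J=\frac{\partial^2\Phi}{\partial s^2}$. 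This is precisely the ``direct proof of the equivalence of EP and BPTT'' of \citet{ernoult2019updates} that the paper mentions as an alternative to its RBP detour; your version is more self-contained (no appeal to the real-time result of \citet{scellier2019equivalence}), while the paper's buys the four-way equivalence C-EP/EP/RBP/BPTT as a by-product. One remark on the bookkeeping you rightly flag as the delicate point: your convention $\nab{\theta}(t)=\frac{\partial F}{\partial\theta}(x,s_*,\theta)^\top\nab{s}(t)$ is the one consistent with the paper's stated parameterization $s_{t+1}=F(x,s_t,\theta_{t+1})$ and with its analytically solved toy model, and it is the convention under which Eq.~(\ref{eq:thm-main}) holds; note that it differs by one application of $J$ from the recursion literally written in Proposition \ref{prop:bptt}, an index offset internal to the paper rather than an error in your argument.
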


Note the following two remarks:

\begin{itemize}
\item Fig.~\ref{fig:toy-model} illustrates Theorem~\ref{thm:gdd} with a simple dynamical system for which the normalized updates $\Delta^{\rm C-EP}$ and the gradients $\nabla^{\rm BPTT}$ are analytically tractable.
\item Theorem~\ref{thm:gdd} rewrites $s_{t+1}^{\beta,\eta} \approx s_t^{\beta,\eta} - \beta \frac{\partial {\mathcal L}}{\partial s_{T-t}}$ and $\theta_{t+1}^{\beta,\eta} \approx \theta_t^{\beta,\eta} - \eta \frac{\partial {\mathcal L}}{\partial \theta_{T-t}}$,
showing that in the second phase of C-EP, neurons and synapses descend the gradients of the loss ${\mathcal L}$ obtained with BPTT,
with the hyperparameters $\beta$ and $\eta$ playing the role of learning rates for $s_t^{\beta,\eta}$ and $\theta_t^{\beta,\eta}$, respectively.
Theorem~\ref{thm:gdd} holds in the limit $\beta \to 0$, $\eta \to 0$,
which means that $\beta$ and $\eta$ have to be small enough for the neurons and synapses to compute approximately the gradient of ${\mathcal L}^*$.
On the other hand $\beta$ and $\eta$ also have to be large enough so that an error signal can be transmitted in the second phase and that
optimization of the loss happens within a reasonable number of epochs.
This trade-off is well reflected by the table of hyperparameters of Table~\ref{table-hyp-training} in Appendix~\ref{subsec:training-exp}.
\textit{In particular, the values $\beta = 0$ (there is no second phase) and (or) $\eta = 0$ (there is no learning) are excluded.}
\end{itemize}

\begin{figure}[ht!]
\begin{center}
   \fbox{\includegraphics[width=0.98\textwidth]{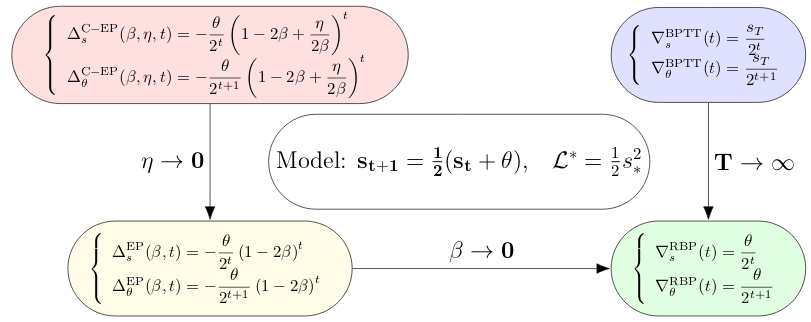}}
\end{center}
  \caption{Illustration of Theorem \ref{thm:gdd} on a simple model. The variables $s$ and $\theta$ are scalars, the first phase equation is $s_{t+1} = \frac{1}{2}(s_t + \theta)$, the steady state is denoted $s_*$ and the loss is ${\mathcal L}^* = \frac{1}{2}s_*^2$.
  For completeness, we also include the corresponding gradients of Recurrent Backpropagation (RBP) and the normalized updates of EP, denoted $\nabla^{\rm RBP}$ and $\Delta^{\rm EP}$ respectively. The equivalence between C-EP, EP, RBP and BPTT holds in the general setting: see Appendix~\ref{sec:proof} for a thorough study of their relationship.
  }
  \label{fig:toy-model}
\end{figure}

\section{Numerical Experiments}
\label{sec:exp}
In this section, we validate our continual version of Equilibrium Propagation against training on the MNIST data set with two models. The first model is a vanilla RNN with tied and symmetric weights: the dynamics of this model approximately derive from a primitive function, which allows training with C-EP. The second model is a Discrete-Time RNN with untied and asymmetric weights, which is therefore closer to biology. We train this second model with a modified version of C-EP which we call C-VF (\emph{Continual Vector Field}) as it is inspired from the algorithm with Vector-Field dynamics of \citet{scellier2018generalization}.
\citet{ernoult2019updates} showed with simulations the intuitive result that, if a model is such that the normalized updates of EP `match' the gradients of BPTT (i.e. if they are approximately equal), then the model trained with EP performs as well as the model trained with BPTT. Along the same lines, we show in this work that the more the EP normalized updates follow the gradients of BPTT \emph{before training}, the best is the resulting training performance. We choose to implement C-EP and C-VF on vanilla RNNs to accelerate simulations \citep{ernoult2019updates}.

\subsection{Models: Vanilla RNNs with symmetric and asymmetric weights}
\label{subsec:main-models}

\paragraph{Vanilla RNN with symmetric weights trained by C-EP.}
The first phase dynamics is defined as:
\begin{equation}
    \label{eq:cep-discrete}
    s_{t+1} = \sigma \left( W \cdot s_t + W_x \cdot x \right),
\end{equation}
where $\sigma$ is an activation function, $W$ is a symmetric weight matrix, and $W_x$ is a matrix connecting $x$ to $s$.
Although the dynamics are not directly defined in terms of a primitive function, note that $s_{t+1} \approx \frac{\partial \Phi}{\partial s} \left( s_t,W \right)$ with $\Phi(s,W) = \frac{1}{2} s^\top \cdot W \cdot s$ if we ignore the activation function $\sigma$.
Following Eq.~(\ref{eq:C-EP}) and Eq.~(\ref{eq:updates-cep}), we define the normalized updates of this model as:
\begin{align}
    \Delta_s^{\rm C-EP}(\beta,\eta,t)& = \frac{1}{\beta} \left( s_{t+1}^{\beta,\eta} - s_t^{\beta,\eta} \right), \quad \Delta_W^{\rm C-EP}(\beta,\eta,t) = \frac{1}{\beta} \left( s_{t+1}^{{\beta,\eta}^\top} \cdot s_{t+1}^{\beta,\eta} - s_t^{{\beta,\eta}^\top} \cdot s_t^{\beta,\eta} \right),\nonumber\\
    \Delta_{W_x}^{\rm C-EP}(\beta,\eta,t) &= \frac{1}{\beta} x^\top \cdot \left( s_{t+1}^{\beta,\eta} - s_t^{\beta,\eta} \right).
    \label{eq:cep-discrete-update}
\end{align}

Note that this model applies to any topology as long as existing connections have symmetric values: this includes deep networks with any number of layers -- see Appendix~\ref{subsec:models} for detailed descriptions of the models used in the experiments. More explicitly, for a network whose layers of neurons are $s^0$, $s^1$, ..., $s^N$, with $W_{n,n+1}$ connecting the layers $s^{n+1}$ and $s^n$ in both directions, the corresponding primitive function is $\Phi = \sum_n (s^{n})^\top\cdot W_{n,n+1}\cdot s_{n+1} + s^{N^\top}\cdot W_x\cdot x$ - see Appendix~\ref{subsec:EP-disc} for more details.

\paragraph{Vanilla RNN with asymmetric weights trained by C-VF.}
In this model, the dynamics in the first phase is the same as Eq.~(\ref{eq:cep-discrete}) but now the weight matrix $W$ is no longer assumed to be symmetric, i.e. the reciprocal connections between neurons are not constrained.
In this setting the weight dynamics in the second phase is replaced by a version for  asymmetric weights: $W^{\beta,\eta}_{t+1} = W^{\beta,\eta}_t + \frac{\eta}{\beta} s_t^{{\beta,\eta}^\top} \cdot \left( s_{t+1}^{\beta,\eta} - s_t^{\beta,\eta} \right)$, so that the normalized updates are equal to:
\begin{align}
    \Delta_s^{\rm C-VF}(\beta,\eta,t) &= \frac{1}{\beta} \left( s_{t+1}^{\beta,\eta} - s_t^{\beta,\eta} \right), \qquad \Delta_W^{\rm C-VF}(\beta,\eta,t) = \frac{1}{\beta} s_t^{{\beta,\eta}^\top} \cdot \left( s_{t+1}^{\beta,\eta} - s_t^{\beta,\eta} \right), \nonumber \\
    \Delta_{W_x}^{\rm C-VF}(\beta,\eta,t) &= \frac{1}{\beta} x^\top \cdot \left( s_{t+1}^{\beta,\eta} - s_t^{\beta,\eta} \right).    
        \label{eq:cvf-discrete-update}
\end{align}
Like the previous model,
the vanilla RNN with asymmetric weights also applies to deep networks with any number of layers. Although in C-VF the dynamics of the weights is not one of the form of Eq.~(\ref{eq:C-EP}) that derives from a primitive function,
the (bioplausible) normalized weight updates of Eq.~(\ref{eq:cvf-discrete-update}) can approximately follow the gradients of BPTT, provided that the values of reciprocal connections are not too dissimilar: this is illustrated in Fig.~\ref{fig:ep-to-cvf}
(as well as in Fig.~\ref{gdu:vf-cvf-disc-s} and Fig.~\ref{gdu:vf-cvf-disc-w} of Appendix~\ref{subsec:figs})
and proved in Appendix~\ref{subsec:gen-gdd}.
This property motivates the following training experiments.

\begin{figure}[ht!]
\begin{minipage}[c]{.42\textwidth}
\begin{tabular}{@{}lcccccc@{}} \toprule
{}&\multicolumn{2}{c}{Error ($\%$) }&{T}&{K}&{Epochs}\\
\cmidrule(r){2-3}
{} & Test & Train & {} & {} & {} \\
\midrule
    {EP-1h}& $2.00 \pm 0.13$ & $(0.20)$ & 30 & 10  &30\\ 
    {EP-2h}& $1.95 \pm 0.10$ & $(0.14)$ & 100 & 20  & 50\\
    \midrule
      {C-EP-1h}& $\mathbf{2.28 \pm 0.16 }$ & $(0.41) $ & 40 & 15  & 100\\
    {C-EP-2h}& $\mathbf{2.44 \pm 0.14 }$ & $(0.31) $ & 100 & 20 & 150\\
    \midrule
      {C-VF-1h}& $\mathbf{2.43 \pm 0.08 }$ & $(0.77) $ & 40 & 15  & 100\\
    {C-VF-2h}& $\mathbf{2.97 \pm 0.19 }$ & $(1.58) $ & 100 & 20 & 150\\        
\bottomrule
\end{tabular}
\end{minipage}
\hfill
\begin{minipage}[c]{.4\textwidth}
\includegraphics[width=\textwidth]{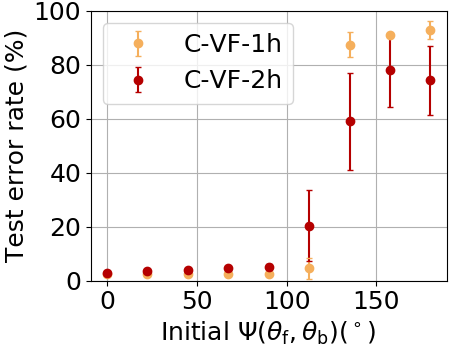}
\end{minipage}
    \caption{\label{table:results}
    {\bfseries Left:} Training results on MNIST with EP, C-EP and C-VF.  "$\#$h" stands for the number of hidden layers. We indicate over 5 trials the mean and standard deviation for the test error (mean train error in parenthesis).
    $T$ (resp. $K$) is the number of iterations in the 1\textsuperscript{st} (resp. 2\textsuperscript{nd}) phase. For C-VF results, the initial angle between forward ($\theta_{\rm f}$) and backward ($\theta_{\rm b}$) weights is $\Psi(\theta_{\rm f}, \theta_{\rm b}) = 0^\circ$. {\bfseries Right:} Test error rate on MNIST achieved by C-VF as a function of the initial $\Psi(\theta_{\rm f}, \theta_{\rm b})$.
    }
\end{figure}

\subsection{C-EP training experiments}
\label{subsec:training}
Experiments are performed with multi-layered vanilla RNNs (with symmetric weights) on MNIST.
The table of Fig.~\ref{table:results} presents the results obtained
with C-EP training benchmarked against standard EP training \citep{ernoult2019updates} - see Appendix~\ref{subsec:models} for model details and Appendix~\ref{subsec:training-exp} for training conditions.
Although the test error of C-EP approaches that of EP, we observe a degradation in accuracy.
This is because although Theorem~\ref{thm:gdd} guarantees Gradient Descending Dynamics (GDD) in the limit of infinitely small learning rates, in practice we have to strike a balance between having a learning rate that is small enough to ensure this condition but not too small to observe convergence within a reasonable number of epochs. As seen on Fig.~\ref{fig:ep-to-cvf}~(b), the finite learning rate $\eta$ of continual updates leads to $\Delta^{\rm C-EP}(\beta,\eta,t)$ curves splitting apart from the $-\nab{}(t)$ curves. As seen per Fig.~\ref{fig:ep-to-cvf}~(a), this effect is emphasized with the depth: before training, angles between the normalized updates of C-EP and the gradients of BPTT reach 50 degrees for two hidden layers. The deeper the network, the more difficult it is for the C-EP dynamics to follow the gradients provided by BPTT. As an evidence, we show in Appendix~\ref{subsec:debug} that when we use extremely small learning rates throughout the second phase ($\theta \gets \theta + \eta_{\rm tiny}\Delta^{\rm C-EP}_{\theta}$) and rescale up the resulting total weight update ($\theta \gets \theta - \Delta\theta_{\rm tot} + \frac{\eta}{\eta_{\rm tiny}}\Delta\theta_{\rm tot}$), we recover standard EP results.

\subsection{Continual Vector Field (C-VF) training experiments}
\label{subsec:vf}

Depending on whether the updates occur continuously during the second phase and the system obey general dynamics with untied forward and backward weights, we can span a large range of deviations from the ideal conditions of Theorem~\ref{thm:gdd}. Fig.~\ref{fig:ep-to-cvf}~(b) qualitatively depicts these deviations with a model for which the normalized updates of EP match the gradients of BPTT (EP) ; with continual weight updates, the normalized updates and gradients start splitting apart (C-EP), and even more so if the weights are untied (C-VF).

\paragraph{Protocol.}

In order to create these deviations from Theorem~\ref{thm:gdd} and study the consequences in terms of training, we proceed as follows. 
For each C-VF simulations, we tune the initial angle between forward weights ($\theta_{\rm f}$) and backward weights ($\theta_{\rm b}$) between 0 and 180$^\circ$. We denote this angle $\Psi(\theta_{\rm f}, \theta_{\rm b})$ - see Appendix~\ref{subsec:training-exp} for the angle definition and the angle tuning technique employed.
For each of these weight initialization, we compute the angle between the \emph{total} normalized update provided by C-VF, i.e. $\Delta^{\rm C-VF}(\beta, \eta,\textrm{tot}) = \sum_{t=0}^{K - 1}\Delta^{\rm C-VF}(\beta, \eta, t)$ and the \emph{total} gradient provided by BPTT, i.e. $\nabla^{\rm BPTT}(\textrm{tot}) = \sum_{t = 0}^{K - 1}\nabla^{\rm BPTT}(t)$ on random mini-batches \emph{before training}.
We denote this angle $\Psi \left( \Delta^{\rm C-VF}(\textrm{tot}),-\nabla^{\rm BPTT}(\textrm{tot}) \right)$. Finally for each weight initialization, we perform training  in the discrete-time setting of \citet{ernoult2019updates} - see Appendix~\ref{subsec:C-VF-disc} for model details. We proceed in the same way for EP and C-EP simulations, computing $\Psi \left( \Delta^{\rm EP}(\textrm{tot}),-\nabla^{\rm BPTT}(\textrm{tot}) \right)$ and $\Psi \left( \Delta^{\rm C-EP}(\textrm{tot}),-\nabla^{\rm BPTT}(\textrm{tot}) \right)$ before training. We use the generic notation $\Delta(\textrm{tot})$ to denote the total normalized update. This procedure yields $(x,y)$ data points with $x = \Psi \left(\Delta(\textrm{tot}),-\nabla^{\rm BPTT}(\textrm{tot}) \right)$ and $y = \text{test error}$, which are reported on Fig.~\ref{fig:ep-to-cvf}~(a) - see Appendix~\ref{subsec:training-exp} for the full table of results.

\paragraph{Results.} 
Fig.~\ref{fig:ep-to-cvf}~(a) shows the test error achieved on MNIST by EP, C-EP on the vanilla RNN with symmetric weights and C-VF on the vanilla RNN with asymmetric weights for different number of hidden layers as a function of the angle $\Psi \left(\Delta(\textrm{tot}),-\nabla^{\rm BPTT}(\textrm{tot}) \right)$ before training. This graphical representation spreads the algorithms between EP which best satisfies the GDD property (leftmost point in green at $\sim 20^\circ$) to C-VF which satisfies the less the GDD property (rightmost points in red and orange at $\sim 100^\circ$). As expected, high angles between gradients of C-VF and BPTT lead to high error rates that can reach $90\%$ for $\Psi \left( \Delta^{\rm C-VF}(\textrm{tot}), -\nabla^{\rm BPTT}(\textrm{tot}) \right)$ over $100^\circ$. 
More precisely, the inset of Fig.~\ref{fig:ep-to-cvf} shows the same data but focusing only on results generated by initial weight angles lying below $90^\circ$, i.e. $\Psi(\theta_{\rm f}, \theta_{\rm b}) = \{0 ^\circ, 22.5^\circ, 45^\circ, 67.5^\circ, 90^\circ\}$. From standard EP with one hidden layer to C-VF with two hidden layers, the test error increases monotonically with $\Psi \left( \Delta(\textrm{tot}), -\nabla^{\rm BPTT}(\textrm{tot}) \right)$ but does not exceed $5.05\%$ on average. This result confirms the importance of proper weight initialization when weights are untied, also discussed in other context \citep{lillicrap2016random}. When the initial weight angle is of $0^\circ$, the impact of untying the weights on classification accuracy remains constrained, as shown in table of Fig.~\ref{table:results}. Upon untying the forward and backward weights, the test error increases by $\sim 0.2 \%$ with one hidden layer and by $\sim 0.5 \%$ with two hidden layers compared to standard C-EP.

\begin{figure}[ht!]
\begin{center}
   \includegraphics[width=\textwidth]{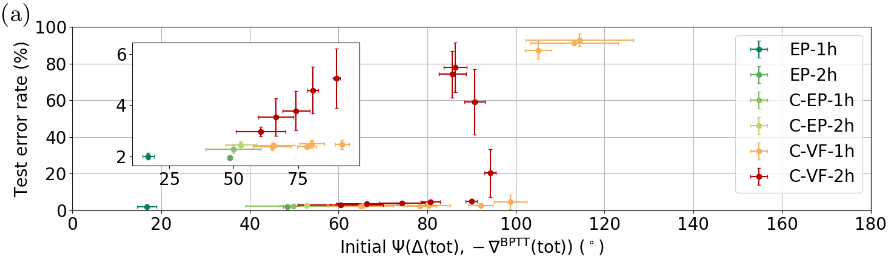}
   \includegraphics[width=\textwidth]{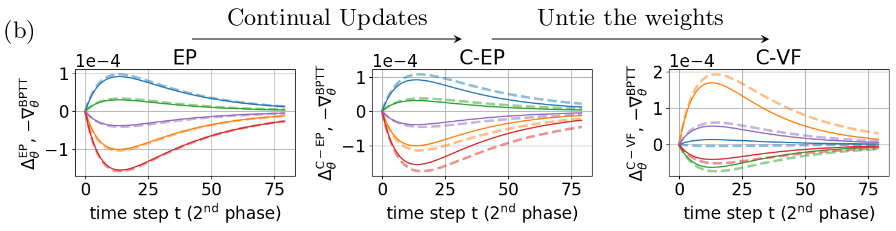}
\end{center}
  \caption{\textbf{Three versions of EP:} standard Equilibrium Propagation (EP), Continual Equilibrium Propagation (C-EP) and Continual Vector Field EP (C-VF). $\#$-h denotes the number of hidden layers.
  {\bfseries (a)}: test error rate on MNIST as a function of the initial angle $\Psi$ between the total normalized update of EP and the total gradient of BPTT.
  {\bfseries (b)}: Dashed and continuous lines respectively represent the normalized updates $\Delta_{\theta}(t)$ (i.e. $\Delta_{\theta}^{\rm EP}(t)$, $\Delta_{\theta}^{\rm C-EP}(t)$, $\Delta_{\theta}^{\rm C-VF}(t)$) and the gradients $-\nabla_{\theta}^{\rm BPTT}(t)$.
  Each randomly selected synapse corresponds to one color. While dashed and continuous lines coincide for standard EP, they split apart upon untying the weights and using continual updates. }
  \label{fig:ep-to-cvf}
\end{figure}

\section{Discussion}

Equilibrium Propagation is an algorithm that leverages the dynamical nature of neurons to compute weight gradients through the physics of the neural network. C-EP
embraces simultaneous synapse and neuron dynamics, getting rid of the need for artificial memory units for storing the neuron values between different phases. The C-EP framework preserves the equivalence with Backpropagation Through Time: in the limit of sufficiently slow synaptic dynamics (i.e. small learning rates), the system satisfies Gradient Descending Dynamics (Theorem~\ref{thm:gdd}).
Our experimental results confirm this theorem. When training our vanilla RNN with symmetric weights with C-EP while ensuring convergence in 100 epochs, a modest reduction in MNIST accuracy is seen with regards to standard EP. This accuracy reduction can be eliminated by using smaller learning rates and rescaling up the total weight update at the end of the second phase (Appendix~\ref{subsec:debug}). On top of extending the theory of \citet{ernoult2019updates}, Theorem~\ref{thm:gdd} also appears to provide a statistically robust approach for C-EP based learning: 
our experimental results show as in \citet{ernoult2019updates} that, for a given network with specified neuron and synapse dynamics, the more the updates of Equilibrium Propagation follow the gradients provided by Backpropagation Through Time before training (in terms of angle between weight vectors, in this work), the better this network can learn. Specifically, Fig.~\ref{fig:ep-to-cvf}~(a) shows that hyperparameters should be tuned so that before training, C-EP updates stay within $90^\circ$ of the gradients provided by BPTT. In practice, it amounts to tune the degree of symmetry of the dynamics, for instance the angle between forward and backward weights - see Fig.~\ref{table:results}. 

Our C-EP and C-VF algorithms exhibit features reminiscent of biology. C-VF extends C-EP training to RNNs with asymmetric weights between neurons, as is the case in biology. Its learning rule, local in space and time, is furthermore closely acquainted to Spike Timing Dependent Plasticity (STDP), a learning rule widely studied in neuroscience, inferred in vitro and in vivo from neural recordings in the hippocampus \citep{dan2004spike}. In STDP, the synaptic strength is modulated by the relative timings of pre and post synaptic spikes within a precise time window \citep{bi1998synaptic,bi2001synaptic}. Strikingly, the same rule that we use for C-VF learning can approximate STDP correlations in a rate-based formulation, as shown through numerical experiments by \citet{bengio2015stdp}. From this viewpoint,
our work brings EP a step closer to biology.
This being said, C-EP and C-VF do not aim at being end-to-end models of biological learning. EP and its variants are meant to optimize any given loss function, and this loss function could be for supervised learning (as in our experiments)
or for any differentiable loss function in general.
The core motivation of this work is to focus on and propose a fully local implementation of EP, in particular to foster its hardware implementation. When computed on a standard computer, due to the use of small learning rates to mimic analog dynamics within a finite number of epochs, training our models with C-EP and C-VF entail long simulation times. 
With a Titan RTX GPU, training a fully connected architecture on MNIST takes 2 hours 39 mins with 1 hidden layer and 10 hours 49 mins with 2 hidden layers.
On the other hand, C-EP and C-VF might be particularly efficient in terms of speed and energy consumption when operated on neuromorphic hardware that employs analog device physics \citep{ambrogio2018equivalent, romera2018vowel}. 
To this purpose, our work can provide an engineering guidance to map Equilibrium Propagation onto a neuromorphic system.
This is one step towards bridging Equilibrium Propagation with neuromorphic computing and thereby energy efficient hardware implementations of gradient-based learning algorithms.

\bibliographystyle{abbrvnat}
\bibliography{biblio}

\begin{thebibliography}{21}
\providecommand{\natexlab}[1]{#1}
\providecommand{\url}[1]{\texttt{#1}}
\expandafter\ifx\csname urlstyle\endcsname\relax
  \providecommand{\doi}[1]{doi: #1}\else
  \providecommand{\doi}{doi: \begingroup \urlstyle{rm}\Url}\fi

\bibitem[Ackley et~al.(1985)Ackley, Hinton, and Sejnowski]{ackley1985learning}
D.~H. Ackley, G.~E. Hinton, and T.~J. Sejnowski.
\newblock A learning algorithm for boltzmann machines.
\newblock \emph{Cognitive science}, 9\penalty0 (1):\penalty0 147--169, 1985.

\bibitem[Almeida(1987)]{Almeida87}
L.~B. Almeida.
\newblock A learning rule for asynchronous perceptrons with feedback in a
  combinatorial environment.
\newblock volume~2, pages 609--618, San Diego 1987, 1987. IEEE, New York.

\bibitem[Almeida(1990)]{almeida1990learning}
L.~B. Almeida.
\newblock A learning rule for asynchronous perceptrons with feedback in a
  combinatorial environment.
\newblock In \emph{Artificial neural networks: concept learning}, pages
  102--111. 1990.

\bibitem[Ambrogio et~al.(2018)Ambrogio, Narayanan, Tsai, Shelby, Boybat, Nolfo,
  Sidler, Giordano, Bodini, Farinha, et~al.]{ambrogio2018equivalent}
S.~Ambrogio, P.~Narayanan, H.~Tsai, R.~M. Shelby, I.~Boybat, C.~Nolfo,
  S.~Sidler, M.~Giordano, M.~Bodini, N.~C. Farinha, et~al.
\newblock Equivalent-accuracy accelerated neural-network training using
  analogue memory.
\newblock \emph{Nature}, 558\penalty0 (7708):\penalty0 60, 2018.

\bibitem[Bengio et~al.(2015)Bengio, Mesnard, Fischer, Zhang, and
  Wu]{bengio2015stdp}
Y.~Bengio, T.~Mesnard, A.~Fischer, S.~Zhang, and Y.~Wu.
\newblock Stdp as presynaptic activity times rate of change of postsynaptic
  activity.
\newblock \emph{arXiv preprint arXiv:1509.05936}, 2015.

\bibitem[Bi and Poo(1998)]{bi1998synaptic}
G.-q. Bi and M.-m. Poo.
\newblock Synaptic modifications in cultured hippocampal neurons: dependence on
  spike timing, synaptic strength, and postsynaptic cell type.
\newblock \emph{Journal of neuroscience}, 18\penalty0 (24):\penalty0
  10464--10472, 1998.

\bibitem[Bi and Poo(2001)]{bi2001synaptic}
G.-q. Bi and M.-m. Poo.
\newblock Synaptic modification by correlated activity: Hebb's postulate
  revisited.
\newblock \emph{Annual review of neuroscience}, 24\penalty0 (1):\penalty0
  139--166, 2001.

\bibitem[Dan and Poo(2004)]{dan2004spike}
Y.~Dan and M.-m. Poo.
\newblock Spike timing-dependent plasticity of neural circuits.
\newblock \emph{Neuron}, 44\penalty0 (1):\penalty0 23--30, 2004.

\bibitem[Ernoult et~al.(2019)Ernoult, Grollier, Querlioz, Bengio, and
  Scellier]{ernoult2019updates}
M.~Ernoult, J.~Grollier, D.~Querlioz, Y.~Bengio, and B.~Scellier.
\newblock Updates of equilibrium prop match gradients of backprop through time
  in an rnn with static input.
\newblock \emph{arXiv preprint arXiv:1905.13633}, 2019.

\bibitem[Hertz(2018)]{hertz2018introduction}
J.~A. Hertz.
\newblock \emph{Introduction to the theory of neural computation}.
\newblock CRC Press, 2018.

\bibitem[Hinton(2002)]{hinton2002training}
G.~E. Hinton.
\newblock Training products of experts by minimizing contrastive divergence.
\newblock \emph{Neural computation}, 14\penalty0 (8):\penalty0 1771--1800,
  2002.

\bibitem[LeCun et~al.(2015)LeCun, Bengio, and Hinton]{lecun2015deep}
Y.~LeCun, Y.~Bengio, and G.~Hinton.
\newblock Deep learning.
\newblock \emph{Nature}, 521\penalty0 (7553):\penalty0 436, 2015.

\bibitem[Lillicrap et~al.(2016)Lillicrap, Cownden, Tweed, and
  Akerman]{lillicrap2016random}
T.~P. Lillicrap, D.~Cownden, D.~B. Tweed, and C.~J. Akerman.
\newblock Random synaptic feedback weights support error backpropagation for
  deep learning.
\newblock \emph{Nature communications}, 7:\penalty0 13276, 2016.

\bibitem[Movellan(1991)]{movellan1991contrastive}
J.~R. Movellan.
\newblock Contrastive hebbian learning in the continuous hopfield model.
\newblock In \emph{Connectionist models}, pages 10--17. Elsevier, 1991.

\bibitem[Pineda(1987{\natexlab{a}})]{Pineda87}
F.~J. Pineda.
\newblock Generalization of back-propagation to recurrent neural networks.
\newblock 59:\penalty0 2229--2232, 1987{\natexlab{a}}.

\bibitem[Pineda(1987{\natexlab{b}})]{pineda1987generalization}
F.~J. Pineda.
\newblock Generalization of back-propagation to recurrent neural networks.
\newblock \emph{Physical review letters}, 59\penalty0 (19):\penalty0 2229,
  1987{\natexlab{b}}.

\bibitem[Romera et~al.(2018)Romera, Talatchian, Tsunegi, Araujo, Cros,
  Bortolotti, Trastoy, Yakushiji, Fukushima, Kubota, et~al.]{romera2018vowel}
M.~Romera, P.~Talatchian, S.~Tsunegi, F.~A. Araujo, V.~Cros, P.~Bortolotti,
  J.~Trastoy, K.~Yakushiji, A.~Fukushima, H.~Kubota, et~al.
\newblock Vowel recognition with four coupled spin-torque nano-oscillators.
\newblock \emph{Nature}, 563\penalty0 (7730):\penalty0 230, 2018.

\bibitem[Scellier and Bengio(2017)]{Scellier+Bengio-frontiers2017}
B.~Scellier and Y.~Bengio.
\newblock Equilibrium propagation: Bridging the gap between energy-based models
  and backpropagation.
\newblock \emph{Frontiers in computational neuroscience}, 11, 2017.

\bibitem[Scellier and Bengio(2019)]{scellier2019equivalence}
B.~Scellier and Y.~Bengio.
\newblock Equivalence of equilibrium propagation and recurrent backpropagation.
\newblock \emph{Neural computation}, 31\penalty0 (2):\penalty0 312--329, 2019.

\bibitem[Scellier et~al.(2018)Scellier, Goyal, Binas, Mesnard, and
  Bengio]{scellier2018generalization}
B.~Scellier, A.~Goyal, J.~Binas, T.~Mesnard, and Y.~Bengio.
\newblock Generalization of equilibrium propagation to vector field dynamics.
\newblock \emph{arXiv preprint arXiv:1808.04873}, 2018.

\bibitem[Strubell et~al.(2019)Strubell, Ganesh, and
  McCallum]{strubell2019energy}
E.~Strubell, A.~Ganesh, and A.~McCallum.
\newblock Energy and policy considerations for deep learning in nlp.
\newblock \emph{arXiv preprint arXiv:1906.02243}, 2019.

\end{thebibliography}

\newpage
\appendix
\part*{Appendix}

\section{Proof of Theorem \ref{thm:gdd}}
\label{sec:proof}

In this appendix, we prove Theorem \ref{thm:gdd}, which we recall here.

\main*

\subsection{A Spectrum of Four Computationally Equivalent Learning Algorithms}

Proving Theorem \ref{thm:gdd} amounts to prove the equivalence of C-EP and BPTT.
In fact we can prove the equivalence of four algorithms, which all compute the gradient of the loss:
\begin{enumerate}
    \item Backpropagation Through Time (BPTT), presented in Section \ref{sec:bptt},
    \item Recurrent Backpropagation (RBP), presented in Section \ref{sec:rbp},
    \item Equilibrium Propagation (EP), presented in Section \ref{sec:ep},
    \item Equilibrium Propagation with Continual Weight Updates (C-EP), introduced in Section \ref{sec:cep}.
\end{enumerate}

In this spectrum of algorithms, BPTT is the most practical algorithm to date from the point of view of machine learning, but also the less biologically realistic.
In contrast, C-EP is the most realistic in terms of implementation in biological systems, while it is to date the least practical and least efficient for conventional machine learning (computations on standard Von-Neumann hardware are considerably slower due to repeated parameter updates, requiring memory access at each time-step of the second phase).

\subsection{Sketch of the Proof}

Theorem \ref{thm:gdd} can be proved in three phases, using the following three lemmas.

\begin{restatable}[Equivalence of C-EP and EP]{lma}{eqepcep}
\label{lma:ep-cep}
In the limit of small learning rate, i.e. $\eta \to 0$, the (normalized) updates of C-EP are equal to those of EP:
\begin{equation}
\forall t \geq 0:
\left\{
\begin{array}{l}
    \displaystyle \lim_{\eta \to 0 \; (\eta > 0)} \; \Delta_s^{\rm C-EP}(\beta,\eta,t) = \Delta_s^{\rm EP}(\beta,t), \\
   \displaystyle \lim_{\eta \to 0 \; (\eta > 0)} \; \Delta_\theta^{\rm C-EP}(\beta,\eta,t) = \Delta_\theta^{\rm EP}(\beta,t).
\end{array}
\right.
\end{equation}
\end{restatable}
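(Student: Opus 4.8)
The plan is to reduce the lemma to an elementary induction on the time step $t$: I will show that, for every fixed $t\geq 0$, the C-EP trajectory of the second phase converges, as $\eta\to 0$ with $\beta$ held fixed, to the EP trajectory run with the frozen parameter $\theta$, i.e.
\[
\lim_{\eta\to 0}\; s_t^{\beta,\eta} = s_t^{\beta}
\qquad\text{and}\qquad
\lim_{\eta\to 0}\; \theta_t^{\beta,\eta} = \theta ,
\]
where $s_0^{\beta}=s_*$ and $s_{t+1}^{\beta} = \frac{\partial\Phi}{\partial s}\left(x,s_t^{\beta},\theta\right) - \beta\,\frac{\partial\ell}{\partial s}\left(s_t^{\beta},y\right)$ is exactly the second-phase recursion of EP (Alg.~\ref{alg:ep}). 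Once these two limits are in hand, the statement about the normalized updates follows by continuity.

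For the base case $t=0$, both algorithms start the second phase at the first steady state, so $s_0^{\beta,\eta}=s_*=s_0^{\beta}$ and $\theta_0^{\beta,\eta}=\theta$, and there is nothing to prove. For the inductive step, suppose the two limits hold at time $t$. Inserting them into the first line of Eq.~(\ref{eq:C-EP}) and using the continuity of $\frac{\partial\Phi}{\partial s}$ and $\frac{\partial\ell}{\partial s}$ gives $s_{t+1}^{\beta,\eta}\to \frac{\partial\Phi}{\partial s}\left(x,s_t^{\beta},\theta\right)-\beta\,\frac{\partial\ell}{\partial s}\left(s_t^{\beta},y\right)=s_{t+1}^{\beta}$. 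For the parameter, the second line of Eq.~(\ref{eq:C-EP}) reads $\theta_{t+1}^{\beta,\eta}=\theta_t^{\beta,\eta}+\frac{\eta}{\beta}\left(\frac{\partial\Phi}{\partial\theta}\left(x,s_{t+1}^{\beta,\eta},\theta_t^{\beta,\eta}\right)-\frac{\partial\Phi}{\partial\theta}\left(x,s_t^{\beta,\eta},\theta_t^{\beta,\eta}\right)\right)$; by the induction hypothesis and the convergence of $s_{t+1}^{\beta,\eta}$ just established, the bracketed difference converges to the finite quantity $\frac{\partial\Phi}{\partial\theta}\left(x,s_{t+1}^{\beta},\theta\right)-\frac{\partial\Phi}{\partial\theta}\left(x,s_t^{\beta},\theta\right)$, while the prefactor $\eta/\beta$ tends to $0$ because $\beta$ is fixed; hence $\theta_{t+1}^{\beta,\eta}\to\theta$. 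This closes the induction.

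Finally, I substitute these limits into the definitions of Eq.~(\ref{eq:updates-cep}). For the neuron update, $\Delta_s^{\rm C-EP}(\beta,\eta,t)=\frac{1}{\beta}\left(s_{t+1}^{\beta,\eta}-s_t^{\beta,\eta}\right)\to\frac{1}{\beta}\left(s_{t+1}^{\beta}-s_t^{\beta}\right)=\Delta_s^{\rm EP}(\beta,t)$. For the parameter update, recalling that $\Delta_\theta^{\rm C-EP}(\beta,\eta,t)=\frac{1}{\beta}\left(\frac{\partial\Phi}{\partial\theta}\left(x,s_{t+1}^{\beta,\eta},\theta_t^{\beta,\eta}\right)-\frac{\partial\Phi}{\partial\theta}\left(x,s_t^{\beta,\eta},\theta_t^{\beta,\eta}\right)\right)$, one more application of the continuity of $\frac{\partial\Phi}{\partial\theta}$ together with the two limits above gives the limit $\frac{1}{\beta}\left(\frac{\partial\Phi}{\partial\theta}\left(x,s_{t+1}^{\beta},\theta\right)-\frac{\partial\Phi}{\partial\theta}\left(x,s_t^{\beta},\theta\right)\right)=\Delta_\theta^{\rm EP}(\beta,t)$, which is the EP parameter update.

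I do not expect a genuine obstacle here: since $t$ is fixed and the second-phase recursion is a finite composition of continuous maps, no uniformity in $t$ is needed and the whole argument is a routine induction under the standing smoothness hypotheses (continuity of $\frac{\partial\Phi}{\partial s}$, $\frac{\partial\Phi}{\partial\theta}$ and $\frac{\partial\ell}{\partial s}$). The one point to keep flagged is that $\beta$ is held fixed throughout this lemma — it is the complementary lemma identifying the EP normalized updates with the BPTT gradients in the limit $\beta\to 0$ (the discrete-time result of \citet{ernoult2019updates}) that supplies the remaining half, and composing the two chains of limits ($\eta\to 0$ then $\beta\to 0$) yields Theorem~\ref{thm:gdd}.
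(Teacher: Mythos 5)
Your proof is correct and takes essentially the same route as the paper's: the paper phrases the argument as continuity of $s_t^{\beta,\eta}$ and $\theta_t^{\beta,\eta}$ in $\eta$ (a finite composition of continuous maps, for fixed $t$ and $\beta$) followed by evaluation at $\eta=0$, where the recursion visibly reduces to the EP recursion with frozen $\theta$, whereas you unfold the same fact as an explicit induction on $t$. In both arguments the key step for the parameter update is to replace the $\frac{1}{\eta}$ difference quotient by its closed form $\frac{1}{\beta}\bigl(\frac{\partial\Phi}{\partial\theta}(x,s_{t+1}^{\beta,\eta},\theta_t^{\beta,\eta})-\frac{\partial\Phi}{\partial\theta}(x,s_t^{\beta,\eta},\theta_t^{\beta,\eta})\bigr)$ before passing to the limit, which you do correctly.
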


\begin{restatable}[Equivalence of EP and RBP]{lma}{eqeprbp}
    \label{lma:ep-rbp}
    Assume that the transition function derives from a primitive function, i.e. that $F$ is of the form $F \left( x,s,\theta \right) = \frac{\partial \Phi}{\partial s} \left( x,s,\theta \right)$.
    Then, in the limit of small hyperparameter $\beta$, the normalized updates of EP are equal to the gradients of RBP:
	\begin{equation}
    \forall t \geq 0:
    \left\{
    \begin{array}{l}
        \displaystyle \lim_{\beta \to 0 \; (\beta > 0)} \; \Delta^{\rm EP}_s(\beta,t) = - \nabla_s^{\rm RBP}(t), \\
       \displaystyle \lim_{\beta \to 0 \; (\beta > 0)} \; \Delta^{\rm EP}_\theta(\beta,t) = - \nabla_\theta^{\rm RBP}(t).
    \end{array}
    \right.
    \end{equation}
\end{restatable}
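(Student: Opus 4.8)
The plan is to compute the $\beta \to 0$ limit of the EP normalized updates by first-order perturbation theory in the nudging strength $\beta$ (in the spirit of \citet{Scellier+Bengio-frontiers2017} and \citet{ernoult2019updates}), and then to identify the result, term by term, with the RBP iterates; the identification is the only place where the hypothesis $F = \partial\Phi/\partial s$ is actually used. Assume throughout that $\Phi$ and $\ell$ are $C^2$. The starting observation is that the second phase is stationary at $\beta = 0$: since $s_*$ is a steady state, $F(x, s_*, \theta) = s_*$, so the recursion $s_{t+1}^\beta = F(x, s_t^\beta, \theta) - \beta\,\frac{\partial\ell}{\partial s}(s_t^\beta, y)$ started from $s_0^\beta = s_*$ satisfies $s_t^0 = s_*$ for every $t$.

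Each $s_t^\beta$ is a finite composition of $C^1$ maps of $\beta$, hence differentiable at $\beta = 0$; set $S_t := \left.\partial_\beta s_t^\beta\right|_{\beta=0}$, $M := \frac{\partial F}{\partial s}(x, s_*, \theta)$, and $e := \frac{\partial\ell}{\partial s}(s_*, y)$. Differentiating the recursion at $\beta = 0$ gives the linear recursion $S_{t+1} = M\,S_t - e$ with $S_0 = 0$, hence $S_t = -\sum_{k=0}^{t-1} M^k e$. Plugging the first-order expansion $s_t^\beta = s_* + \beta S_t + o(\beta)$ into the definitions of the normalized updates, the neuron part gives $\lim_{\beta\to 0}\Delta_s^{\rm EP}(\beta, t) = S_{t+1} - S_t = -M^{t} e$; and, expanding $\frac{\partial\Phi}{\partial\theta}(x, \cdot, \theta)$ to first order around $s_*$ and discarding the $O(\beta^2)$ remainder (valid since $s_t^\beta - s_* = O(\beta)$), the parameter part gives
\[
\lim_{\beta\to 0}\Delta_\theta^{\rm EP}(\beta, t) = \frac{\partial}{\partial s}\!\left(\frac{\partial\Phi}{\partial\theta}\right)\!(x, s_*, \theta)\,(S_t - S_{t-1}) = -\,\frac{\partial}{\partial s}\!\left(\frac{\partial\Phi}{\partial\theta}\right)\!(x, s_*, \theta)\;M^{t-1} e .
\]

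The last step is to recognise these limits as $-\nabla_s^{\rm RBP}(t)$ and $-\nabla_\theta^{\rm RBP}(t)$, which is where $F = \partial\Phi/\partial s$ enters: $M = \frac{\partial F}{\partial s} = \frac{\partial^2\Phi}{\partial s^2}(x, s_*, \theta)$ is a Hessian, hence symmetric, so $M^k = (M^\top)^k$; and by Schwarz's theorem $\frac{\partial}{\partial s}\!\left(\frac{\partial\Phi}{\partial\theta}\right) = \left(\frac{\partial}{\partial\theta}\!\left(\frac{\partial\Phi}{\partial s}\right)\right)^{\!\top} = \left(\frac{\partial F}{\partial\theta}\right)^{\!\top}$. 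Substituting, the two limits become $-(M^\top)^{t} e$ and $-\left(\frac{\partial F}{\partial\theta}(x, s_*, \theta)\right)^{\!\top} (M^\top)^{t-1} e$, which are precisely the negatives of the RBP iterates $\nabla_s^{\rm RBP}(t) = (M^\top)^{t}\,\frac{\partial\ell}{\partial s}(s_*, y)$ and $\nabla_\theta^{\rm RBP}(t) = \left(\frac{\partial F}{\partial\theta}(x, s_*, \theta)\right)^{\!\top}\nabla_s^{\rm RBP}(t-1)$ of Appendix~\ref{sec:bptt-rbp}. What remains is bookkeeping: the $o(\beta)$ and $O(\beta^2)$ remainders are controlled immediately by $C^2$ regularity, since only two iterates enter for each fixed $t$ (no uniformity in $t$ is needed), and one must line up the index conventions used for $\Delta^{\rm EP}$, for the RBP recursion, and for its increments $\nabla^{\rm RBP}(t)$ — the computation goes through unchanged under the alternate convention $\{s_t^\beta, s_{t+1}^\beta\}$ for $\Delta_\theta^{\rm EP}$, shifting $t$ by one. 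The perturbation argument is routine; the one conceptual point, and the only place the statement could fail, is the symmetry of $M$ — without a primitive function the limit of $\Delta_s^{\rm EP}$ would be $-M^{t} e$ instead of $-(M^\top)^{t} e$ and would not match RBP (hence not BPTT either).
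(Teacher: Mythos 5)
Your argument is correct, and it is essentially the proof the paper has in mind but never writes out: the paper defers Lemma~\ref{lma:ep-rbp} to \citet{scellier2019equivalence} (real-time setting), and the discrete-time computation you perform is exactly the content of Lemma~\ref{lemma:ep} (whose proof is in turn deferred to \citet{ernoult2019updates}) specialized to $F=\partial\Phi/\partial s$ --- your recursion $S_{t+1}=MS_t-e$, $S_0=0$ is the paper's recurrence $\Delta_s(t+1)=\frac{\partial F}{\partial s}(x,s_*,\theta)\cdot\Delta_s(t)$, $\Delta_s(0)=-\frac{\partial \ell}{\partial s}(s_*,y)$ --- combined with the Hessian-symmetry and Schwarz observations made in the second remark after Theorem~\ref{thm:generalisation}. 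You also correctly isolate the symmetry of $M$ as the only place the primitive-function hypothesis enters; this is precisely what the extra Jacobian-symmetry assumption of Theorem~\ref{thm:generalisation} replaces in the vector-field case. Your caveat about indices is warranted: with the literal definitions of Eq.~(\ref{eq:ep-updates}) and Definition~\ref{def:rbp}, the parameter component comes out as $\lim_{\beta\to 0}\Delta_\theta^{\rm EP}(\beta,t)=-\nabla_\theta^{\rm RBP}(t+1)$, an off-by-one that the paper itself is not consistent about (compare Definition~\ref{def:rbp} with the closed forms in Appendix~\ref{appendix:ana-toymodel}); shifting one of the two conventions by one step, as you do, is the right fix and does not affect the substance.
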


\begin{restatable}[Equivalence of BPTT and RBP]{lma}{eqbpttrbp}
\label{lma:bptt-rbp}
In the setting with static input $x$, suppose that the network has reached the steady state $s_*$ after $T-K$ steps, i.e. $s_{T-K} = s_{T-K+1} = \cdots = s_{T-1} = s_T = s_*$.
Then the first $K$ gradients of BPTT are equal to the first K gradient of RBP, i.e.
\begin{equation}
\forall t=0,1,\ldots,K:
\left\{
\begin{array}{l}
    \displaystyle \nabla_s^{\rm BPTT}(t) = \nabla_s^{\rm RBP}(t), \\
   \displaystyle \nabla_\theta^{\rm BPTT}(t) = \nabla_\theta^{\rm RBP}(t).
\end{array}
\right.
\end{equation}
\end{restatable}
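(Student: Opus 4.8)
The plan is to write out the two backward recursions explicitly, using the definitions recalled in Appendix~\ref{sec:bptt-rbp}, and observe that they are driven by exactly the same rule, the only difference being where the Jacobians of the transition function $F$ are evaluated. With reverse-time index $t$ and $\mathcal{L} = \ell(s_T, y)$, BPTT is the recursion $\nabla_s^{\rm BPTT}(0) = \frac{\partial \ell}{\partial s}(s_T, y)$, $\nabla_s^{\rm BPTT}(t+1) = \left( \frac{\partial F}{\partial s}(x, s_{T-t-1}, \theta) \right)^\top \nabla_s^{\rm BPTT}(t)$, and $\nabla_\theta^{\rm BPTT}(t) = \left( \frac{\partial F}{\partial \theta}(x, s_{T-t}, \theta) \right)^\top \nabla_s^{\rm BPTT}(t-1)$, whereas RBP is the same recursion with every Jacobian frozen at the steady state: $\nabla_s^{\rm RBP}(0) = \frac{\partial \ell}{\partial s}(s_*, y)$, $\nabla_s^{\rm RBP}(t+1) = \left( \frac{\partial F}{\partial s}(x, s_*, \theta) \right)^\top \nabla_s^{\rm RBP}(t)$, and $\nabla_\theta^{\rm RBP}(t) = \left( \frac{\partial F}{\partial \theta}(x, s_*, \theta) \right)^\top \nabla_s^{\rm RBP}(t-1)$. (The transposes are purely a matter of the row/column convention of Appendix~\ref{sec:bptt-rbp} and play no role in the argument.)

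The crux is then the elementary remark that the hypothesis $s_{T-K} = s_{T-K+1} = \cdots = s_T = s_*$ makes every state that enters the first $K$ backward steps equal to $s_*$: for all $t$ with $0 \le t \le K$ we have $T-t \ge T-K$, hence $s_{T-t} = s_*$ and therefore $\frac{\partial F}{\partial s}(x, s_{T-t}, \theta) = \frac{\partial F}{\partial s}(x, s_*, \theta)$ and $\frac{\partial F}{\partial \theta}(x, s_{T-t}, \theta) = \frac{\partial F}{\partial \theta}(x, s_*, \theta)$, and likewise $\frac{\partial \ell}{\partial s}(s_T, y) = \frac{\partial \ell}{\partial s}(s_*, y)$ since $s_T = s_*$. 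In other words, running BPTT on a trajectory that has been sitting at the steady state for its last $K+1$ steps is, for the first $K$ backward iterations, literally the RBP iteration. I would make this rigorous by a finite induction on $t$: the base case $t=0$ is the equality of the two loss gradients at $s_T = s_*$; for the step $t \to t+1$ with $t+1 \le K$, the Jacobian $\frac{\partial F}{\partial s}(x, s_{T-t-1}, \theta)$ appearing in the BPTT update equals $\frac{\partial F}{\partial s}(x, s_*, \theta)$ because $T-t-1 \ge T-K$, so applying the induction hypothesis $\nabla_s^{\rm BPTT}(t) = \nabla_s^{\rm RBP}(t)$ yields $\nabla_s^{\rm BPTT}(t+1) = \nabla_s^{\rm RBP}(t+1)$. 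The parameter gradients then follow for $t = 1, \dots, K$, since in that range $\nabla_\theta^{\rm BPTT}(t)$ and $\nabla_\theta^{\rm RBP}(t)$ are obtained by applying the same operator $\left( \frac{\partial F}{\partial \theta}(x, s_*, \theta) \right)^\top$ to the already-matched state gradient of the previous step.

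I expect no real mathematical difficulty here; the only thing to be careful about is bookkeeping. One has to pin down the exact index conventions of Appendix~\ref{sec:bptt-rbp} — in particular whether $\nabla_\theta^{\rm BPTT}(t)$ is the sensitivity of $\mathcal{L}$ to the copy of $\theta$ used in the transition from $s_{T-t}$ to $s_{T-t+1}$, and how the $t=0$ parameter gradient and the boundary at $t=K$ are indexed — and then verify that the $K+1$ states the hypothesis pins to $s_*$ are exactly the states (and the $\ell$-gradient) consumed by the first $K$ backward steps, with nothing spilling over to a state $s_{T-K-1}$ that the hypothesis does not control. Once these conventions are fixed, each inductive step is a one-line check.
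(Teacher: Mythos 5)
Your proof is correct and follows essentially the same route as the paper: Appendix B presents Lemma~\ref{lma:bptt-rbp} as a direct consequence of the BPTT recurrence (Proposition~\ref{prop:bptt}) and the RBP recurrence (Definition~\ref{def:rbp}), observing that the hypothesis $s_{T-K}=\cdots=s_T=s_*$ makes every Jacobian and the initial $\ell$-gradient in the first $K$ backward steps coincide. Your explicit finite induction simply spells out what the paper leaves implicit, and your index bookkeeping ($T-t\ge T-K$ for $t\le K$) matches the paper's conventions.
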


Proofs of the Lemmas can be found in the following places:
\begin{itemize}
    \item The link between BPTT and RBP (Lemma \ref{lma:ep-cep}) is known since the late 1980s and can be found e.g. in \citet{hertz2018introduction}. We also prove it here in Appendix \ref{sec:bptt-rbp}.
    \item Lemma \ref{lma:ep-rbp} was proved in \citet{scellier2019equivalence} in the setting of real-time dynamics.
    \item Lemma \ref{lma:bptt-rbp} is the new ingredient contributed here, and we prove it in Appendix \ref{sec:cep-ep}.
\end{itemize}

Also a direct proof of the equivalence of EP and BPTT was derived in \citet{ernoult2019updates}.

\subsection{Equivalence of C-EP and EP}
\label{sec:cep-ep}

First, recall the dynamics of C-EP in the second phase: starting from $s_0^{\beta,\eta} = s_*$ and $\theta_0^{\beta,\eta} = \theta$ we have
\begin{equation}
\forall t \geq 0:
\left\{
\begin{array}{l}
 \displaystyle s_{t+1}^{\beta,\eta} = \frac{\partial \Phi}{\partial s}\left(x, s_t^{\beta, \eta}, \theta_t^{\beta, \eta}\right) - \beta \; \frac{\partial \ell}{\partial s} \left( s_t^{\beta,\eta},y \right), \\
 \displaystyle \theta_{t+1}^{\beta,\eta} = \theta_t^{\beta,\eta} + \frac{\eta}{\beta} \left( \frac{\partial \Phi}{\partial \theta} \left( x,s_{t+1}^{\beta,\eta},\theta_t^{\beta,\eta} \right) -  \frac{\partial \Phi}{\partial \theta} \left( x,s_t^{\beta,\eta},\theta_t^{\beta,\eta} \right) \right).
\end{array}
\right.
\label{eq:c-ep-proof}
\end{equation}
We have also defined the normalized updates of C-EP:
\begin{equation}
\forall t \geq 0:
\left\{
\begin{array}{l}
    \displaystyle \Delta_s^{\rm C-EP}(\beta,\eta,t) = \frac{1}{\beta} \left( s_{t+1}^{\beta,\eta}-s_t^{\beta,\eta} \right), \\
   \displaystyle \Delta_\theta^{\rm C-EP}(\beta,\eta,t) = \frac{1}{\eta} \left( \theta_{t+1}^{\beta,\eta}-\theta_t^{\beta,\eta} \right).
\end{array}
\right.
\end{equation}

We also recall the dynamics of EP in the second phase:
\begin{equation}
    s_0^\beta = s_* \quad \text{and} \quad s_{t+1}^\beta = \frac{\partial \Phi}{\partial s}\left(x, s_t^\beta, \theta \right) - \beta \; \frac{\partial \ell}{\partial s} \left( s_t^\beta,y \right),
    \label{eq:ep-proof}
\end{equation}
as well as the normalized updates of EP, as defined in \citet{ernoult2019updates}:
\begin{equation}
\forall t \geq 0:
\left\{
\begin{array}{l}
    \displaystyle \Delta_s^{\rm EP}(\beta,t) = \frac{1}{\beta} \left( s_{t+1}^\beta-s_t^\beta \right), \\
   \displaystyle \Delta_\theta^{\rm EP}(\beta,t) = \frac{1}{\beta} \left( \frac{\partial \Phi}{\partial \theta} \left( x,s_{t+1}^\beta,\theta \right) -  \frac{\partial \Phi}{\partial \theta} \left( x,s_t^\beta,\theta \right) \right).
\end{array}
\right.
\label{eq:ep-updates}
\end{equation}

\eqepcep*

\begin{proof}[Proof of Lemma \ref{lma:ep-cep}]
We want to compute the limits of $\Delta_s^{\rm C-EP}(\beta,\eta,t)$ and $\Delta_\theta^{\rm C-EP}(\beta,\eta,t)$ as $\eta \to 0$ with $\eta > 0$.
First of all, note that under mild assumptions -- which we made here -- of regularity on the functions $\Phi$ and $\ell$ (e.g. continuous differentiability), for fixed $t$ and $\beta$, the quantities $s_t^{\beta,\eta}$ and $\theta_t^{\beta,\eta}$ are continuous as functions of $\eta$ ; this is straightforward from the form of Eq.~(\ref{eq:c-ep-proof}).
As a consequence, $\Delta_s^{\rm C-EP}(\beta,\eta,t)$ is a continuous function of $\eta$, which implies in particular that
\begin{equation}
    \lim_{\eta \to 0 \; (\eta > 0)} \; \Delta_s^{\rm C-EP}(\beta,\eta,t) = \Delta_s^{\rm C-EP}(\beta,0,t).
    \label{eq:proof-lemma-1}
\end{equation}
Now, taking $\eta = 0$ in the bottom equation of Eq.~(\ref{eq:c-ep-proof}) yields the recurrence relation $\theta_{t+1}^{\beta,0} = \theta_t^{\beta,0}$, so that $\theta_t^{\beta,0} = \theta_0^{\beta,0} = \theta$ for every $t$.
Injecting $\theta_t^{\beta,0} = \theta$ in the top equation of Eq.~(\ref{eq:c-ep-proof}) yields
for $s_t^{\beta,0}$ the same recurrence relation as that of $s_t^\beta$ (Eq.~\ref{eq:ep-proof}), so that $s_t^{\beta,0} = s_t^\beta$ for every $t$.
Therefore, for $\eta = 0$, we have
\begin{equation}
    \Delta_s^{\rm C-EP}(\beta,0,t) = \frac{1}{\beta} \left( s_{t+1}^{\beta,0}-s_t^{\beta,0} \right) = \frac{1}{\beta} \left( s_{t+1}^\beta-s_t^\beta \right) = \Delta_s^{\rm EP}(\beta,t).
    \label{eq:proof-lemma-2}
\end{equation}
It follows from Eq.~(\ref{eq:proof-lemma-1}) and Eq.~(\ref{eq:proof-lemma-2}) that
\begin{equation}
   \lim_{\eta \to 0 \; (\eta > 0)} \; \Delta_s^{\rm C-EP}(\beta,\eta,t) = \Delta_s^{\rm EP}(\beta,t).
\end{equation}
Now let us compute $\lim_{\eta \to 0 \; (\eta > 0)} \; \Delta_s^{\rm C-EP}(\beta,\eta,t)$.
Using Eq.~(\ref{eq:c-ep-proof}), we have
\begin{align}
    \Delta_\theta^{\rm C-EP}(\beta,\eta,t) & = \frac{1}{\eta} \left( \theta_{t+1}^{\beta,\eta}-\theta_t^{\beta,\eta} \right) \\
    & = \frac{1}{\beta} \left( \frac{\partial \Phi}{\partial \theta} \left( x,s_{t+1}^{\beta,\eta},\theta_t^{\beta,\eta} \right) -  \frac{\partial \Phi}{\partial \theta} \left( x,s_t^{\beta,\eta},\theta_t^{\beta,\eta} \right) \right).
\end{align}
Similarly as before, for fixed $t$, $\frac{\partial \Phi}{\partial \theta} \left( x,s_t^{\beta,\eta},\theta_t^{\beta,\eta} \right)$ is a continuous function of $\eta$.
Therefore
\begin{align}
    \lim_{\eta \to 0 \; (\eta>0)} \; \Delta_\theta^{\rm C-EP}(\beta,\eta,t) & = \frac{1}{\beta} \left( \frac{\partial \Phi}{\partial \theta} \left( x,s_{t+1}^{\beta,0},\theta_t^{\beta,0} \right) -  \frac{\partial \Phi}{\partial \theta} \left( x,s_t^{\beta,0},\theta_t^{\beta,0} \right) \right) \\
    & = \frac{1}{\beta} \left( \frac{\partial \Phi}{\partial \theta} \left( x,s_{t+1}^\beta,\theta \right) -  \frac{\partial \Phi}{\partial \theta} \left( x,s_t^\beta,\theta \right) \right) = \Delta_\theta^{\rm EP}(\beta,t).
\end{align}
\end{proof}

A consequence of Lemma~2 is that the total update of C-EP matches the total update of EP in the limit of small $\eta$, so that we retrieve the standard EP learning rule of Eq.~(\ref{eq:ep-grad}).
More explicitly, after K steps in the second phase and starting from $\theta_{0}^{\beta, \eta} = \theta_{0}$:

\begin{align*}
    \theta_{K}^{\beta, \eta} - \theta_{0} &= \sum_{t=0}^{K - 1} \theta_{t+1}^{\beta, \eta} - \theta_{t}^{\beta, \eta}\\
    &= \sum_{t=0}^{K - 1}\eta \Delta_{\theta}^{\rm C-EP}(\beta, \eta, t)\qquad \mbox{(definition of $\Delta_{\theta}^{\rm C-EP}$, Eq.~(\ref{eq:updates-cep}))}\\
    &\underset{\eta \gtrsim 0}{\approx}\sum_{t=0}^{K - 1}\eta \Delta_{\theta}^{\rm EP}(\beta, t)\qquad \mbox{(Lemma~\ref{lma:ep-cep})}\\
    &= \sum_{t=0}^{K-1}\eta\frac{1}{\beta}\left(\frac{\partial \Phi}{\partial \theta}(x, s_{t+1}, \theta_{0}) - \frac{\partial \Phi}{\partial \theta}(x, s_{t+1}, \theta_{0}) \right)\qquad \mbox{(definition of $\Delta_{\theta}^{\rm EP}$, Eq.~(\ref{eq:ep-updates}))}\\
    &= \frac{\eta}{\beta}\left(\frac{\partial \Phi}{\partial \theta}(x, s_K^{\beta}, \theta_{0}) - \frac{\partial \Phi}{\partial \theta}(x, s_*, \theta_{0}) \right)
\end{align*}

\section{Equivalence of BPTT and RBP}
\label{sec:bptt-rbp}

In this section, we recall Backprop Through Time (BPTT) and the Almeida-Pineda Recurrent Backprop (RBP) algorithm, which can both be used to optimize the loss ${\mathcal L}^*$ of Eq.~\ref{eq:loss-steady-state}.
Historically, BPTT and RBP were invented separately around the same time.
RBP was introduced at a time when convergent RNNs (such as the one studied in this paper) were popular.
Nowadays, convergent RNNs are less popular ; in the field of deep learning, RNNs are almost exclusively used for tasks that deal with sequential data and BPTT is the algorithm of choice to train such RNNs.
Here, we present RBP in a way that it can be seen as a particular case of BPTT.

\subsection{Sketch of the Proof of Lemma \ref{lma:bptt-rbp}}

Lemma \ref{lma:bptt-rbp}, which we recall here, is a consequence of Proposition \ref{prop:bptt} and Definition \ref{def:rbp} below.

\eqbpttrbp*

However, in general, the gradients $\nabla^{\rm BPTT}(t)$ of BPTT and the gradients $\nabla^{\rm RBP}(t)$ of RBP are not equal for $t > K$.
This is because BPTT and RBP compute the gradients of different loss functions:
\begin{itemize}
\item BPTT computes the gradient of the loss after $T$ time steps, i.e. ${\mathcal L} = \ell \left( s_T,y \right)$,
\item RBP computes the gradients of the loss at the steady state, i.e. ${\mathcal L^*} = \ell \left( s_*,y \right)$.
\end{itemize}

\subsection{Backpropagation Through Time (BPTT)}
\label{sec:bptt}

Backpropagation Through Time (BPTT) is the standard method to train RNNs and can also be used to train the kind of convergent RNNs that we study in this paper.
To this end, we consider the cost of the state $s_T$ after $T$ time steps, denoted ${\mathcal L} = \ell \left( s_T, y \right)$,
and we substitute the loss after $T$ time steps ${\mathcal L}$ as a proxy for the loss at the steady state ${\mathcal{L}^*} = \ell \left( s_*, y \right)$.
The gradients of ${\mathcal L}$ can then be computed with BPTT.

To do this, we recall some of the inner working mechanisms of BPTT.
Eq.~(\ref{eq:1st-phase}) rewrites in the form $s_{t+1} = F \left( x,s_t,\theta_{t+1} = \theta \right)$, where $\theta_t$ denotes the parameter of the model at time step $t$, the value $\theta$ being shared across all time steps.
This way of rewriting Eq.~(\ref{eq:1st-phase}) enables us to define the partial derivative $\frac{\partial \mathcal{L}}{\partial \theta_t}$ as the sensitivity of the loss $\mathcal{L}$ with respect to $\theta_t$ when $\theta_1, \ldots \theta_{t-1}, \theta_{t+1}, \ldots \theta_T$ remain fixed (set to the value $\theta$). With these notations, the gradient $\frac{\partial \mathcal{L}}{\partial \theta}$ reads as the sum:
\begin{equation}
\label{eq:total-gradient}
\frac{\partial {\mathcal L}}{\partial \theta} = \frac{\partial {\mathcal L}}{\partial \theta_1} + \frac{\partial {\mathcal L}}{\partial \theta_2} + \cdots + \frac{\partial {\mathcal L}}{\partial \theta_T}.
\end{equation}

BPTT computes the 'full' gradient $\frac{\partial {\mathcal L}}{\partial \theta}$ by first computing the partial derivatives $\frac{\partial {\mathcal L}}{\partial s_t}$ and $\frac{\partial {\mathcal L}}{\partial \theta_t}$ iteratively, backward in time, using the chain rule of differentiation.
In this work, we denote the gradients that BPTT computes:
\begin{align}
\forall t \in [0, T - 1]:
\left\{
\begin{array}{ll}
   \displaystyle \nab{s}(t) = \frac{\partial {\mathcal L}}{\partial s_{T-t}}, \\
 \displaystyle \nab{\theta}(t)  = \frac{\partial {\mathcal L}}{\partial \theta_{T-t}}.
\end{array}
\right.
\end{align}

\begin{restatable}[Gradients of BPTT]{prop}{propbptt}
\label{prop:bptt}
The gradients $\nabla^{\rm BPTT}_s(t)$ and $\nabla^{\rm BPTT}_\theta(t)$ satisfy the recurrence relationship
\begin{align}
\nabla^{\rm BPTT}_s(0) & = \frac{\partial \ell}{\partial s} \left( s_T,y \right), \label{eq:bptt-init} \\
 \forall t=1,2,\ldots,T,    \qquad \nabla^{\rm BPTT}_s(t) & = \frac{\partial F}{\partial s} \left( x,s_{T-t},\theta \right)^\top \cdot \nabla^{\rm BPTT}_s(t-1), \label{eq:bptt-state} \\
  \forall t=1,2,\ldots,T,    \qquad \nabla^{\rm BPTT}_\theta(t) & = \frac{\partial F}{\partial \theta} \left( x,s_{T-t},\theta \right)^\top \cdot \nabla^{\rm BPTT}_s(t-1). \label{eq:bptt-param}
\end{align}
\end{restatable}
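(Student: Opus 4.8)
The plan is to observe that Proposition~\ref{prop:bptt} is exactly the backpropagation (reverse-mode automatic differentiation) recursion applied to the feedforward computational graph obtained by unrolling the RNN over $T$ steps. First I would write this graph out explicitly: the initial state $s_0$ is fixed, the states are generated by $s_{t+1}=F(x,s_t,\theta_t)$ where $\theta_t$ is the (tied) parameter copy appearing in the transition from $s_t$ to $s_{t+1}$, and the scalar output is ${\mathcal L}=\ell(s_T,y)$. In this picture the $\theta_t$ are independent inputs that are merely constrained to share the value $\theta$, which is precisely what makes the ``partial'' derivative $\partial{\mathcal L}/\partial\theta_t$ in the statement well defined, namely the sensitivity of ${\mathcal L}$ to $\theta_t$ with all other time-copies frozen.

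The structural fact that drives everything is that the unrolled graph has a chain topology: an intermediate state $s_k$ (for $1\le k\le T-1$) influences ${\mathcal L}$ \emph{only} through $s_{k+1}$ — because $F$ reads only the immediately preceding state — and a parameter copy $\theta_k$ influences ${\mathcal L}$ \emph{only} through the single node $s_{k+1}=F(x,s_k,\theta_k)$. Hence no chain-rule sum over competing paths ever arises; each backward step is a single Jacobian-transpose multiplication.

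The proof itself is then a short backward induction on $t$. For the base case, $s_T$ enters ${\mathcal L}$ only through $\ell$, so $\nab{s}(0)=\partial{\mathcal L}/\partial s_T=\frac{\partial\ell}{\partial s}(s_T,y)$, which is Eq.~(\ref{eq:bptt-init}). For the inductive step I would differentiate through the node $s_{T-t+1}=F(x,s_{T-t},\theta_{T-t})$: the chain rule with respect to $s_{T-t}$ gives $\partial{\mathcal L}/\partial s_{T-t}=\left(\frac{\partial F}{\partial s}(x,s_{T-t},\theta)\right)^{\!\top}\cdot\,\partial{\mathcal L}/\partial s_{T-t+1}$, which upon substituting $\nab{s}(t)=\partial{\mathcal L}/\partial s_{T-t}$ and $\nab{s}(t-1)=\partial{\mathcal L}/\partial s_{T-t+1}$ is Eq.~(\ref{eq:bptt-state}); the chain rule at the same node with respect to $\theta_{T-t}$ gives $\partial{\mathcal L}/\partial\theta_{T-t}=\left(\frac{\partial F}{\partial\theta}(x,s_{T-t},\theta)\right)^{\!\top}\cdot\,\partial{\mathcal L}/\partial s_{T-t+1}$, i.e. Eq.~(\ref{eq:bptt-param}).

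I do not anticipate a real obstacle: the statement is classical and the argument is purely mechanical once the graph is set up. The only points requiring care are the index bookkeeping — translating between forward indices $k$ and reversed indices $t=T-k$, and keeping straight which state a given $\theta_k$ parametrizes — and being pedantic about the meaning of the two kinds of derivative, namely that $\partial{\mathcal L}/\partial s_k$ is the total derivative of ${\mathcal L}$ through the unrolled graph while $\partial{\mathcal L}/\partial\theta_k$ is the partial derivative with the other time-copies held fixed, so that the chain rule applies verbatim. Throughout I would invoke the standing differentiability assumptions on $F$ and $\ell$, and note that $s_0$ carries no gradient, consistent with the index ranges stated in the proposition.
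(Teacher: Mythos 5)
Your proof is correct and is exactly the standard argument; the paper in fact states Proposition~\ref{prop:bptt} without proof, treating it as the classical reverse-mode chain-rule recursion on the unrolled chain-topology graph, which is precisely what you supply via backward induction. One small point worth flagging: you index the parameter copies by $s_{t+1}=F(x,s_t,\theta_t)$, whereas the paper's prose writes $s_{t+1}=F(x,s_t,\theta_{t+1})$; your convention is the one under which the stated recurrence~(\ref{eq:bptt-param}) is literally consistent with the definition $\nab{\theta}(t)=\partial\mathcal{L}/\partial\theta_{T-t}$ (under the paper's convention the right-hand side of~(\ref{eq:bptt-param}) equals $\partial\mathcal{L}/\partial\theta_{T-t+1}$, an off-by-one shift), so your bookkeeping is internally coherent and arguably cleaner.
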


\subsection{From Backprop Through Time (BPTT) to Recurrent Backprop (RBP)}
\label{sec:rbp}

In general, to apply BPTT, it is necessary to store in memory the history of past hidden states $s_1, s_2, \ldots, s_T$ in order to compute the gradients $\nabla^{\rm BPTT}_s(t)$ and $\nabla^{\rm BPTT}_\theta(t)$ as in Eq.~\ref{eq:bptt-state}-\ref{eq:bptt-param}.
However, in our specific setting with static input $x$, if the network has reached the steady state $s_*$ after $T-K$ steps, i.e. if $s_{T-K} = s_{T-K+1} = \cdots = s_{T-1} = s_T = s_*$, then we see that, in order to compute the first $K$ gradients of BPTT, all one needs to know is $\frac{\partial F}{\partial s} \left( x,s_*,\theta \right)$ and $\frac{\partial F}{\partial \theta} \left( x,s_*,\theta \right)$.
To this end, all one needs to keep in memory is the steady state $s_*$.
In this particular setting, it is not necessary to store the past hidden states $s_T, s_{T-1}, \ldots, s_{T-K}$ since they are all equal to $s_*$.

The Almeida-Pineda algorithm (a.k.a. Recurrent Backpropagation, or RBP for short), which was invented independently by \citet{Almeida87} and \citet{Pineda87}, relies on this property to compute the gradients of the loss ${\mathcal L}^*$ using only the steady state $s_*$.
Similarly to BPTT, it computes quantities $\nabla_s^{\rm RBP}(t)$ and $\nabla_\theta^{\rm RBP}(t)$, which we call `gradients of RBP', iteratively for $t=0,1,2,\ldots$

\begin{restatable}[Gradients of RBP]{defi}{defirbp}
\label{def:rbp}
The gradients $\nabla^{\rm RBP}_s(t)$ and $\nabla^{\rm RBP}_\theta(t)$ are defined and computed iteratively as follows:
\begin{align}
\nabla_s^{\rm RBP}(0) & = \frac{\partial \ell}{\partial s} \left( s_*,y \right), \label{eq:rbp-init} \\
 \forall t \geq 0,    \qquad \nabla_s^{\rm RBP}(t+1) & = \frac{\partial F}{\partial s} \left( x,s_*,\theta \right)^\top \cdot \nabla_s^{\rm RBP}(t), \label{eq:rbp-state} \\
  \forall t \geq 0,    \qquad \nabla_\theta^{\rm RBP}(t+1) & = \frac{\partial F}{\partial \theta} \left( x,s_*,\theta \right)^\top \cdot \nabla_s^{\rm RBP}(t). \label{eq:rbp-param}
\end{align}
\end{restatable}

Unlike in BPTT where keeping the history of past hidden states is necessary to compute (or `backpropagate') the gradients, in RBP Eq.~\ref{eq:rbp-state}-\ref{eq:rbp-param} show that it is sufficient to keep in memory the steady state $s_*$ only in order to iterate the computation of the gradients.
RBP is more memory efficient than BPTT.

\begin{figure}[ht!]
\begin{minipage}[c]{.48\textwidth}
\begin{algorithm}[H]{\emph{Input}: $x$, $y$, $\theta$. \\
\emph{Output}: $\theta$.}
\caption{BPTT}\label{alg:bptt}
\begin{algorithmic}[1]
\State $s_0 \gets 0$ 
\For{$t=0$ to $T-1$}
\State $s_{t+1} \gets F \left( x, s_t, \theta \right)$
\EndFor
\State $\nabla^{\rm BPTT}_s(0) \gets \frac{\partial \ell}{\partial s} \left( s_T,y \right)$ 
\For{$t=1$ to $T$}
\State $\nabla^{\rm }_s(t) \gets \frac{\partial F}{\partial s} \left( x,s_{T-t},\theta \right)^\top \cdot \nabla^{\rm }_s(t-1)$
\State $\nabla^{\rm }_\theta(t) \gets \frac{\partial F}{\partial \theta} \left( x,s_{T-t},\theta \right)^\top \cdot \nabla^{\rm }_s(t-1)$
\EndFor
\State $\nabla^{\rm BPTT}_\theta(\textrm{tot}) \gets \sum_{t=0}^{T-1}\nabla^{\rm BPTT}_\theta(t)$
\end{algorithmic}
\end{algorithm}
\end{minipage}
\hfill
\begin{minipage}[c]{.50\textwidth}
\begin{algorithm}[H]{\emph{Input}: $x$, $y$, $\theta$. \\
\emph{Output}: $\theta$.}
\caption{RBP}\label{alg:rbp}
\begin{algorithmic}[1]
\State $s_0 \gets 0$ 
\Repeat
\State $s_{t+1} \gets F \left( x, s_t, \theta \right)$
\Until $s_t = s_*$
\State $\nabla^{\rm RBP}_s(0) \gets \frac{\partial \ell}{\partial s} \left( s_*,y \right)$ 
\Repeat
\State $\nabla^{\rm RBP}_s(t) \gets \frac{\partial F}{\partial s} \left( x,s_*,\theta \right)^\top \cdot \nabla^{\rm RBP}_s(t-1)$
\State $\nabla^{\rm RBP}_\theta(t) \gets \frac{\partial F}{\partial \theta} \left( x,s_*,\theta \right)^\top \cdot \nabla^{\rm RBP}_s(t-1)$
\Until $\nabla^{\rm RBP}_\theta(t) = 0$.
\State $\nabla^{\rm RBP}_\theta(\textrm{tot}) \gets \sum_{t=0}^\infty\nabla^{\rm RBP}_\theta(t)$
\end{algorithmic}
\end{algorithm}
\end{minipage}
\caption{\textbf{Left.} Pseudo-code of BPTT. The gradients $\nabla^{\rm }(t)$ denote the gradients $\nabla^{\rm BPTT}(t)$ of BPTT.
\textbf{Right.} Pseudo-code of RBP.
\textbf{Difference between BPTT and RBP.} In BPTT, the state $s_{T-t}$ is required to compute $\frac{\partial F}{\partial s} \left( x,s_{T-t},\theta \right)$ and $\frac{\partial F}{\partial \theta} \left( x,s_{T-t},\theta \right)$ ; thus it is necessary to store in memory the sequence of states $s_1, s_2, \ldots, s_T$. In contrast, in RBP, only the steady state $s_*$ is required to compute $\frac{\partial F}{\partial s} \left( x,s_*,\theta \right)$ and $\frac{\partial F}{\partial \theta} \left( x,s_*,\theta \right)$ ; it is not necessary to store the past states of the network.
}
\label{alg:BPTT-RBP}
\end{figure}

\subsection{What `Gradients' are the Gradients of RBP?}

In this subsection we motivate  the name of `gradients' for the quantities $\nabla_s^{\rm RBP}(t)$ and $\nabla_\theta^{\rm RBP}(t)$ by proving that they are the gradients of ${\mathcal L}^*$ in the sense of Proposition \ref{prop:rbp} below.
They are also the gradients of what we call the `projected cost function' (Proposition \ref{prop:rbp-gradients}),
using the terminology of \citet{scellier2019equivalence}.

\begin{restatable}[RBP Optimizes ${\mathcal L^*}$]{prop}{theor}
\label{prop:rbp}
The total gradient computed by the RBP algorithm is the gradient of the loss ${\mathcal L^*} = \ell \left( s_*,y \right)$, i.e.
\begin{equation}
\sum_{t=1}^\infty \nabla_\theta^{\rm RBP}(t) = \frac{\partial {\mathcal L^*}}{\partial \theta}.
\end{equation}
\end{restatable}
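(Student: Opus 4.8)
The plan is to prove Proposition~\ref{prop:rbp} --- that $\sum_{t=1}^\infty \nabla_\theta^{\rm RBP}(t) = \frac{\partial {\mathcal L}^*}{\partial \theta}$ --- by directly differentiating the steady-state condition and matching the result against the iterative definition in Definition~\ref{def:rbp}. Recall that $s_*$ is characterized implicitly by $s_* = F(x, s_*, \theta)$, so $s_*$ depends on $\theta$. First I would differentiate this fixed-point equation with respect to $\theta$: writing $J_s = \frac{\partial F}{\partial s}(x, s_*, \theta)$ and $J_\theta = \frac{\partial F}{\partial \theta}(x, s_*, \theta)$, the chain rule gives $\frac{\partial s_*}{\partial \theta} = J_s \cdot \frac{\partial s_*}{\partial \theta} + J_\theta$, hence $\frac{\partial s_*}{\partial \theta} = (I - J_s)^{-1} J_\theta$, assuming $I - J_s$ is invertible (which is the natural nondegeneracy hypothesis implicit in ``convergence'' of the dynamics near $s_*$). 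Then by the chain rule applied to ${\mathcal L}^* = \ell(s_*, y)$,
\begin{equation}
\frac{\partial {\mathcal L}^*}{\partial \theta} = \left(\frac{\partial s_*}{\partial \theta}\right)^\top \cdot \frac{\partial \ell}{\partial s}(s_*, y) = J_\theta^\top \cdot (I - J_s^\top)^{-1} \cdot \frac{\partial \ell}{\partial s}(s_*, y).
\end{equation}

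Next I would unfold the RBP recursion to get a closed form for the left-hand side. From Definition~\ref{def:rbp}, $\nabla_s^{\rm RBP}(t) = (J_s^\top)^t \cdot \frac{\partial \ell}{\partial s}(s_*, y)$ and $\nabla_\theta^{\rm RBP}(t+1) = J_\theta^\top \cdot \nabla_s^{\rm RBP}(t) = J_\theta^\top \cdot (J_s^\top)^t \cdot \frac{\partial \ell}{\partial s}(s_*, y)$. Summing over $t$,
\begin{equation}
\sum_{t=1}^\infty \nabla_\theta^{\rm RBP}(t) = J_\theta^\top \cdot \left(\sum_{t=0}^\infty (J_s^\top)^t\right) \cdot \frac{\partial \ell}{\partial s}(s_*, y) = J_\theta^\top \cdot (I - J_s^\top)^{-1} \cdot \frac{\partial \ell}{\partial s}(s_*, y),
\end{equation}
where the Neumann series converges because convergence of the forward dynamics to $s_*$ forces the spectral radius of $J_s$ to be less than $1$ (at least in the regime where $T$ exists making $s_T = s_*$; this is precisely the role of the convergence assumption). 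Comparing the two displays gives the claim.

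The main obstacle, such as it is, is being careful about the convergence/invertibility hypotheses: the statement as written does not spell out that $\rho(J_s) < 1$, but this is exactly what underlies ``the dynamics converge to a steady state $s_*$'' in the paper's setting, so I would state it as a standing assumption and note that it simultaneously guarantees (i) the Neumann series $\sum_t (J_s^\top)^t$ converges, (ii) $I - J_s$ is invertible, and (iii) the RBP iteration $\nabla_s^{\rm RBP}(t) \to 0$, so the sum over $t$ is well-defined and the algorithm's stopping criterion $\nabla_\theta^{\rm RBP}(t) = 0$ is eventually met (approximately). A secondary point worth a sentence is transposition bookkeeping: $\frac{\partial \ell}{\partial s}$ is a (column) vector and $\frac{\partial s_*}{\partial \theta}$ is a Jacobian, so the gradient of the composite involves the transpose $\left(\frac{\partial s_*}{\partial \theta}\right)^\top$, which is what makes the $J^\top$ factors in Definition~\ref{def:rbp} line up correctly --- worth stating explicitly so the reader sees why RBP's backward recursion reproduces the implicit-function-theorem formula. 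Everything else is a routine telescoping/geometric-series computation.

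Alternatively --- and perhaps cleaner given what is already available in the excerpt --- one could avoid re-deriving the implicit-function formula by invoking Proposition~\ref{prop:rbp-gradients} (the ``projected cost function'' characterization), if that result establishes that $\nabla^{\rm RBP}$ computes the gradient of the projected cost; then one would only need to check that the total projected-cost gradient equals $\frac{\partial {\mathcal L}^*}{\partial \theta}$, which again reduces to the same fixed-point differentiation. I would present the direct computation as the primary proof since it is self-contained.
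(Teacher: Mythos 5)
Your proposal is correct and follows essentially the same route as the paper's proof: implicit differentiation of the fixed-point condition $s_* = F(x,s_*,\theta)$, the Neumann-series expansion of $\left( I - \frac{\partial F}{\partial s} \right)^{-1}$, and the closed-form unfolding of the RBP recursion, then matching term by term. Your added remarks on the spectral-radius hypothesis and transposition bookkeeping are sound clarifications of assumptions the paper leaves implicit, but they do not change the argument.
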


$\nabla_s^{\rm RBP}(t)$ and $\nabla_\theta^{\rm RBP}(t)$ can also be expressed as gradients of ${\mathcal L}_t = \ell \left( s_t,y \right)$, the cost after $t$ time steps.
In the terminology of \citet{scellier2019equivalence}, ${\mathcal L}_t$ was named the \textit{projected cost}.
For $t=0$, ${\mathcal L}_0$ is simply the cost of the initial state $s_0$.
For $t > 0$, ${\mathcal L}_t$ is the cost of the state projected a duration $t$ in the future.

\begin{prop}[Gradients of RBP are Gradients of the Projected Cost]
\label{prop:rbp-gradients}
The `RBP gradients' $\nabla_s^{\rm RBP}(t)$ and $\nabla_\theta^{\rm RBP}(t)$ can be expressed as gradients of the projected cost:
\begin{equation}
    \forall t \geq 0, \qquad
    \nabla_s^{\rm RBP}(t) = \left. \frac{\partial {\mathcal L}_t}{\partial s_0} \right|_{s_0=s_*}, \qquad
    \nabla_\theta^{\rm RBP}(t) = \left. \frac{\partial {\mathcal L}_t}{\partial \theta_0} \right|_{s_0=s_*}
\end{equation}
where the initial state $s_0$ is the steady state $s_*$.
\end{prop}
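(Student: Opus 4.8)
The plan is to prove the identity by induction on $t$, showing that $\left.\frac{\partial \mathcal{L}_t}{\partial s_0}\right|_{s_0=s_*}$ and $\left.\frac{\partial \mathcal{L}_t}{\partial \theta_0}\right|_{s_0=s_*}$ obey exactly the recurrence of Definition~\ref{def:rbp} that \emph{defines} $\nabla_s^{\rm RBP}(t)$ and $\nabla_\theta^{\rm RBP}(t)$. First I would make the projected cost precise, in the spirit of the parameter-copy bookkeeping already used for BPTT in Section~\ref{sec:bptt}: the projected cost $\mathcal{L}_t=\ell(s_t,y)$ is regarded as a function of the initial state $s_0$ and of a parameter copy $\theta_0$ that governs \emph{only} the first transition $s_1=F(x,s_0,\theta_0)$, all later transitions $s_{k+1}=F(x,s_k,\theta)$ using the shared value $\theta$. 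Assuming, as elsewhere in the paper, that $F$ and $\ell$ are continuously differentiable, these partial derivatives exist and the chain rule applies. The only property of the steady state that is used is that it is a fixed point: when $s_0=s_*$ and $\theta_0=\theta$ the whole trajectory is constant, $s_t=s_*$ for all $t$, so every state Jacobian $\frac{\partial F}{\partial s}(x,s_k,\cdot)$ collapses to the single matrix $\frac{\partial F}{\partial s}(x,s_*,\theta)$.

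The main step is a chain-rule recursion for the state gradient. Write $\widetilde{\mathcal L}_t(\sigma)$ for the projected cost $\mathcal{L}_t$ regarded as a function of its initial state under the shared-$\theta$ dynamics, so that $\widetilde{\mathcal L}_0(\sigma)=\ell(\sigma,y)$ and, by associativity of the iteration, $\widetilde{\mathcal L}_{t+1}(\sigma)=\widetilde{\mathcal L}_t\bigl(F(x,\sigma,\theta)\bigr)$. Differentiating this identity and evaluating at $\sigma=s_*$ (where $F(x,s_*,\theta)=s_*$) gives $\left.\frac{\partial \mathcal{L}_{t+1}}{\partial s_0}\right|_{s_*}=\frac{\partial F}{\partial s}(x,s_*,\theta)^\top\left.\frac{\partial \mathcal{L}_t}{\partial s_0}\right|_{s_*}$, together with the base case $\left.\frac{\partial \mathcal{L}_0}{\partial s_0}\right|_{s_*}=\frac{\partial\ell}{\partial s}(s_*,y)$. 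This is exactly the recurrence and initial condition of Eqs.~(\ref{eq:rbp-init})--(\ref{eq:rbp-state}) defining $\nabla_s^{\rm RBP}(t)$, so by induction $\left.\frac{\partial \mathcal{L}_t}{\partial s_0}\right|_{s_*}=\nabla_s^{\rm RBP}(t)$ for all $t\ge 0$.

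The parameter statement then follows immediately. For $t\ge 1$, $\mathcal{L}_t$ depends on $\theta_0$ only through $s_1=F(x,s_0,\theta_0)$, with $\mathcal{L}_t=\widetilde{\mathcal L}_{t-1}(s_1)$; the chain rule therefore gives $\frac{\partial \mathcal{L}_t}{\partial\theta_0}=\frac{\partial F}{\partial\theta}(x,s_0,\theta_0)^\top\frac{\partial\widetilde{\mathcal L}_{t-1}}{\partial\sigma}(s_1)$, and evaluating at $s_0=s_*,\ \theta_0=\theta$ together with the previous paragraph turns the right-hand side into $\frac{\partial F}{\partial\theta}(x,s_*,\theta)^\top\nabla_s^{\rm RBP}(t-1)=\nabla_\theta^{\rm RBP}(t)$, matching Eq.~(\ref{eq:rbp-param}); for $t=0$ the cost $\mathcal{L}_0=\ell(s_0,y)$ does not involve $\theta_0$, and both sides vanish. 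The only genuinely delicate point I anticipate is this parameter-copy bookkeeping — making sure, exactly as in the definition of $\frac{\partial\mathcal L}{\partial\theta_t}$ in Section~\ref{sec:bptt}, that differentiating the projected cost with respect to $\theta_0$ picks up only the first transition; everything else is a routine application of the chain rule, with the collapse of all state Jacobians to $\frac{\partial F}{\partial s}(x,s_*,\theta)$ at the fixed point being what produces the RBP-style iteration.
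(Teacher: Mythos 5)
Your proof is correct and follows essentially the same route as the paper's: both show that $\left.\frac{\partial \mathcal{L}_t}{\partial s_0}\right|_{s_0=s_*}$ satisfies the initial condition and recurrence of Definition~\ref{def:rbp} via the chain rule applied to $\mathcal{L}_{t+1}$ factored through $s_1=F(x,s_0,\theta)$, with the fixed-point property collapsing the Jacobian to $\frac{\partial F}{\partial s}(x,s_*,\theta)$ and time-homogeneity giving $\left.\frac{\partial\mathcal{L}_{t+1}}{\partial s_1}\right|_{s_1=s_*}=\left.\frac{\partial\mathcal{L}_t}{\partial s_0}\right|_{s_0=s_*}$, then conclude by induction. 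Your version merely spells out the parameter-copy bookkeeping and the $\theta_0$ case (including $t=0$) that the paper dismisses as ``analogous.''
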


\begin{proof}[Proof of Proposition \ref{prop:rbp}]
First of all, by Definition \ref{def:rbp} (Eq.~\ref{eq:rbp-init}-\ref{eq:rbp-param})
it is straightforward to see that

\begin{align}
\forall t \geq 0,    \qquad \nabla_s^{\rm RBP}(t) & = \left( \frac{\partial F}{\partial s} \left( x,s_*,\theta \right)^\top \right)^t \cdot \frac{\partial \ell}{\partial s} \left( s_*,y \right), \\
\forall t \geq 1,    \qquad \nabla_\theta^{\rm RBP}(t) & = \frac{\partial F}{\partial \theta} \left( x,s_*,\theta \right)^\top \cdot \left( \frac{\partial F}{\partial s} \left( x,s_*,\theta \right)^\top \right)^{t-1} \cdot \frac{\partial \ell}{\partial s} \left( s_*,y \right).
\end{align}

Second, recall that the loss ${\mathcal L}^*$ is
\begin{equation}
    \label{eq:loss-rbp}
	{\mathcal L^*} = \ell \left( s_*, y \right),
\end{equation}
where
\begin{equation}
    \label{eq:steady-state}
    s_* = F \left( x,s_*,\theta \right).
\end{equation}
By the chain rule of differentiation, the gradient of ${\mathcal L^*}$ (Eq.~\ref{eq:loss-rbp}) is
\begin{equation}
\frac{\partial {\mathcal L^*}}{\partial \theta} = \frac{\partial \ell}{\partial s} \left( s_*,y \right) \cdot \frac{\partial s_*}{\partial \theta}.
\end{equation}
In order to compute $\frac{\partial s_*}{\partial \theta}$, we differentiate the steady state condition (Eq.~\ref{eq:steady-state}) with respect to $\theta$, which yields
\begin{equation}
    \frac{\partial s_*}{\partial \theta} = \frac{\partial F}{\partial s} \left( x,s_*,\theta \right) \cdot \frac{\partial s_*}{\partial \theta} + \frac{\partial F}{\partial \theta} \left( x,s_*,\theta \right).
\end{equation}
Rearranging the terms, and using the Taylor expansion $\left( {\rm Id} - A \right)^{-1} = \sum_{t=0}^\infty A^t$ with $A = \frac{\partial F}{\partial s} \left( x,s_*,\theta \right)$, we get
\begin{align}
    \frac{\partial s_*}{\partial \theta} & = \left( {\rm Id} - \frac{\partial F}{\partial s} \left( x,s_*,\theta \right) \right)^{-1} \cdot \frac{\partial F}{\partial \theta} \left( x,s_*,\theta \right) \\
     & = \sum_{t=0}^\infty \left( \frac{\partial F}{\partial s} \left( x,s_*,\theta \right) \right)^t \cdot \frac{\partial F}{\partial \theta} \left( x,s_*,\theta \right).
\end{align}
Therefore
\begin{align}
    \frac{\partial {\mathcal L^*}}{\partial \theta} & = \frac{\partial \ell}{\partial s} \left( s_*,y \right) \cdot \frac{\partial s_*}{\partial \theta} \\
     & = \sum_{t=0}^\infty \frac{\partial \ell}{\partial s} \left( s_*,y \right) \cdot \left( \frac{\partial F}{\partial s} \left( x,s_*,\theta \right) \right)^t \cdot \frac{\partial F}{\partial \theta} \left( x,s_*,\theta \right) \\
      & = \sum_{t=0}^\infty \nabla_\theta^{\rm RBP}(t).
\end{align}
\end{proof}

\begin{proof}[Proof of Proposition \ref{prop:rbp-gradients}]
By the chain rule of differentiation we have
\begin{equation}
    \frac{\partial {\mathcal L}_{t+1}}{\partial s_0} = \frac{\partial F}{\partial s} \left( x,s_0,\theta \right)^\top \cdot \frac{\partial {\mathcal L}_{t+1}}{\partial s_1}.
\end{equation}
Evaluation this expression for $s_0=s_*$ we get
\begin{equation}
    \left. \frac{\partial {\mathcal L}_{t+1}}{\partial s_0} \right|_{s_0=s_*} = \frac{\partial F}{\partial s} \left( x,s_*,\theta \right)^\top \cdot \left. \frac{\partial {\mathcal L}_{t+1}}{\partial s_1} \right|_{s_0=s_*}.
\end{equation}
Finally note that
\begin{equation}
    \left. \frac{\partial {\mathcal L}_{t+1}}{\partial s_1} \right|_{s_0=s_*} =
    \left. \frac{\partial {\mathcal L}_{t+1}}{\partial s_1} \right|_{s_1=s_*} =
    \left. \frac{\partial {\mathcal L}_t}{\partial s_0} \right|_{s_0=s_*}
\end{equation}
Therefore $\left. \frac{\partial {\mathcal L}_t}{\partial s_0} \right|_{s_0=s_*}$ and $\nabla_s^{\rm RBP}(t)$ satisfy the same recurrence relation, thus they are equal.
Proving the equality of $\left. \frac{\partial {\mathcal L}_t}{\partial \theta_0} \right|_{s_0=s_*}$ and $\nabla_\theta^{\rm RBP}(t)$ is analogous.
\end{proof}

\section{Illustrating the equivalence of the four algorithms on an analytically tractable model}
\label{appendix:ana-toymodel}


\paragraph{Model.}
To illustrate the equivalence of the four algorithms (BPTT, RBP, EP and CEP), we study a simple model with scalar variable $s$ and scalar parameter $\theta$:
\begin{equation}
   s_0 = 0, \qquad s_{t+1} = \frac{1}{2} \left( s_t + \theta \right), \qquad {\mathcal L}^* = \frac{1}{2} s_*^2,
\end{equation}
where $s_*$ is the steady state of the dynamics (it is easy to see that the solution is $s_* = \theta$).
The dynamics rewrites $s_{t+1} = F \left( s_t,\theta \right)$ with the transition function $F \left( s,\theta \right) = \frac{1}{2} \left( s + \theta \right)$, and the loss rewrites ${\mathcal L}^* = \ell \left( s_* \right)$ with the cost function $\ell(s) = \frac{1}{2} s^2$.
Furthermore, a primitive function of the system
\footnote{The primitive function $\Phi$ is determined up to a constant.}
is $\Phi(s,\theta) = \frac{1}{4} (s +\theta)^2$.
This model has no practical application ; it is only meant for pedagogical purpose.

\paragraph{Backpropagation Through Time (BPTT).}
With BPTT, an important point is that we approximate the steady state $s_*$ by the state after $T$ time steps $s_T$, and we approximate ${\mathcal L}^*$ (the loss at the steady state) by the loss after $T$ time steps ${\mathcal L} = \ell \left( s_T \right)$.

In order to compute (i.e. `backpropagate') the gradients of BPTT, Proposition \ref{prop:bptt} tells us that we need to compute $\frac{\partial \ell}{\partial s}\left( s_T \right) = s_T$, $\frac{\partial F}{\partial s}\left( s_t,\theta \right) = \frac{1}{2}$ and $\frac{\partial F}{\partial \theta}\left( s_t,\theta \right) = \frac{1}{2}$.
We get
\begin{equation}
   \forall t=0,1,\ldots,T-1, \qquad \nabla_s^{\rm BPTT}(t) = \frac{s_T}{2^t}, \qquad \nabla_\theta^{\rm BPTT}(t) = \frac{s_T}{2^{t+1}}.
\end{equation}

\paragraph{Recurrent Backpropagation (RBP).}

Similarly, to compute the gradients of RBP, Definition \ref{def:rbp} tells us that we need to compute $\frac{\partial \ell}{\partial s}\left( s_* \right) = s_*$, $\frac{\partial F}{\partial s}\left( s_*,\theta \right) = \frac{1}{2}$ and $\frac{\partial F}{\partial \theta}\left( s_*,\theta \right) = \frac{1}{2}$.
We have
\begin{equation}
   \forall t \geq 0, \qquad \nabla_s^{\rm RBP}(t) = \frac{s_*}{2^t}, \qquad \nabla_\theta^{\rm RBP}(t) = \frac{s_*}{2^{t+1}}.
\end{equation}
The state after $T$ time steps in BPTT converges to the steady state $s_*$ as $T \to \infty$, therefore the gradients of BPTT converge to the gradients of RBP.
Also notice that the steady state of the dynamics is $s_* = \theta$.

\paragraph{Equilibrium Propagation (EP).}
Following the equations governing the second phase of EP (Fig.~\ref{alg:EP-CEP}), we have:
\begin{equation}
    s_0^\beta = \theta, \qquad s_{t+1}^\beta = \left( \frac{1}{2} - \beta \right) s_t^\beta + \frac{1}{2} \theta.
\end{equation}
This linear dynamical system can be solved analytically:
\begin{equation}
    \forall t \geq 0, \qquad s_t^\beta = \frac{\theta}{1 + 2 \beta} \left( 1 + 2 \beta \left( \frac{1}{2} - \beta \right)^t \right).
\end{equation}
Notice that $s_t^\beta \to \theta$ as $\beta \to 0$ ; for small values of the hyperparameter $\beta$, the trajectory in the second phase is close to the steady state $s_* = \theta$.

Using Eq.~\ref{eq:ep-updates}, it follows that the normalized updates of EP are
\begin{equation}
    \forall t \geq 0, \qquad
   \Delta_s^{\rm EP}(\beta,t) = - \frac{\theta}{2^t} \left( 1 - 2\beta \right)^t, \qquad \Delta_\theta^{\rm EP}(\beta,t) = - \frac{\theta}{2^{t + 1}} \left(1 - 2\beta \right)^t.
\end{equation}
Notice again that the normalized updates of EP converge to the gradients of RBP as $\beta \to 0$.

\paragraph{Continual Equilibrium Propagation (C-EP).} The system of equations governing the system is:
\begin{equation}
\left\{
\begin{array}{l}
    \displaystyle s_0^{\beta,\eta} = s_*, \\
    \displaystyle \theta_0^{\beta,\eta} = \theta,
\end{array}
\right.
\qquad
\forall t \geq 0:
\left\{
\begin{array}{l}
    \displaystyle s_{t+1}^{\beta,\eta} = \left( \frac{1}{2} -\beta \right) s_t^{\beta,\eta} + \frac{1}{2} \theta_t^{\beta,\eta}, \\
    \displaystyle \theta_{t+1}^{\beta,\eta} = \theta_t^{\beta,\eta} + \frac{\eta}{2 \beta} \left( s_{t+1}^{\beta,\eta} - s_t^{\beta,\eta} \right).
\end{array}
\right.
\label{sys}
\end{equation}

First, rearranging the terms in the second equation, we get
\begin{equation}
    \frac{1}{\eta} \left( \theta_{t+1}^{\beta,\eta} - \theta_t^{\beta,\eta} \right) =  \frac{1}{2 \beta} \left( s_{t+1}^{\beta,\eta} - s_t^{\beta,\eta} \right).
\end{equation}
It follows that
\begin{equation}
    \Delta_\theta^{\rm C-EP}(\beta,\eta,t) =  \frac{1}{2} \Delta_s^{\rm C-EP}(\beta,\eta,t).
\end{equation}
Therefore, all we need to do is to compute $\Delta_s^{\rm C-EP}(\beta,\eta,t)$.
Second, by iterating the second equation over all indices from $t=0$ to $t-1$ we get
\begin{equation}
    \theta_t^{\beta,\eta} = \theta + \frac{\eta}{2 \beta} \left( s_t^{\beta,\eta} - s_* \right).
\end{equation}
Using $s_* = \theta$ and plugging this into the first equation we get
\begin{equation}
    s_{t+1}^{\beta,\eta} = \left( \frac{1}{2} -\beta + \frac{\eta}{4 \beta} \right) s_t^{\beta,\eta} + \left( \frac{1}{2} - \frac{\eta}{4 \beta} \right) \theta.
\end{equation}

Solving this linear dynamical system, and using the initial condition $s_0^{\beta,\eta} = \theta$ we get
\begin{equation}
    s_t^{\beta,\eta} = \frac{\theta}{1 - \frac{\eta}{2\beta} + 2\beta}\left [1 - \frac{\eta}{2\beta}  + 2\beta \left(\frac{1}{2}\right)^t\left(1-2\beta + \frac{\eta}{2\beta}\right)^t \right]
\end{equation}

Finally:
\begin{equation}
    \Delta_s^{\rm C-EP}(\beta,\eta,t) = - \frac{\theta}{2^t}\left(1 - 2\beta + \frac{\eta}{2\beta}\right)^t
\end{equation}

\newpage
\section{Complement on Gradient-Descending Dynamics (GDD)}

Step-by-step equivalence of the dynamics of EP and gradient computation in BPTT was shown in \citet{ernoult2019updates}
and was refered to as the Gradient-Descending Updates (GDU) property.
In this appendix, we first explain the connection between the GDD property of this work and the GDU property of \citet{ernoult2019updates}.
Then we prove another version of the GDD property (Theorem \ref{thm:generalisation} below), more general than Theorem \ref{thm:gdd}.

\subsection{Gradient-Descending Updates (GDU) of \citet{ernoult2019updates}}

The GDU property of \citet{ernoult2019updates} states that the (normalized) updates of EP are equal to the gradients of BPTT.
Similarly, the Gradient-Descending Dynamics (GDD) property of this work states that the normalized updates of C-EP are equal to the gradients of BPTT.
The difference between the GDU property and the GDD property is that the term `update' has slightly different meanings in the contexts of EP and C-EP.
In C-EP, the `updates' are the \emph{effective} updates by which the neuron and synapses are being dynamically updated throughout the second phase.
In contrast in EP, the `updates' are effectively performed at the end of the second phase.

\subsection{A generalisation of the GDD property}
\label{subsec:gen-gdd}

The \emph{Gradient Descending Dynamics} property (GDD, Theorem~\ref{thm:gdd}) states that, when the system dynamics derive from a primitive function, i.e. when the transition function $F$ is of the form $F = \frac{\partial \Phi}{\partial s}$, then the normalized updates of C-EP match the gradients provided by BPTT.
Remarkably, even in the case of the C-VF dynamics that do not derive from a primitive function $\Phi$, Fig.~\ref{fig:ep-to-cvf} shows that the biologically realistic update rule of C-VF follows well the gradients of BPTT.
More illustrations of this property are shown on Fig.~\ref{gdu:vf-cvf-disc-s} and Fig.~\ref{gdu:vf-cvf-disc-w}.
In this section we give a theoretical justification for this fact by proving a more general result than Theorem~\ref{thm:gdd}.

First, recall the dynamics of the C-VF model. In the first phase:
\begin{equation}
    s_{t+1} = \sigma \left( W \cdot s_t \right),
\end{equation}
where $\sigma$ is an activation function and $W$ is a square weight matrix.
In the second phase, starting from $s_0^{\beta,\eta} = s_*$ and $W_0^{\beta,\eta} = W$, the dynamics read:
\begin{equation}
\forall t \geq 0: \qquad 
\left\{
\begin{array}{l}
 \displaystyle s_{t+1}^{\beta,\eta} = \sigma \left( W_t^{\beta,\eta} \cdot s_t^{\beta,\eta} \right) - \beta \; \frac{\partial \ell}{\partial s} \left( s_t^{\beta,\eta} \right), \\
 \displaystyle W_{t+1}^{\beta,\eta} = W_t^{\beta,\eta} + \frac{\eta}{\beta} s_t^{{\beta,\eta}^\top} \cdot \left( s_{t+1}^{\beta,\eta} -  s_t^{\beta,\eta} \right).
\end{array}
\right.
\label{eq:c-vf-2nd-phase}
\end{equation}

Now let us define the transition function $F(s,W) = \sigma(W \cdot s)$,
so that the dynamics of the first phase rewrites
\begin{equation}
    s_{t+1} = F \left( s_t,W \right).
\end{equation}
As for the second phase, notice that $\frac{\partial F}{\partial W}(s,W) = \sigma'(W \cdot s) \cdot s$,
so that if we ignore the factor $\sigma'(W \cdot s)$, Eq.~(\ref{eq:c-vf-2nd-phase}) rewrites
\begin{equation}
\forall t \geq 0: \qquad 
\left\{
\begin{array}{l}
 \displaystyle s_{t+1}^{\beta,\eta} = F \left( s_t^{\beta,\eta},W_t^{\beta,\eta} \right) - \beta \; \frac{\partial \ell}{\partial s} \left( s_t^{\beta,\eta} \right), \\
 \displaystyle W_{t+1}^{\beta,\eta} = W_t^{\beta,\eta} + \frac{\eta}{\beta} \frac{\partial F}{\partial W}\left( s_t^{\beta,\eta},W_t^{\beta,\eta} \right)^\top  \cdot \left( s_{t+1}^{\beta,\eta} -  s_t^{\beta,\eta} \right).
\end{array}
\right.
\end{equation}

Now, recall the definition of the normalized updates of C-VF, as well as the gradients of the loss ${\mathcal L} = \ell \left( s_T, y \right)$ after $T$ time steps, computed with BPTT:
\begin{equation}
\left\{
\begin{array}{l}
    \displaystyle \Delta_s^{\rm C-VF}(\beta,\eta,t) = \frac{1}{\beta} \left( s_{t+1}^{\beta,\eta}-s_t^{\beta,\eta} \right), \\
   \displaystyle \Delta_W^{\rm C-VF}(\beta,\eta,t) = \frac{1}{\eta} \left( W_{t+1}^{\beta,\eta}-W_t^{\beta,\eta} \right),
\end{array}
\right.
\qquad
\left\{
\begin{array}{l}
   \displaystyle \nab{s}(t) = \frac{\partial {\mathcal L}}{\partial s_{T-t}},\\
 \displaystyle \nab{W}(t)  = \frac{\partial {\mathcal L}}{\partial W_{T-t}}.
\end{array}
\right.
\end{equation}

The loss ${\mathcal L}$ and the gradients $\nab{s}(t)$ and $\nab{\theta}(t)$ are defined formally in Appendix \ref{sec:bptt}.

\begin{thm}[Generalisation of the GDD Property]
\label{thm:generalisation}
Let $s_0, s_1, \ldots, s_T$ be the convergent sequence of states and denote $s_* = s_T$ the steady state.
Further assume that there exists some step $K$ where $0 < K \leq T$ such that $s_* = s_T = s_{T-1} = \ldots s_{T-K}$.
Finally, assume that the Jacobian of the transition function at the steady state is symmetric, i.e. $\frac{\partial F}{\partial s} \left( s_*,W \right) = \frac{\partial F}{\partial s} \left( s_*,W \right)^\top$.
Then, in the limit $\eta \to 0$ and $\beta \to 0$, the first $K$ normalized updates of C-VF follow the the first $K$ gradients of BPTT, i.e.
\begin{equation}
\forall t=0,1,\ldots,K: \quad
\left\{
\begin{array}{l}
   \displaystyle \lim_{\beta \to 0} \lim_{\eta \to 0}\Delta^{\rm C-VF}_{s}(\beta,\eta,t) = - \nab{s}(t),\\
 \displaystyle \lim_{\beta \to 0} \lim_{\eta \to 0}\Delta^{\rm C-VF}_{W}(\beta,\eta,t) = - \nab{W}(t).
\end{array}
\right.
\end{equation}
\end{thm}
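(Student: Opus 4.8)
The plan is to follow the same three-stage route as for Theorem~\ref{thm:gdd}, with ``EP'' replaced by its vector-field variant throughout: first reduce C-VF to the vector-field second phase with frozen weights by letting $\eta\to 0$; then reduce that to RBP by letting $\beta\to 0$; and finally invoke Lemma~\ref{lma:bptt-rbp} to pass from RBP to BPTT. Only the middle stage departs from the symmetric case of Theorem~\ref{thm:gdd}, and it is precisely there that the hypothesis $\frac{\partial F}{\partial s}(s_*,W)=\frac{\partial F}{\partial s}(s_*,W)^\top$ enters.

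\emph{Stage 1 ($\eta\to 0$).} This is a verbatim transcription of the argument for Lemma~\ref{lma:ep-cep}. For fixed $t$ and $\beta$, an induction on $t$ using continuous differentiability of $F$ and $\ell$ shows that $s_t^{\beta,\eta}$ and $W_t^{\beta,\eta}$ are continuous in $\eta$; since the $1/\eta$ prefactor in $\Delta_W^{\rm C-VF}$ cancels the $\eta/\beta$ of the update rule~(\ref{eq:c-vf-2nd-phase}), both $\Delta_s^{\rm C-VF}(\beta,\eta,t)=\frac1\beta(s_{t+1}^{\beta,\eta}-s_t^{\beta,\eta})$ and $\Delta_W^{\rm C-VF}(\beta,\eta,t)=\frac1\beta\frac{\partial F}{\partial W}(s_t^{\beta,\eta},W_t^{\beta,\eta})^\top(s_{t+1}^{\beta,\eta}-s_t^{\beta,\eta})$ are continuous in $\eta$ too. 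At $\eta=0$ the update rule gives $W_t^{\beta,0}=W$ for all $t$, hence $s_t^{\beta,0}$ satisfies the same recursion as the $\beta$-nudged trajectory $s_t^\beta$ with weights held at $W$. Passing to the limit,
\begin{equation}
\lim_{\eta\to 0}\Delta_s^{\rm C-VF}(\beta,\eta,t)=\tfrac1\beta(s_{t+1}^\beta-s_t^\beta),\qquad
\lim_{\eta\to 0}\Delta_W^{\rm C-VF}(\beta,\eta,t)=\tfrac1\beta\tfrac{\partial F}{\partial W}(s_t^\beta,W)^\top(s_{t+1}^\beta-s_t^\beta).
\end{equation}

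\emph{Stage 2 ($\beta\to 0$).} Since $s_*$ is a fixed point of $F(\cdot,W)$ and $s_0^\beta=s_*$, the $\beta=0$ trajectory is stationary, $s_t^0=s_*$. Writing $s_t^\beta=s_*+\beta\,\widehat s_t+o(\beta)$ and differentiating $s_{t+1}^\beta=F(s_t^\beta,W)-\beta\,\frac{\partial\ell}{\partial s}(s_t^\beta)$ at $\beta=0$ gives, with $J=\frac{\partial F}{\partial s}(s_*,W)$ and $e=\frac{\partial\ell}{\partial s}(s_*)$, the linear recursion $\widehat s_0=0$, $\widehat s_{t+1}=J\widehat s_t-e$; hence $\widehat s_t=-\sum_{k=0}^{t-1}J^k e$ and, telescoping, $\widehat s_{t+1}-\widehat s_t=-J^t e$. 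Therefore $\frac1\beta(s_{t+1}^\beta-s_t^\beta)\to -J^t e$ and $\frac1\beta\frac{\partial F}{\partial W}(s_t^\beta,W)^\top(s_{t+1}^\beta-s_t^\beta)\to -\frac{\partial F}{\partial W}(s_*,W)^\top J^t e$ as $\beta\to 0$. Since $\nabla_s^{\rm RBP}(t)=(J^\top)^t e$ and $\nabla_W^{\rm RBP}(t)$ carries the analogous power of $J^\top$ (Definition~\ref{def:rbp}), these limits equal $-\nabla_s^{\rm RBP}(t)$ and $-\nabla_W^{\rm RBP}(t)$ \emph{exactly when $J=J^\top$}, which is the standing hypothesis. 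This is the vector-field counterpart of Lemma~\ref{lma:ep-rbp} (where $J=\partial^2\Phi/\partial s^2$ is automatically symmetric) and is in essence the gradient property of \citet{scellier2018generalization}; note it uses only that $s_*$ is a steady state, not the stronger ``$s_{T-K}=\cdots=s_*$'' assumption, and it holds for all $t\geq 0$.

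\emph{Stage 3 and conclusion.} Under $s_{T-K}=\cdots=s_T=s_*$, Lemma~\ref{lma:bptt-rbp} gives $\nab{s}(t)=\nabla_s^{\rm RBP}(t)$ and $\nab{W}(t)=\nabla_W^{\rm RBP}(t)$ for $t=0,\ldots,K$. Composing the iterated limit of Stages~1--2 with this identity yields $\lim_{\beta\to 0}\lim_{\eta\to 0}\Delta_s^{\rm C-VF}(\beta,\eta,t)=-\nab{s}(t)$ and $\lim_{\beta\to 0}\lim_{\eta\to 0}\Delta_W^{\rm C-VF}(\beta,\eta,t)=-\nab{W}(t)$ for every $t\le K$, as claimed. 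The one new point --- and the step I expect to be the crux --- is Stage~2: showing that the local, ``naive-Hebbian'' continual update $\frac{\partial F}{\partial W}(s_t,W_t)^\top(s_{t+1}-s_t)$ reconstructs the transpose-Jacobian recursion that RBP and BPTT propagate, which succeeds exactly when the steady-state Jacobian is symmetric; this is also what quantitatively accounts for the error increase in Fig.~\ref{fig:ep-to-cvf} as the forward/backward weights are pulled apart. Stages~1 and~3 are routine re-uses of Lemmas~\ref{lma:ep-cep} and~\ref{lma:bptt-rbp}.
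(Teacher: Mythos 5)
Your proof is correct in substance and pivots on exactly the same observation as the paper's: in the double limit the C-VF updates obey a \emph{forward} recursion driven by $J=\frac{\partial F}{\partial s}(s_*,W)$, while the BPTT/RBP gradients obey the adjoint recursion driven by $J^\top$, and the symmetry hypothesis is what makes the two coincide. The organization differs, though. The paper proves Theorem~\ref{thm:generalisation} by juxtaposing two recurrence relations --- Proposition~\ref{prop:bptt} for $\nabla^{\rm BPTT}$ and Lemma~\ref{lemma:ep} for the double-limit updates $\Delta^{\rm C-VF}(t)=\lim_{\beta\to0}\lim_{\eta\to0}\Delta^{\rm C-VF}(\beta,\eta,t)$ --- and matching them term by term at the steady state; the proof of Lemma~\ref{lemma:ep} itself is deferred (``similar to the one provided in \citet{ernoult2019updates}''). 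You instead route explicitly through RBP: a verbatim replay of Lemma~\ref{lma:ep-cep} for $\eta\to0$, then an explicit first-order expansion $s_t^\beta=s_*+\beta\widehat s_t+o(\beta)$ giving $\widehat s_{t+1}-\widehat s_t=-J^te$, then Lemma~\ref{lma:bptt-rbp}. What your version buys is that it actually supplies the perturbative computation the paper leaves implicit, and it cleanly isolates the single step where symmetry of $J$ is consumed (your Stage~2), confirming that the steady-state assumption $s_{T-K}=\cdots=s_*$ is only needed for the RBP-to-BPTT leg. One small caution: your claim that $\nabla_W^{\rm RBP}(t)$ ``carries the analogous power of $J^\top$'' glosses over the fact that Definition~\ref{def:rbp} gives $\nabla_W^{\rm RBP}(t)=\frac{\partial F}{\partial W}(s_*,W)^\top(J^\top)^{t-1}e$ while your limit of $\Delta_W^{\rm C-VF}(t)$ carries $J^t$ --- an apparent off-by-one in the weight component. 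This shift is already present between the paper's literal definition $\nabla_\theta^{\rm BPTT}(t)=\partial\mathcal L/\partial\theta_{T-t}$ (under the convention $s_{k+1}=F(x,s_k,\theta_{k+1})$, which makes your indexing exact) and the recurrences as written in Proposition~\ref{prop:bptt} and Definition~\ref{def:rbp}, so it is a bookkeeping artifact of the paper's conventions rather than a flaw in your argument, but it deserves one explicit line fixing which convention you adopt for the $W$-component.
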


A few remarks need to be made:
\begin{enumerate}
    \item Observe that
\begin{equation}
    \frac{\partial F}{\partial s}(s,W) = \sigma'(W \cdot s) \cdot W^\top.
\end{equation}
Ignoring the factor $\sigma'(W \cdot s)$, we see that if $W$ is symmetric then the Jacobian of $F$ is also symmetric, in which case the conditions of Theorem~\ref{thm:generalisation} are met.
    \item Theorem \ref{thm:gdd} is a special case of Theorem \ref{thm:generalisation}. To see why, notice that if the transition function $F$ is of the form $F(s,W) = \frac{\partial \Phi}{\partial s}(s,W)$, then
\begin{equation}
    \frac{\partial F}{\partial s}(s,W) = \frac{\partial^2 \Phi}{\partial s^2}(s,W) = \frac{\partial F}{\partial s}(s,W)^\top
\end{equation}
In this case the extra assumption in Theorem \ref{thm:generalisation} is automatically satisfied.
\end{enumerate}

\subsection{Proof of Theorem \ref{thm:generalisation}}

Theorem \ref{thm:generalisation} is a consequence of Proposition \ref{prop:bptt} (Appendix \ref{sec:bptt}), which we recall here, and Lemma \ref{lemma:ep} below.

\propbptt*

\begin{restatable}[Updates of C-VF]{lma}{propep}
	\label{lemma:ep}
    Define the (normalized) neural and weight updates of C-VF in the limit $\eta \to 0$ and $\beta \to 0$:
    \begin{equation}
    \forall t=0,1,\ldots,K: \quad
    \left\{
    \begin{array}{l}
       \displaystyle \Delta^{\rm C-VF}_s(t) = \lim_{\beta \to 0} \lim_{\eta \to 0} \Delta^{\rm C-VF}_s(\beta,\eta,t),\\
     \displaystyle \Delta^{\rm C-VF}_\theta(t) = \lim_{\beta \to 0} \lim_{\eta \to 0}  \Delta^{\rm C-VF}_\theta(\beta,\eta,t).
    \end{array}
    \right.
    \end{equation}
    They satisfy the recurrence relationship
	\begin{align}
	    \label{eq:ep-init}
	    \Delta^{\rm C-VF}_s(0)                   & = - \frac{\partial \ell}{\partial s}        \left( s_*,y \right), \\
	    \label{eq:ep-state}
	    \forall t \geq 0, \qquad \Delta^{\rm C-VF}_s(t+1)      & = \frac{\partial F}{\partial s}      \left( x,s_*,\theta \right)   \cdot \Delta^{\rm C-VF}_s(t), \\
	    \label{eq:ep-weight}
	    \forall t \geq 0, \qquad \Delta^{\rm C-VF}_\theta(t+1) & = \frac{\partial F}{\partial \theta} \left( x,s_*,\theta \right)^\top \cdot \Delta^{\rm C-VF}_s(t).
	\end{align}
\end{restatable}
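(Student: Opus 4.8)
The plan is to prove Lemma~\ref{lemma:ep} as the vector-field counterpart of the two-step limiting argument already used for standard C-EP: first send $\eta\to 0$, exactly as in Lemma~\ref{lma:ep-cep} (Appendix~\ref{sec:cep-ep}), which freezes the weights during the second phase, and then send $\beta\to 0$, as in the EP$\to$RBP equivalence of \citet{scellier2019equivalence} (Lemma~\ref{lma:ep-rbp}), which linearises the second-phase trajectory around the steady state. I work with the C-VF second-phase dynamics written through the transition function, $s_{t+1}^{\beta,\eta}=F(x,s_t^{\beta,\eta},\theta_t^{\beta,\eta})-\beta\,\frac{\partial\ell}{\partial s}(s_t^{\beta,\eta},y)$ and $\theta_{t+1}^{\beta,\eta}=\theta_t^{\beta,\eta}+\frac{\eta}{\beta}\,\frac{\partial F}{\partial\theta}(x,s_t^{\beta,\eta},\theta_t^{\beta,\eta})^{\top}(s_{t+1}^{\beta,\eta}-s_t^{\beta,\eta})$, with $s_0^{\beta,\eta}=s_*$ and $\theta_0^{\beta,\eta}=\theta$, of which Eq.~(\ref{eq:c-vf-2nd-phase}) is the special case $\theta=W$ (modulo the factor $\sigma'$, as in Appendix~\ref{subsec:gen-gdd}).

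First I would take $\eta\to 0$. Under the same continuous-differentiability assumptions on $F$ and $\ell$ as in Lemma~\ref{lma:ep-cep}, for fixed $t,\beta$ the quantities $s_t^{\beta,\eta}$ and $\theta_t^{\beta,\eta}$ are continuous in $\eta$; setting $\eta=0$ in the weight recurrence gives $\theta_t^{\beta,0}=\theta$ for all $t$, and substituting this into the state recurrence identifies $s_t^{\beta,0}$ with the fixed-weight trajectory $s_t^{\beta}$ defined by $s_0^{\beta}=s_*$, $s_{t+1}^{\beta}=F(x,s_t^{\beta},\theta)-\beta\,\frac{\partial\ell}{\partial s}(s_t^{\beta},y)$. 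Hence $\lim_{\eta\to 0}\Delta_s^{\rm C-VF}(\beta,\eta,t)=\frac{1}{\beta}(s_{t+1}^{\beta}-s_t^{\beta})$; and rewriting $\frac{1}{\eta}(\theta_{t+1}^{\beta,\eta}-\theta_t^{\beta,\eta})=\frac{1}{\beta}\,\frac{\partial F}{\partial\theta}(x,s_t^{\beta,\eta},\theta_t^{\beta,\eta})^{\top}(s_{t+1}^{\beta,\eta}-s_t^{\beta,\eta})$, which is continuous at $\eta=0$, gives $\lim_{\eta\to 0}\Delta_\theta^{\rm C-VF}(\beta,\eta,t)=\frac{1}{\beta}\,\frac{\partial F}{\partial\theta}(x,s_t^{\beta},\theta)^{\top}(s_{t+1}^{\beta}-s_t^{\beta})$.

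Then I would take $\beta\to 0$. Since $F(x,s_*,\theta)=s_*$, the $\beta=0$ dynamics satisfy $s_t^{\beta}|_{\beta=0}=s_*$ for every $t$, so by regularity $s_t^{\beta}=s_*+\beta\,\delta_t+o(\beta)$ with $\delta_t:=\partial_\beta s_t^{\beta}|_{\beta=0}$; differentiating the recurrence in $\beta$ at $\beta=0$ yields the linear recursion $\delta_0=0$, $\delta_{t+1}=\frac{\partial F}{\partial s}(x,s_*,\theta)\,\delta_t-\frac{\partial\ell}{\partial s}(s_*,y)$. It follows that $\Delta_s^{\rm C-VF}(t)=\delta_{t+1}-\delta_t$ and, using $s_t^{\beta}\to s_*$, that $\Delta_\theta^{\rm C-VF}(t)=\frac{\partial F}{\partial\theta}(x,s_*,\theta)^{\top}(\delta_{t+1}-\delta_t)$. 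Finally I would verify Eq.~(\ref{eq:ep-init})--(\ref{eq:ep-weight}): the base case is $\Delta_s^{\rm C-VF}(0)=\delta_1=-\frac{\partial\ell}{\partial s}(s_*,y)$; since the inhomogeneous term $-\frac{\partial\ell}{\partial s}(s_*,y)$ does not depend on $t$, subtracting consecutive instances of the $\delta$-recursion gives $\Delta_s^{\rm C-VF}(t+1)=\frac{\partial F}{\partial s}(x,s_*,\theta)\,\Delta_s^{\rm C-VF}(t)$; and the weight recursion (\ref{eq:ep-weight}) follows by applying $\frac{\partial F}{\partial\theta}(x,s_*,\theta)^{\top}$ to $\Delta_s^{\rm C-VF}$.

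I expect the main obstacle to be the order of the two limits: $\lim_{\beta\to 0}\lim_{\eta\to 0}$ is genuinely different from other joint limits — the analytic model of Appendix~\ref{appendix:ana-toymodel} already exhibits the factor $(1-2\beta+\tfrac{\eta}{2\beta})^t$, which misbehaves unless $\eta/\beta$ is driven to $0$ first — so that order must be respected, and one must check that dividing the $O(\beta)$ increment $s_{t+1}^{\beta}-s_t^{\beta}$ by $\beta$ and letting $\beta\to 0$ truly recovers $\delta_{t+1}-\delta_t$, which is exactly where the regularity of $F$ and $\ell$ enters. With Lemma~\ref{lemma:ep} in hand, Theorem~\ref{thm:generalisation} then follows by matching its forward recursion Eq.~(\ref{eq:ep-init})--(\ref{eq:ep-weight}) term by term against the backward recursion of Proposition~\ref{prop:bptt}, using the assumed symmetry $\frac{\partial F}{\partial s}(x,s_*,\theta)=\frac{\partial F}{\partial s}(x,s_*,\theta)^{\top}$ together with $s_{T-t}=s_*$ for $t\le K$.
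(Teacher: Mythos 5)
Your two-limit strategy is exactly the intended one: the paper does not actually write out a proof of Lemma~\ref{lemma:ep} (it only remarks that the argument is ``similar to the one provided in \citet{ernoult2019updates}''), and what you supply is the natural instantiation of that remark --- the $\eta\to 0$ step is a verbatim transcription of the continuity argument of Lemma~\ref{lma:ep-cep} with $\frac{\partial\Phi}{\partial\theta}$ replaced by the C-VF weight rule, and the $\beta\to 0$ step is the first-order expansion $s_t^{\beta}=s_*+\beta\,\delta_t+o(\beta)$ around the steady state that underlies Lemma~\ref{lma:ep-rbp}. Your base case $\Delta_s^{\rm C-VF}(0)=\delta_1=-\frac{\partial\ell}{\partial s}(s_*,y)$ and the difference of consecutive $\delta$-recursions giving Eq.~(\ref{eq:ep-state}) are correct, and you are right that the order of the limits is essential (the toy-model factor $\bigl(1-2\beta+\tfrac{\eta}{2\beta}\bigr)^t$ is the cleanest witness of this).

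The one step you should not wave through is the last one. From the dynamics, $\Delta_\theta^{\rm C-VF}(\beta,\eta,t)=\frac{1}{\beta}\frac{\partial F}{\partial\theta}(x,s_t^{\beta,\eta},\theta_t^{\beta,\eta})^{\top}(s_{t+1}^{\beta,\eta}-s_t^{\beta,\eta})$, so your computation yields $\Delta_\theta^{\rm C-VF}(t)=\frac{\partial F}{\partial\theta}(x,s_*,\theta)^{\top}\Delta_s^{\rm C-VF}(t)$, whereas Eq.~(\ref{eq:ep-weight}) asserts $\Delta_\theta^{\rm C-VF}(t+1)=\frac{\partial F}{\partial\theta}(x,s_*,\theta)^{\top}\Delta_s^{\rm C-VF}(t)$. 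These are not the same statement: they differ by one application of $\frac{\partial F}{\partial s}(x,s_*,\theta)$ to the right-hand side, which is not the identity. The shift exists so that the lemma lines up index-for-index with the backward recursions of Proposition~\ref{prop:bptt} and Definition~\ref{def:rbp} (where $\nabla_\theta(t+1)$ is driven by $\nabla_s(t)$); note the paper itself is not consistent here, since its toy model in Appendix~\ref{appendix:ana-toymodel} reports $\nabla_\theta^{\rm BPTT}(t)=s_T/2^{t+1}$, i.e.\ the convention $\nabla_\theta(t)=\frac{\partial F}{\partial\theta}^{\top}\nabla_s(t)$ that matches what you derived. You should therefore either state explicitly which indexing convention you adopt for $\Delta_\theta$ and $\nabla_\theta$ and carry it consistently into the comparison with Proposition~\ref{prop:bptt}, or insert the one-step re-indexing explicitly; as written, your final sentence silently identifies $\Delta_\theta^{\rm C-VF}(t)$ with $\Delta_\theta^{\rm C-VF}(t+1)$. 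This is bookkeeping rather than substance --- under either consistent convention the conclusion $\Delta_\theta^{\rm C-VF}=-\nabla_\theta^{\rm BPTT}$ of Theorem~\ref{thm:generalisation} survives --- but the recurrence (\ref{eq:ep-weight}) as literally stated does not follow from your penultimate display without it.
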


The proof of Lemma \ref{lemma:ep} is similar to the one provided in \citet{ernoult2019updates}.

\newpage
\section{Models}
\label{subsec:models}

In this section, we describe the C-EP and C-VF algorithms when implemented on multi-layered models, with tied weights and untied weights respectively.
In the fully connected layered architecture model, the neurons are only connected between two consecutive layers (no skip-layer connections and no lateral connections within a layer).
We denote neurons of the n-th layer as $s^{n}$ with $n\in [0, N - 1]$, where $N$ is the number of hidden layers.
Layers are labelled in a backward fashion: $n=0$ labels the output layer, $n=1$ the first hidden layer starting from the output layer, and $n = N - 1$ the last hidden layer (before the input layer). Thus, there are $N$ hidden layers in total. Fig.~\ref{fig:layered} shows this architecture with $N=2$. Each model are presented here in a "real-time" and "discrete-time" settings.
For each model we lay out the equations of the neuron and synapse dynamics, we demonstrate the GDD property and we specify in which part of the main text they are used.

We present in this order:
\begin{enumerate}
    \item Discrete-Time RNN with symmetric weights trained with EP,
    \item Discrete-Time RNN with symmetric weights trained with C-EP,
    \item Real-Time RNN with symmetric weights trained with C-EP,
    \item Discrete-Time RNN with asymmetric weights trained with C-VF,
    \item Real-Time RNN with asymmetric weights trained with C-VF.
\end{enumerate}

Our `Discrete-Time RNN model' is also commonly called `vanilla RNN', and it is refered to as the `prototypical model' in \citet{ernoult2019updates}.
Our `Real-Time RNN model with symmetric weights' is also commonly called `continuous Hopfield model' and is refered to as the `energy-based model' in \citet{ernoult2019updates}.

\paragraph{Demonstrating the Gradient Descending Dynamics (GDD) property (Theorem \ref{thm:gdd}) on MNIST.} For this experiment, we consider the 784-512-\dots-512-10 network architecture, with 784 input neurons, 10 ouput neurons, and 512 neurons per hidden layer. The activation function used is $\sigma(x) = \tanh(x)$. The experiment consists of the following: we take a random MNIST sample (of size $1\times 784$) and its associated target (of size $1\times 10$). For a given value of the time-discretization parameter $\epsilon$, we perform the first phase for $T$ steps. Then, we perform on the one hand BPTT over $K$ steps (to compute the gradients $\nabla^{\rm BPTT}$), on the other hand C-EP (or C-VF) over $K$ steps for given values of $\beta$ and $\eta$ (to compute the normalized updates $\Delta^{\rm C-EP}$ or $\Delta^{\rm C-VF}$) and compare the gradients and normalized updates provided by the two algorithms. Precise values of the hyperparameters $\epsilon$, $T$, $K$, $\beta$ and $\eta$ are given in Tab.~\ref{table:hyperparameters-gdu}.

\begin{figure}[H]
\begin{center}
\fbox{
   \includegraphics[width=0.8\textwidth]{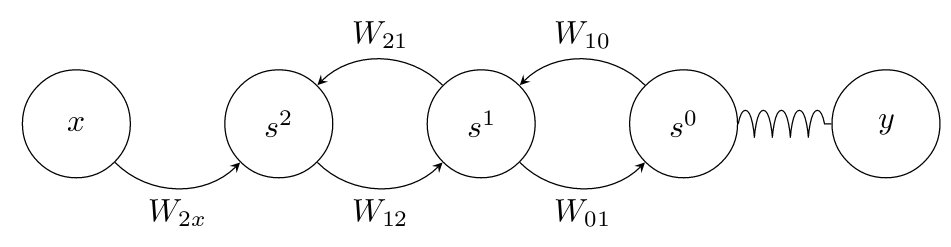}
   }
\end{center}
\caption{Fully connected layered architecture with $N=2$ hidden layers.}
\label{fig:layered}
\end{figure}

\subsection{Discrete-Time RNN with symmetric weights trained with EP}
\label{subsec:EP-disc}

\paragraph{Context of use.} This model is used for training experiments in Section~\ref{subsec:training} and Table~\ref{table:results}.

\paragraph{Equations with $N=2$.} We consider the layered architecture of Fig.~\ref{fig:layered}, where $s^0$ denotes the output layer, and the feedback connections are constrained to be the transpose of the feedforward connections, i.e. $W_{n n-1} = W_{n-1 n}^\top$.
In the discrete-time setting of EP, the dynamics of the first phase are defined as:
\begin{equation}
\forall t \in [0, T]:
\left\{
\begin{array}{l}
    s^0_{t+1} = \sigma \left( W_{0 1}\cdot s^{1}_{t} \right),\\
    s^1_{t+1} = \sigma \left( W_{1 2}\cdot s^2_{t} + W_{0 1}^\top\cdot s^{0}_{t} \right),\\
    s^2_{t+1} = \sigma \left( W_{2 x}\cdot x + W_{1 2}^\top\cdot s^{1}_{t} \right).
\end{array}
\right.
\label{eq:EP-simple-first-phase}
\end{equation}
In the second phase the dynamics reads:
\begin{equation}
\forall t \in [0, K]:
\left\{
\begin{array}{l}
    \displaystyle s^{0,\beta}_{t+1} = \sigma \left( W_{01}\cdot s^{1,\beta}_t \right) + \beta \; \epsilon \; \left( y - s^{0,\beta}_t \right),\\
    \displaystyle s^{1,\beta}_{t+1} = \sigma \left( W_{1 2}\cdot s^{2,\beta}_{t} + W_{0 1}^\top\cdot s^{0,\beta}_t \right),\\
    \displaystyle s^{2,\beta}_{t+1} = \sigma \left( W_{2 x} \cdot x + W_{1 2}^\top\cdot s^{1,\beta}_t \right).
\end{array}
\right.
\label{eq:C-EP-second-phase}
\end{equation}
As usual, $y$ denotes the target. Consider the function:

\begin{equation}
\Phi \left( x, s^2, s^1, s^0 \right) = (s^0)^\top \cdot W_{0 1} \cdot s^{1} + (s^1)^\top \cdot W_{1 2} \cdot s^{2} + (s^2)^\top \cdot W_{2 x} \cdot x.
\label{eq:model-simple-ep-phi}
\end{equation}

We can compute, for example:
\begin{equation}
\label{eq:approx-phi}
\frac{\partial \Phi}{\partial s^1} = W_{1 2} \cdot s^2 + W_{0 1}^\top \cdot s^{0}.
\end{equation}
Comparing Eq.~(\ref{eq:EP-simple-first-phase}) and Eq.~(\ref{eq:approx-phi}), and ignoring the activation function $\sigma$, we can see that
\begin{equation}
s_{t}^1 \approx \frac{\partial \Phi}{\partial s^1} \left( x, s_{t-1}^2, s_{t-1}^1, s_{t-1}^0 \right).
\end{equation} 
And similarly for the layers $s^0$ and $s^2$.

According to the definition of $\Delta_{\theta}^{\rm C-EP}$ in Eq.~(\ref{eq:ep-updates}), for every layer and every $t \in [0, K]$:
\begin{align}
\left\{
\begin{array}{l}
\Delta_{W_{2x}}^{\rm EP}(\beta,t) = \frac{1}{\beta}\left(s_{t+1}^{2,\beta}\cdot x^\top - s_{t}^{2,\beta}\cdot x^\top \right),\\
\Delta_{W_{12}}^{\rm EP}(\beta,t) = \frac{1}{\beta}\left(s_{t+1}^{1,\beta}\cdot s_{t+1}^{{2,\beta}^\top} - s_{t}^{1,\beta}\cdot s_{t}^{{2,\beta}^\top} \right),\\
\Delta_{W_{01}}^{\rm EP}(\beta,t) = \frac{1}{\beta}\left(s_{t+1}^{0,\beta}\cdot s_{t+1}^{{1,\beta}^\top} - s_{t}^{0,\beta}\cdot s_{t}^{{1,\beta}^\top} \right).\\
    \end{array}
\right.
\label{eq:model-simple-ep-deltaw}
\end{align}

\paragraph{Simplifying the equations with $N=2$.}
To go from our multi-layered architecture to the more general model presented in section 4.1. we define the state $s$ of the network as the concatenation of all the layers' states, i.e. $s=(s^2, s^1, s^{0})^\top$ and we define the weight matrices $W$ and $W_x$ as:

\begin{equation}
    W = \begin{pmatrix} 
0           & W_{1 2}^\top      & 0      \\
W_{1 2} & 0           & W_{0 1}^\top \\
0           & W_{0 1} & 0
\end{pmatrix}, \qquad
W_x = \begin{pmatrix} 
W_{2 x}\\
0\\
0
\end{pmatrix}.
\label{def:W-condensed-simple}
\end{equation}

Note that Eq.~(\ref{eq:EP-simple-first-phase}) and Eq.~(\ref{eq:model-simple-ep-phi}) can be vectorized into:

\begin{align}
    s_{t+1} &= \sigma(W\cdot s_t + W_x\cdot x), \label{eq:ep-condensed}\\
    \Phi &= \frac{1}{2}s^T\cdot W \cdot s + \frac{1}{2}s^T\cdot W_x \cdot x
\label{eq:phi-condensed}.
\end{align}

\paragraph{Generalizing the equations for any $N$.}
For a general architecture with a given $N$, the dynamics of the first phase are defined as:

\begin{equation}
\forall t \in [0, T]:
\left\{
\begin{array}{l}
    s^{0}_{t+1} = \sigma \left( W_{0 1}\cdot s^{1}_{t} \right)\\
    s^{n}_{t+1} = \sigma \left( W_{n n+1}\cdot s^{n+1}_{t} + W_{n-1 n}^\top\cdot s^{n-1}_{t} \right) \qquad \forall n \in [1, N-1]\\
    s^{N}_{t+1} = \sigma \left( W_{N, x}\cdot x + W_{N-1 N}^\top\cdot s^{N-1}_{t} \right),
\end{array}
\right.
\label{eq:EP-first-phase}
\end{equation}

and those of the second phase as:

\begin{equation}
\forall t \in [0, K]:
\left\{
\begin{array}{l}
    s^{0,\beta}_{t+1} = \sigma \left( W_{0 1}\cdot s^{1,\beta}_{t} \right) + \beta(y - s^{0,\beta})\\
    s^{n, \beta}_{t+1} = \sigma \left( W_{n n+1}\cdot s^{n+1,\beta}_{t} + W_{n-1 n}^\top\cdot s^{n-1,\beta}_{t} \right) \qquad \forall n \in [1, N-1]\\
    s^{N,\beta}_{t+1} = \sigma \left( W_{N, x}\cdot x + W_{N-1 N}^\top\cdot s^{N-1,\beta}_{t} \right),
\end{array}
\right.
\label{eq:EP-second-phase}
\end{equation}

where $y$ denotes the target. Defining:

\begin{equation}
    \Phi(x, s^{N}, \dots, s^0) = \sum_{n=0}^{N-1}s^{{n}^\top}\cdot W_{nn+1}\cdot s^{n+1} + s^{N}\cdot W_{N,x}\cdot x,
    \label{eq:model-ep-phi}
\end{equation}

ignoring the activation function $\sigma$, Eq.~(\ref{eq:EP-first-phase}) rewrites:

\begin{equation}
    s^{n}_{t+1} \approx \frac{\partial \Phi}{\partial s^n}(x, s^{N}, \dots, s^0) \qquad \forall n \in [1, N-1]
\end{equation}

According to the definition of $\Delta_{\theta}^{\rm C-EP}$ in Eq.~(\ref{eq:ep-updates}), for every layer
$W_{n n+1}$
and every $t \in [0, K]$:
\begin{align}
\left\{
\begin{array}{l}
\Delta_{W_{N,x}}^{\rm EP}(\beta,t) = \frac{1}{\beta}\left(s_{t+1}^{N,\beta}\cdot x^\top - s_{t}^{N,\beta}\cdot x^\top \right),\\
\Delta_{W_{nn+1}}^{\rm EP}(\beta,t) = \frac{1}{\beta}\left(s_{t+1}^{n,\beta}\cdot s_{t+1}^{{n+1,\beta}^\top} - s_{t}^{n,\beta}\cdot s_{t}^{{n+1,\beta}^\top} \right) \qquad \forall n \in [0, N-1]\\
    \end{array}
\right.
\label{eq:model-ep-deltaw}
\end{align}

Defining $s=(s^{N}, s^{N-1}, \dots, s^{0})^\top$ and:

\begin{equation}
    W = \begin{pmatrix} 
0           & W_{N-1 N}^\top      & 0           & 0      & 0                      \\
W_{N-1 N} & 0           & W_{N-2 N-1}^\top      & 0      & 0                     \\
0           & W_{N-2 N-1} & 0           & \ddots & 0                      \\
0           & 0           & \ddots & 0      &  W_{01}^\top        \\
0           & 0           & 0           &  W_{01} & 0
\end{pmatrix},
 \qquad
W_x = \begin{pmatrix} 
W_{N, x}\\
0\\
\vdots\\
0
\end{pmatrix},
\label{def:W-condensed}
\end{equation}

Eq.~(\ref{eq:EP-first-phase}) and Eq.~(\ref{eq:model-ep-phi}) can also be vectorized into:

\begin{align}
    s_{t+1} &= \sigma(W\cdot s_t + W_x\cdot x) \label{eq:ep-condensed}\\
    \Phi(x,s,W,W_x) &= \frac{1}{2}s^T\cdot W \cdot s + \frac{1}{2}s^T\cdot W_x \cdot x
\label{eq:phi-condensed}.
\end{align}

Thereafter we introduce the other models in this general case. 

\subsection{Discrete-Time RNN with symmetric weights trained with C-EP}
\label{subsec:C-EP-disc}

\paragraph{Context of use.} This model is used for training experiments in Section~\ref{subsec:training} and Table~\ref{table:results}.

\paragraph{Equations.} Recall that we consider the layered architecture of Fig.~\ref{fig:layered}, where $s^0$ denotes the output layer.
Just like in the discrete-time setting of EP, the dynamics of the first phase are defined as:
\begin{equation}
\forall t \in [0, T]:
\left\{
\begin{array}{l}
    s^0_{t+1} = \sigma \left( W_{0 1}\cdot s^{1}_{t} \right)\\
    s^n_{t+1} = \sigma \left( W_{n n+1}\cdot s^{n+1}_{t} + W_{n-1 n}^\top\cdot s^{n-1}_{t} \right) \qquad \forall n \in [1, N-1]\\
     s^{N}_{t+1} = \sigma \left( W_{N, x}\cdot x + W_{N-1 N}^\top\cdot s^{N-1}_{t}\right)
\end{array}
\right.
\label{eq:C-EP-first-phase}
\end{equation}
Again, as in EP, the feedback connections are constrained to be the transpose of the feedforward connections, i.e. $W_{n n-1} = W_{n-1 n}^\top$.
In the second phase the dynamics reads:
\begin{equation}
\forall t \in [0, K]:
\left\{
\begin{array}{ll}
    s^{0, \beta, \eta}_{t + 1} &= \sigma(W_{01}\cdot s^{1, \beta, \eta}_{t}) + \beta \; \left( y - s^{0, \beta, \eta}_t \right)\\
    s^{n, \beta, \eta}_{t + 1} &= \sigma(W_{n n+1}\cdot s^{n+1, \beta, \eta}_{t} + W_{n-1 n}^\top\cdot s^{n-1, \beta, \eta}_{t}) \qquad \forall n \in [1, N-1],\\
 s^{N, \beta, \eta}_{t+1} &= \sigma \left( W_{N, x}\cdot x + W_{N-1 N}^\top\cdot s^{N-1, \beta, \eta}_{t}\right)\\
    \theta^{\beta, \eta}_{t+1} &= \theta^{\beta, \eta}_{t} + \eta\del{\theta}(\beta, \eta, t) \qquad \forall \theta \in \{W_{n n+1}\}
\end{array}
\right.
\label{eq:C-EP-second-phase}
\end{equation}
As usual, $y$ denotes the target. Since Eq.~(\ref{eq:C-EP-first-phase}) and Eq.~(\ref{eq:EP-first-phase}) are the same, the equations describing the C-EP model can also be written in a vectorized block-wise fashion, as in Eq.~(\ref{eq:ep-condensed}) and Eq.~(\ref{eq:phi-condensed}). We can consequently define the C-EP model in Section~\ref{subsec:main-models} per Eq.~(\ref{eq:cep-discrete}).

According to the definitions of Eq.~(\ref{eq:C-EP}) and Eq.~(\ref{eq:updates-cep}), for every layer $W_{n n+1}$ and every $t \in [0, K]$:
\begin{align}
\left\{
\begin{array}{l}
\Delta_{W_{N,x}}^{\rm C-EP}(\beta,t) = \frac{1}{\beta}\left(s_{t+1}^{N,\beta, \eta}\cdot x^\top - s_{t}^{N,\beta, \eta}\cdot x^\top \right),\\
\Delta_{W_{nn+1}}^{\rm C-EP}(\beta,t) = \frac{1}{\beta}\left(s_{t+1}^{n,\beta, \eta}\cdot s_{t+1}^{{n+1,\beta, \eta}^\top} - s_{t}^{n,\beta, \eta}\cdot s_{t}^{{n+1,\beta, \eta}^\top} \right) \qquad \forall n \in [0, N-1]\\
    \end{array}
\right.
\label{eq:model-cep-deltaw}
\end{align}

\subsection{Real-Time RNN with symmetric weights trained with C-EP}
\label{subsec:C-EP-real}

\paragraph{Context of use.} This model has not been used in this work. We only introduce it for completeness with respect to \citet{ernoult2019updates}.

\paragraph{Equations.}  For this model, the primitive function is defined as:

\begin{equation}
\Phi \left(s^0, s^1, \ldots, s^{N-1} \right) =  \frac{1}{2}(1 - \epsilon) \left( \sum_{n = 0}^{N}||s^n||^{2} \right) + \epsilon \sum_{n = 0}^{N-1} \sigma(s^n)\cdot W_{n n+1}\cdot\sigma(s^{n+1}) + \sigma(s^{N})\cdot W_{N, x}\cdot\sigma(x)
\label{eq:cep-realtime-phi}
\end{equation}

so that the equations of motion read:

\begin{equation*}
\forall t \in [0, T]:
\left\{
\begin{array}{ll}
    s^0_{t + 1} &= (1 - \epsilon)s^0_{t} + \epsilon W_{0 1}\cdot\sigma(s^{1}_{t})\\
    s^n_{t + 1} &= (1 - \epsilon)s^n_{t} + \epsilon(W_{n n+1}\cdot\sigma \left( s^{n+1}_{t} \right) + W_{n-1 n}^\top\cdot\sigma(s^{n-1}_{t})) \qquad \forall n \in [1, N-1]\\
    s^{N}_{t + 1} &= (1 - \epsilon)s^{N}_{t} + \epsilon(W_{N, x}\cdot\sigma \left( x \right) + W_{N-1 N}^\top\cdot\sigma(s^{N-1}_{t}))     
\end{array}
\right.
\end{equation*}
In the second phase: 
\begin{equation}
\forall t \in [0, K]:
\left\{
\begin{array}{ll}
    s^{0, \beta, \eta}_{t + 1} &= (1 - \epsilon)s^{0, \beta, \eta}_{t} + \epsilon W_{0 1}\cdot\sigma(s^{1, \beta, \eta}_{t}) + \beta\epsilon(y - s^{0, \beta, \eta}(t))\\
    s^{n, \beta, \eta}_{t + 1} &= (1 - \epsilon)s^{n, \beta, \eta}_{t} + \epsilon(W_{n n+1}\cdot\sigma(s^{n + 1, \beta, \eta}_{t}) + W_{n - 1 n}^\top\cdot\sigma(s^{n - 1, \beta, \eta}_{t})) \qquad \forall n \in [1, N-1],\\
        s^{N, \beta, \eta}_{t + 1} &= (1 - \epsilon)s^{N, \beta, \eta}_{t} + \epsilon(W_{N, x}\cdot\sigma \left( x \right) + W_{N-1 N}^\top\cdot\sigma(s^{N-1, \beta, \eta}_{t}))    \\
    \theta^{\beta, \eta}_{t+1} &= \theta^{\beta, \eta}_{t} + \eta\del{\theta}(\beta, \eta, t) \qquad \forall \theta \in \{W_{n n+1}\}
\end{array}
\right.
\end{equation}
where $\epsilon$ is a time-discretization parameter and $y$ denotes the target.

According the definition of the C-EP dynamics (Eq.~(\ref{eq:C-EP})), the definition of $\Delta_{\theta}^{\rm C-EP}$ (Eq.~(\ref{eq:updates-cep})) and the explicit form of $\Phi$ (Eq.~\ref{eq:cep-realtime-phi}), for all time step $t \in [0, K]$, we have:
\begin{align*}
\left\{
\begin{array}{ll}
  \Delta_{W_{N, x}}^{\rm C-EP}(\beta, \eta, t) &= \frac{1}{\beta}\left(\sigma\left(s_{t+1}^{N, \beta, \eta}\right)\cdot \sigma\left(x\right)^\top - \sigma\left(s_{t}^{N, \beta, \eta}\right)\cdot \sigma\left(x\right)^\top \right) \\
  \Delta_{W_{n n+1}}^{\rm C-EP}(\beta, \eta, t) &= \frac{1}{\beta}\left(\sigma\left(s_{t+1}^{n, \beta, \eta}\right)\cdot \sigma\left(s_{t + 1}^{n + 1, \beta, \eta}\right)^\top - \sigma\left(s_{t}^{n, \beta, \eta}\right)\cdot \sigma\left(s_{t}^{n + 1, \beta, \eta}\right)^\top \right)\qquad \forall n \in [0, N-1]
\end{array}
\right.
\end{align*}

\subsection{Discrete-Time RNN with asymmetric weights trained with C-VF}
\label{subsec:C-VF-disc}

\paragraph{Context of use.} This model is used for training experiments in Section~\ref{subsec:vf} and Table~\ref{table:results}.

\paragraph{Equations.} Recall that we consider the layered architecture of Fig.~\ref{fig:layered}, where $s^0$ denotes the output layer.
The dynamics of the first phase in C-VF are defined as:
\begin{equation}
\forall t \in [0, T]:
\left\{
\begin{array}{ll}
    s^0_{t + 1} &= \sigma(W_{0 1}\cdot s^{1}_{t})\\
    s^n_{t + 1} &= \sigma(W_{n n+1}\cdot s^{n+1}_{t} + W_{n n-1}\cdot s^{n-1}_{t}) \qquad \forall n \in [1, N-1]\\
    s^{N}_{t+1} &= \sigma \left( W_{N, x}\cdot x + W_{N N-1}\cdot s^{N-1}_{t} \right)
\end{array}
\right.
\label{eq:C-VF-first-phase}
\end{equation}
Here, note the difference with EP and C-EP: the feedforward and feedback connections are unconstrained.
In the second phase of C-VF: 
\begin{equation}
\forall t \in [0, K]:
\left\{
\begin{array}{ll}
    s^{0, \beta, \eta}_{t + 1} &= \sigma(W_{01}\cdot s^{1, \beta, \eta}_{t}) + \beta\epsilon(y - s^{0, \beta, \eta}_t)\\
    s^{n, \beta, \eta}_{t + 1} &= \sigma(W_{n n+1}\cdot s^{n+1, \beta, \eta}_{t} + W_{n n-1}\cdot s^{n-1, \beta, \eta}_{t}) \qquad \forall n \in [1, N-1],\\
    s^{N, \beta, \eta}_{t+1} &= \sigma \left( W_{N, x}\cdot x + W_{N N-1}\cdot s^{N-1, \beta, \eta}_{t} \right)\\    
    \theta^{\beta, \eta}_{t+1} &= \theta^{\beta, \eta}_{t} + \eta\del{\theta}(\beta, \eta, t) \qquad \forall \theta \in \{W_{n n+1}, W_{n+1 n}\}
\end{array}
\right.
\label{eq:C-VF-second-phase}
\end{equation}
As usual $y$ denotes the target. Note that Eq.~(\ref{eq:C-VF-first-phase}) can also be in a vectorized block-wise fashion as Eq.~(\ref{eq:ep-condensed}) with $s=(s^0, s^1, \dots, s^{N-1})^\top$ and provided that we define $W$ and $W_x$ as:

\begin{equation}
    W = \begin{pmatrix} 
0           & W_{N N-1}      & 0           & 0      & 0                      \\
W_{N-1 N} & 0           & W_{N-1 N-2}      & 0      & 0                     \\
0           & W_{N-2 N-1} & 0           & \ddots & 0                      \\
0           & 0           & \ddots & 0      &  W_{10}        \\
0           & 0           & 0           &  W_{01} & 0
\end{pmatrix},
 \qquad
W_x = \begin{pmatrix} 
W_{N, x}\\
0\\
\vdots\\
0
\end{pmatrix},
\label{def:W-condensed-VF}
\end{equation}
For all layers $W_{n n+1}$ and $W_{n+1 n}$, and every $t \in [0, K]$, we define:
\begin{align*}
    \left\{
\begin{array}{ll}
  \Delta_{W_{N, x}}^{\rm C-VF}(\beta, \eta, t) &= \frac{1}{\beta}(s_{t+1}^{N, \beta, \eta} - s_{t}^{N, \beta, \eta})\cdot x \\
  \Delta_{W_{n n+1}}^{\rm C-VF}(\beta, \eta, t) &= \frac{1}{\beta}(s_{t+1}^{n, \beta, \eta} - s_{t}^{n, \beta, \eta})\cdot s_{t}^{{n+1, \beta, \eta}^\top} \\
  \Delta_{W_{n+1 n}}^{\rm C-VF}(\beta, \eta, t) &= \frac{1}{\beta}(s_{t+1}^{n + 1, \beta, \eta} - s_{t}^{n + 1, \beta, \eta})\cdot s_{t}^{{n, \beta, \eta}^\top} \\
    \end{array}
\right.
\end{align*}

\subsection{Real-Time RNN with asymmetric weights trained with C-VF}
\label{subsec:C-VF-real}

\paragraph{Context of use.} This model is used to generate Fig.~\ref{fig:ep-to-cvf} - see Table~\ref{table:hyperparameters-gdu} for precise hyperparameters.

\paragraph{Equations.}  For this model, the dynamics of the first phase are defined as:
\begin{equation*}
\forall t \in [0, T]:
\left\{
\begin{array}{l}
    \displaystyle s^0_{t+1} = (1-\epsilon) s^0_t + \epsilon W_{0 1} \cdot \sigma \left( s^1_t \right)\\
    \displaystyle s^n_{t+1} = (1-\epsilon) s^n_{t} + \epsilon \left(W_{n n+1}\cdot\sigma \left( s^{n+1}_t \right) + W_{n n-1}\cdot\sigma \left( s^{n-1}_t \right) \right) \qquad \forall n \in [1, N-1]\\
    \displaystyle s^{N}_{t+1} = (1-\epsilon) s^{N}_{t} + \epsilon \left(W_{N,x}\cdot\sigma \left(x\right) + W_{N N-1}\cdot\sigma \left( s^{N-1}_t \right) \right)
\end{array}
\right.
\end{equation*}
where $\epsilon$ is the time-discretization parameter.
Again, as in the discre-time version of C-VF,  the feedforward and feedback connections $W_{n n-1}$ and $W_{n-1 n}$ are unconstrained.
In the second phase, the dynamics reads: 
\begin{equation}
\forall t \in [0, K]:
\left\{
\begin{array}{l}
    \displaystyle s^{0, \beta, \eta}_{t + 1} = (1 - \epsilon)s^{0, \beta, \eta}_t + \epsilon W_{0 1}\cdot\sigma \left( s^{1, \beta, \eta}_t \right) + \beta \epsilon \; \left( y - s^{0, \beta, \eta}_t \right)\\
    \displaystyle s^{n, \beta, \eta}_{t + 1} = (1 - \epsilon) s^{n, \beta, \eta}_t + \epsilon \left( W_{n n+1}\cdot\sigma \left( s^{n+1,\beta,\eta}_t \right) + W_{n n-1}\cdot\sigma \left( s^{n - 1, \beta, \eta}_{t} \right) \right) \qquad \forall n \in [1, N-1]\\
        \displaystyle s^{N, \beta, \eta}_{t+1} = (1-\epsilon) s^{N, \beta, \eta}_{t} + \epsilon \left(W_{N,x}\cdot\sigma \left(x\right) + W_{N N-1}\cdot\sigma \left( s^{N-1, \beta, \eta}_t \right) \right)\\
    \displaystyle \theta^{\beta, \eta}_{t+1} = \theta^{\beta, \eta}_{t} + \eta \; \Delta_\theta^{\rm C-VF}(\beta, \eta, t) \qquad \forall \theta \in \{W_{n n+1}, W_{n+1 n}\}
\end{array}
\right.
\label{eq-mnist-vf-cont}
\end{equation}
where $y$ denotes the target, as usual.
For every feedforward connection matrix $W_{n n+1}$ and every feedback connection matrix $W_{n+1 n}$, and for every time step $t \in [0, K]$ in the second phase, we define
\begin{align*}
\left\{
\begin{array}{l}
  \Delta_{W_{N, x}}^{\rm C-VF}(\beta, \eta, t) = \frac{1}{\beta}(s_{t+1}^{N, \beta, \eta} - s_{t}^{N, \beta, \eta})\cdot \sigma(x) \\
  \displaystyle \Delta_{W_{n n+1}}^{\rm C-VF}(\beta, \eta, t) = \frac{1}{\beta} \left( s_{t+1}^{n, \beta, \eta} - s_{t}^{n, \beta, \eta} \right) \cdot \sigma \left( s_{t}^{n + 1, \beta, \eta} \right)^\top \\
\displaystyle \Delta_{W_{n+1 n}}^{\rm C-VF}(\beta, \eta, t) = \frac{1}{\beta} \left( s_{t+1}^{n+1,\beta,\eta} - s_t^{n+1,\beta,\eta} \right) \cdot \sigma \left( s_{t}^{n, \beta, \eta} \right)^\top
    \end{array}
\right.
\end{align*}

\newpage
\subsection{Figures for the GDD experiments}
\label{subsec:figs}

In the following figures, we show the effect of using continual updates with a finite learning rate in terms of the $\Delta^{\rm C-EP}$ and $-\nab{}$ processes on different models introduced above. These figures have been realized either in the discrete-time or continuous-time setting with the fully connected layered architecture with one hidden layer on MNIST. Dashed an continuous lines respectively represent the normalized updates $\Delta$ and the gradients $\nab{}$. Each randomly selected synapse or neuron correspond to one color. We add an s or $\theta$ index to specify whether we analyse neuron or synapse updates and gradients. Each C-VF simulation has been realized with an angle between forward and backward weights of 0 degrees (i.e. $\Psi(\theta_{\rm f}, \theta_{\rm b}) = 0 ^\circ$). For each figure, left panels demonstrate the GDD property with C-EP with $\eta = 0$ and the right panels show that, upon using $\eta > 0$, dashed and continuous lines start to split appart.

\begin{table}[ht!]
\begin{center}
    \caption{Table of hyperparameters used to demonstrate Theorem \ref{thm:gdd}.}
    \label{table-hyp-th}
\begin{tabular}{@{}lcccccccc@{}} \toprule
{}&{Figure}&{Angle $\Psi$ ($^\circ$)}&{Activation}&{T}&{K}&{$\beta$}&{$\epsilon$} & Learning rates\\
\midrule
  {C-EP} & {\ref{fig:ep-to-cvf}} & 0 & {tanh}&{800}&{80}&{0.01}&{0.08}&{$0-0$} \\ 
 {C-VF} & {\ref{fig:ep-to-cvf}} & 45 & {tanh}&{800}&{80}&{0.01}&{0.08}&{$0 - 0$} \\
 {C-EP} & {\ref{fig:ep-to-cvf}} & 0 & {tanh}&{800}&{80}&{0.01}&{0.08}&{$1.5 10^{-5} - 1.5 10^{-5}$} \\
 {C-VF} & {\ref{fig:ep-to-cvf}} & 45 & {tanh}&{800}&{80}&{0.01}&{0.08}&{$1.5 10^{-5} - 1.5 10^{-5}$} \\ 
 \midrule
 {C-VF} & {\ref{gdu:vf-cvf-s}-\ref{gdu:vf-cvf-w}} & 0 & {tanh}&{$800$ }&{$80$}&{$0.005$}&{0.08}&{$0-0$} \\
 {C-VF} & {\ref{gdu:vf-cvf-s}-\ref{gdu:vf-cvf-w}} & 0 & {tanh}&{$800$ }&{$80$}&{$0.005$}&{0.08}&{$2.10^{-5} - 2.10^{-5}$} \\
  \midrule
 {C-VF} & {\ref{gdu:vf-cvf-disc-s}-\ref{gdu:vf-cvf-disc-w}} & 0 & {tanh}&{$150$ }&{$10$}&{$0.01$}&{$-$}&{$0-0$} \\
 {C-VF} & {\ref{gdu:vf-cvf-disc-s}-\ref{gdu:vf-cvf-disc-w}} & 0 & {tanh}&{$150$ }&{$10$}&{$0.01$}&{$-$}&{$2.10^{-5} - 2.10^{-5}$} \\   
 \midrule 
 {C-EP} & {\ref{gdu:ep-cep-s}-\ref{gdu:ep-cep-w}} & 0 & {tanh}&{$800$ }&{$80$}&{$0.05$}&{0.08}&{$0-0$} \\
 {C-EP} & {\ref{gdu:ep-cep-s}-\ref{gdu:ep-cep-w}} & 0 & {tanh}&{$800$ }&{$80$}&{$0.05$}&{0.08}&{$2.10^{-5} - 2.10^{-5}$} \\
  \midrule
 {C-EP} & {\ref{gdu:ep-cep-disc-s}-\ref{gdu:ep-cep-disc-w}} & 0 & {tanh}&{$150$ }&{$10$}&{$0.01$}&{$-$}&{$0-0$} \\
 {C-EP} & {\ref{gdu:ep-cep-disc-s}-\ref{gdu:ep-cep-disc-w}} & 0 & {tanh}&{$150$ }&{$10$}&{$0.01$}&{$-$}&{$2.10^{-5} - 2.10^{-5}$} \\ 
\bottomrule
\end{tabular}
\end{center}
\label{table:hyperparameters-gdu}
\end{table}

\newpage
\begin{figure}[ht]
\begin{center}
\fbox{
   \includegraphics[width=0.45\textwidth]{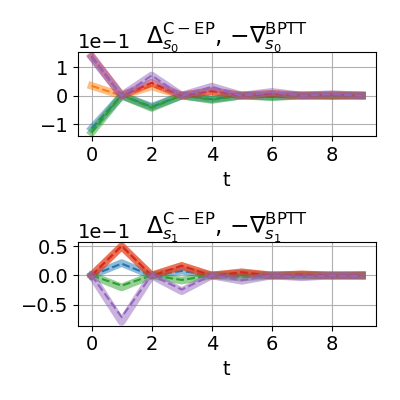}
      \hfill 
   \includegraphics[width=0.45\textwidth]{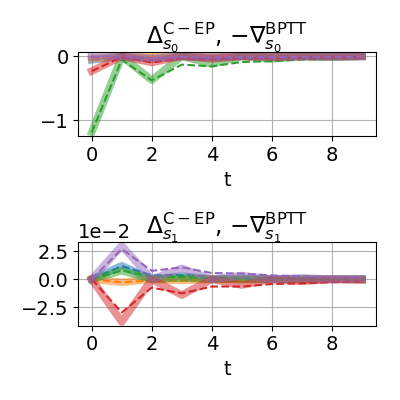}
   }
\end{center}
\caption{Discrete-Time RNN with symmetric weights. {\bfseries Left}: $\Delta_{s}^{\rm C-EP}(t)$ normalized updates ($\eta = 0$) and  $-\nabla_{s}^{\rm BPTT}(t)$ gradients. {\bfseries Right}:  $\Delta_{s}^{\rm C-EP}(t)$ normalized updates ($\eta > 0$) and  $-\nabla_{s}^{\rm BPTT}(t)$ gradients.}
\label{gdu:ep-cep-disc-s}
\end{figure}

\begin{figure}[ht]
\begin{center}
\fbox{
   \includegraphics[width=0.45\textwidth]{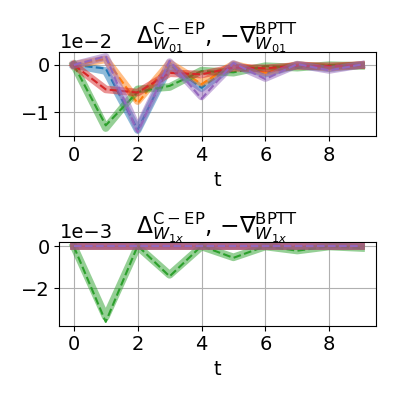}
      \hfill 
   \includegraphics[width=0.45\textwidth]{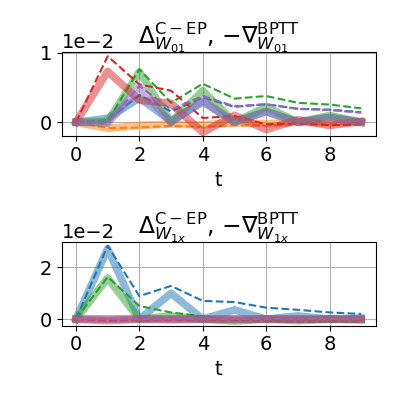}
   }
\end{center}
\caption{Discrete-Time RNN with symmetric weights. {\bfseries Left}: $\Delta_{\theta}^{\rm C-EP}(t)$ normalized updates ($\eta = 0$) and  $-\nabla_{\theta}^{\rm BPTT}(t)$ gradients. {\bfseries Right}:  $\Delta_{\theta}^{\rm C-EP}(t)$ normalized updates ($\eta > 0$) and  $-\nabla_{\theta}^{\rm BPTT}(t)$ gradients.}
\label{gdu:ep-cep-disc-w}
\end{figure}

\newpage
\begin{figure}[ht]
\begin{center}
\fbox{
   \includegraphics[width=0.45\textwidth]{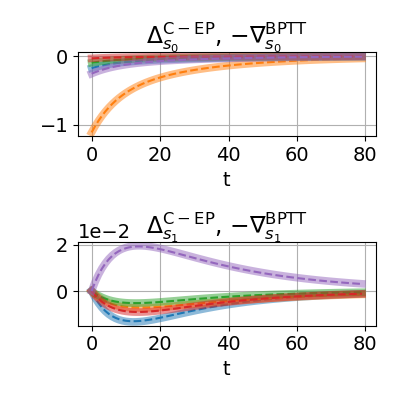}
      \hfill 
   \includegraphics[width=0.45\textwidth]{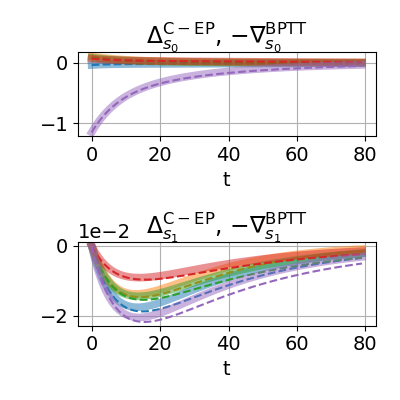}
   }
\end{center}
\caption{Real-Time RNN with symmetric weights. {\bfseries Left}: $\Delta_{s}^{\rm C-EP}(t)$ normalized updates ($\eta = 0$) and  $-\nabla_{s}^{\rm BPTT}(t)$ gradients. {\bfseries Right}:  $\Delta_{s}^{\rm C-EP}(t)$ normalized updates ($\eta > 0$) and  $-\nabla_{s}^{\rm BPTT}(t)$ gradients.}
\label{gdu:ep-cep-s}
\end{figure}

\begin{figure}[ht]
\begin{center}
\fbox{
   \includegraphics[width=0.45\textwidth]{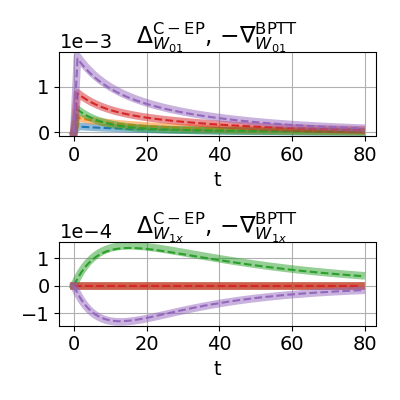}
      \hfill 
   \includegraphics[width=0.45\textwidth]{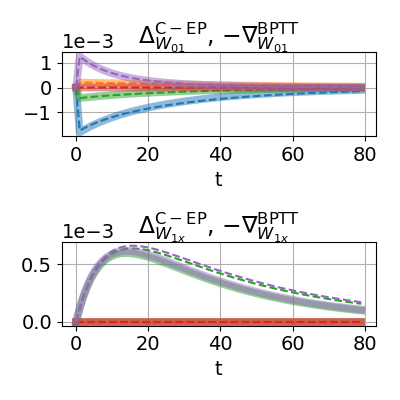}
   }
\end{center}
\caption{Real-Time RNN with symmetric weights. {\bfseries Left}$\Delta_{\theta}^{\rm C-EP}(t)$ normalized updates ($\eta = 0$) and  $-\nabla_{\theta}^{\rm BPTT}(t)$ gradients. {\bfseries Right}:  $\Delta_{\theta}^{\rm C-EP}(t)$ normalized updates ($\eta > 0$) and  $-\nabla_{\theta}^{\rm BPTT}(t)$ gradients.}
\label{gdu:ep-cep-w}
\end{figure}

\newpage

\begin{figure}[ht]
\begin{center}
\fbox{
   \includegraphics[width=0.45\textwidth]{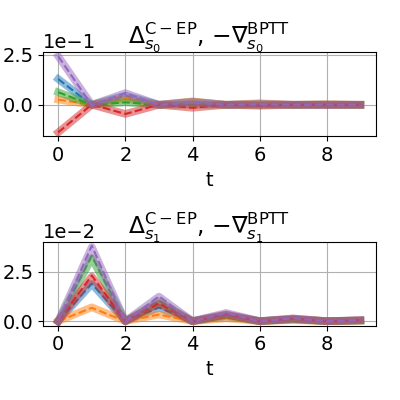}
      \hfill 
   \includegraphics[width=0.45\textwidth]{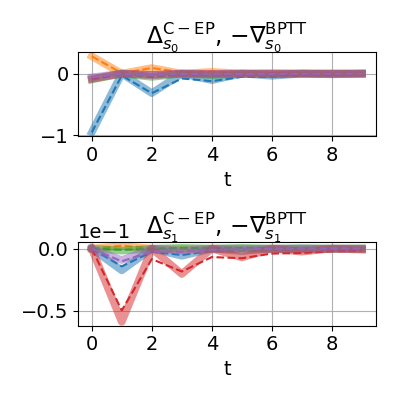}
   }
\end{center}
\caption{Discrete-Time RNN with asymmetric weights. {\bfseries Left}: $\Delta_{s}^{\rm C-VF}(t)$ normalized updates ($\eta = 0$) and  $-\nabla_{s}^{\rm BPTT}(t)$ gradients. {\bfseries Right}:  $\Delta_{s}^{\rm C-VF}(t)$ normalized updates ($\eta > 0$) and  $-\nabla_{s}^{\rm BPTT}(t)$ gradients.}
\label{gdu:vf-cvf-disc-s}
\end{figure}

\begin{figure}[ht]
\begin{center}
\fbox{
   \includegraphics[width=0.45\textwidth]{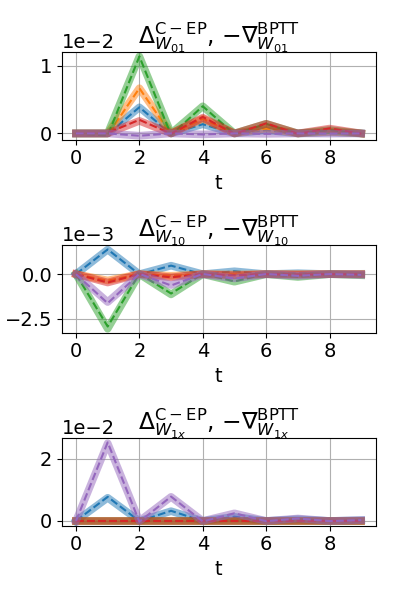}
      \hfill 
   \includegraphics[width=0.45\textwidth]{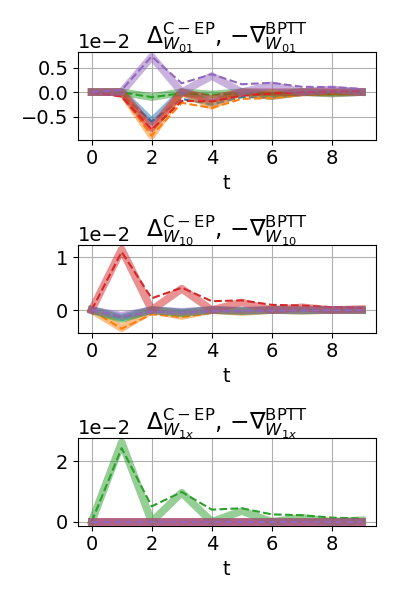}
   }
\end{center}
\caption{Discrete-Time RNN with asymmetric weights. {\bfseries Left}: $\Delta_{\theta}^{\rm C-VF}(t)$ normalized updates ($\eta = 0$) and  $-\nabla_{\theta}^{\rm BPTT}(t)$ gradients. {\bfseries Right}:  $\Delta_{\theta}^{\rm C-VF}(t)$ normalized updates ($\eta > 0$) and  $-\nabla_{\theta}^{\rm BPTT}(t)$ gradients.}
\label{gdu:vf-cvf-disc-w}
\end{figure}


\newpage
\begin{figure}[ht]
\begin{center}
\fbox{
   \includegraphics[width=0.45\textwidth]{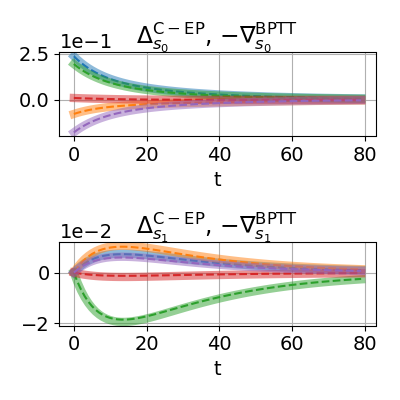}
      \hfill 
   \includegraphics[width=0.45\textwidth]{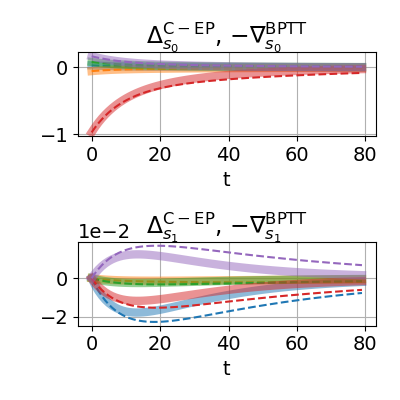}
   }
\end{center}
\caption{Real-Time RNN with asymmetric weights. {\bfseries Left}: $\Delta_{s}^{\rm C-VF}(t)$ normalized updates ($\eta = 0$) and  $-\nabla_{s}^{\rm BPTT}(t)$ gradients. {\bfseries Right}:  $\Delta_{s}^{\rm C-VF}(t)$ normalized updates ($\eta > 0$) and  $-\nabla_{s}^{\rm BPTT}(t)$ gradients.}
\label{gdu:vf-cvf-s}
\end{figure}

\begin{figure}[ht]
\begin{center}
\fbox{
   \includegraphics[width=0.45\textwidth]{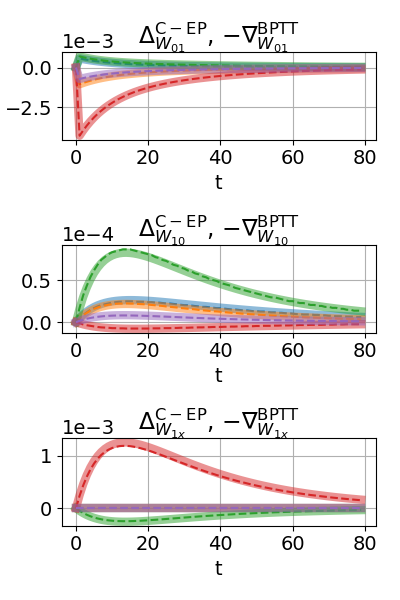}
      \hfill 
   \includegraphics[width=0.45\textwidth]{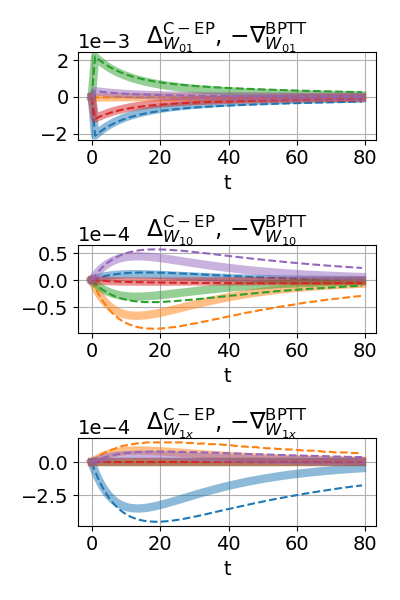}
   }
\end{center}
\caption{Real-Time RNN with asymmetric weights. {\bfseries Left}: $\Delta_{\theta}^{\rm C-VF}(t)$ normalized updates ($\eta = 0$) and  $-\nabla_{\theta}^{\rm BPTT}(t)$ gradients. {\bfseries Right}:  $\Delta_{\theta}^{\rm C-VF}(t)$ normalized updates ($\eta > 0$) and  $-\nabla_{\theta}^{\rm BPTT}(t)$ gradients.}
\label{gdu:vf-cvf-w}
\end{figure}

\newpage
\section{Experimental Details}

\subsection{Training experiments (Table~\ref{table:results})}
\label{subsec:training-exp}
\paragraph{Simulation framework.} Simulations have been carried out in Pytorch. The code has been attached to the supplementary materials upon submitting this work on OpenReview. We have also attached a readme.txt with a specification of all dependencies, packages, descriptions of the python files as well as the commands to reproduce all the results presented in this paper. 

\paragraph{Data set.} Training experiments were carried out on the MNIST data set. Training set and test set include 60000 and 10000 samples respectively.
\paragraph{Optimization.} Optimization was performed using stochastic gradient descent with mini-batches of size 20. For each simulation, weights were Glorot-initialized. No regularization technique was used and we did not use the persistent trick of caching and reusing converged states for each data sample between epochs as in \citet{Scellier+Bengio-frontiers2017}. 

\paragraph{Activation function.} For training, we used the activation function
\begin{equation}
    \sigma(x)=\frac{1}{1+\exp(-4(x-1/2))}.
    \label{eq:sigmoid}
\end{equation}
Although it is a shifted and rescaled sigmoid function, we shall refer to this activation function as `sigmoid'.

\paragraph{Use of a randomized $\beta$.} The option 'Random $\beta$' appearing in the detailed table of results (Table~\ref{table:full-results}) refers to the following procedure. During training, instead of using the same $\beta$ accross mini-batches, we only keep the same \emph{absolute value} of $\beta$ and sample its sign from a Bernoulli distribution of probability $\frac{1}{2}$ at each mini-batch iteration. This procedure was hinted at by \citet{Scellier+Bengio-frontiers2017} to improve test error, and is used in our context to improve the model convergence for Continual Equilibrium Propagation - appearing as C-EP and C-VF in Table~\ref{table:results} - training simulations. 

\paragraph{Tuning the angle between forward and backward weights.} In Table~\ref{table:results}, we investigate C-VF initialized with different angles between the forward and backward weights - denoted as $\Psi$ in Table~\ref{table:results}. Denoting them respectively $\theta_{\rm f}$ and $\theta_{\rm b}$, the \emph{angle} $\kappa$ between them is defined here as:

\begin{equation*}
    \kappa(\theta_{\rm f}, \theta_{\rm b}) = \cos^{-1}\left(\frac{{\rm Tr}(\theta_{\rm f}\cdot \theta_{\rm b}^\top)}{\sqrt{{\rm Tr}(\theta_{\rm f}\cdot \theta_{\rm f}^\top)}\sqrt{{\rm Tr}(\theta_{\rm b}\cdot \theta_{\rm b}^\top)}}\right),
\end{equation*}

where ${\rm Tr}$ denotes the trace, i.e. ${\rm Tr}(A) = \sum_i A_{ii}$ for any squared matrix $A$. To tune arbitrarily well enough $\kappa(\theta_{\rm f}, \theta_{\rm b})$, the procedure is the following: starting from $\theta_b = \theta_f$, i.e. $\kappa(\theta_{\rm f}, \theta_{\rm b}) = 0$, we can gradually increase the angle between $\theta_{\rm f}$ and $\theta_{\rm b}$ by flipping the sign of an arbitrary proportion of components of $\theta_{\rm b}$. The more components have their sign flipped, the larger is the angle. More formally, we write $\theta_b$ in the form $\theta_b = M(p)\odot\theta_{\rm f}$ and we define:

\begin{equation}
    \Psi(p) = \kappa(\theta_{\rm f}, M(p)\odot\theta_{\rm f}),
    \label{def:phi}
\end{equation}

where $M(p)$ is a mask of binary random values \{+1, -1\} of the same dimension of $\theta_{\rm f}$: $M(p) = -1$ with probability p and $M(p) = +1$ with probability $1-p$. Taking the cosine and the expectation of Eq.~(\ref{def:phi}), we obtain:

\begin{align*}
    \langle \cos(\Psi(p)) \rangle &=  p \times - \frac{{\rm Tr}(\theta_{\rm f}\cdot \theta_{\rm f}^\top)}{{\rm Tr}(\theta_{\rm f}\cdot \theta_{\rm f}^\top)} + (1 - p)\times \frac{{\rm Tr}(\theta_{\rm f}\cdot \theta_{\rm f}^\top)}{{\rm Tr}(\theta_{\rm f}\cdot \theta_{\rm f}^\top)}\\
    &= 1 - 2p
\end{align*}

Thus, the angle $\Psi$ between $\theta_f$ and $\theta_f\odot M(p)$ can be tuned by the choice of p through:

\begin{equation}
    p(\Psi) = \frac{1}{2}(1 - \langle \cos(\Psi)\rangle)
\end{equation}

\paragraph{Hyperparameter search for EP.} We distinguish between two kinds of hyperparameters: the recurrent hyperparameters - i.e. $T$, $K$ and $\beta$ - and the learning rates. A first guess of the recurrent hyperparameters $T$ and $\beta$ is found by plotting the $\del{}$ and $\nab{}$ processes associated to synapses and neurons to see qualitatively whether the theorem is approximately satisfied, and by conjointly computing the proportions of synapses whose $\del{W}$ processes have the same sign as its $\nab{W}$ processes. $K$ can also be found out of the plots as the number of steps which are required for the gradients to converge. Morever, plotting these processes reveal that gradients are vanishing when going away from the output layer, i.e. they lose up to $~10^{-1}$ in magnitude when going from a layer to the previous (i.e. upstream) layer. We subsequently initialized the learning rates with increasing values going from the output layer to upstreams layers. The typical range of learning rates is $[10^{-3}, 10^{-1}]$, $[10, 1000]$ for $T$, $[2, 100]$ for $K$ and $[0.01, 1]$ for $\beta$. Hyperparameters where adjusted until having a train error the closest to zero. Finally, in order to obtain minimal recurrent hyperparameters - i.e. smallest $T$ and $K$ possible - we progressively decreased $T$ and $K$ until the train error increases again.

\begin{table}[ht!]
    \begin{center}
    \caption{Table of hyperparameters used for training. "C" and "VF" respectively denote "continual" and "vector-field", "-$\#$h" stands for the number of hidden layers. The sigmoid activation is defined by Eq.~(\ref{eq:sigmoid}).\label{table-hyp-training}}
\begin{tabular}{@{}lccccccc@{}} \toprule
{}&{Activation}&{T}&{K}&{$\beta$}& Random $\beta$ &{Epochs}&{Learning rates}\\
\midrule
{EP-1h}&{sigmoid}&{30}&{10}&{0.1}& False & {30}&{$0.08-0.04$} \\ 
   \midrule
{EP-2h}&{sigmoid}&{100}&{20}&{0.5}& False & {50}&{$0.2-0.05-0.005$} \\ 
   \midrule
{C-EP-1h}&{sigmoid}&{40}&{15}&{0.2}& False & {100}&{$0.0056-0.0028$} \\ 
   \midrule
   {C-EP-1h}&{sigmoid}&{40}&{15}&{0.2}& True & {100}&{$0.0056-0.0028$} \\ 
   \midrule
{C-EP-2h}&{sigmoid}&{100}&{20}&{0.5}& False & {150}&{$0.01-0.0018-0.00018$} \\
   \midrule
{C-VF-1h}&{sigmoid}&{$40$}&{$15$}&{$0.2$}& True & {$100$}&{$0.0076-0.0038$} \\
   \midrule
{C-VF-2h}&{sigmoid}&{$100$}&{$20$}&{$0.35$}& True & {$150$}&{$0.009-0.0016-0.00016$} \\
\bottomrule
\end{tabular}
    \end{center}
\end{table}

\begin{table}
    \caption{
    Training results on MNIST with EP, C-EP and C-VF.  "$\#$h" stands for the number of hidden layers. We indicate over five trials the mean and standard deviation for the test error, the mean error in parenthesis for the train error.
    $T$ (resp. $K$) is the number of iterations in the first (resp. second) phase.
    }
\paragraph{Full table of results.} Since Table~\ref{table:results} does not show C-VF simulation results for all initial weight angles, we provide below the full table of results, including those which were used to plot Fig.~\ref{fig:ep-to-cvf}. 
\begin{center}
\begin{tabular}{@{}lccccccc@{}} \toprule
{}&{Initial $\Psi(\theta_{\rm f}, \theta_{\rm b})$ ($^{\circ}$)}&\multicolumn{2}{c}{Error ($\%$) }&{T}&{K}&{Random $\beta$}&{Epochs}\\
\cmidrule(r){3-4}
{}&{}& Test & Train & {} & {} & {} \\
\midrule
    {EP-1h} &{$-$}& $2.00 \pm 0.13$ & $(0.20)$ & 30 & 10  & No &30\\ 
    {EP-2h} &{$-$}& $1.95 \pm 0.10$ & $(0.14)$ & 100 & 20  & No & 50\\
    \midrule
    {C-EP-1h} &{$-$} & $2.85 \pm 0.18 $ & $(0.83) $ & 40 & 15  & No & 100\\
      {C-EP-1h} &{$-$} & $2.28 \pm 0.16 $ & $(0.41) $ & 40 & 15  & Yes & 100\\
    {C-EP-2h} &{$-$} & $2.44 \pm 0.14 $ & $(0.31) $ & 100 & 20  & No & 150\\
\midrule
{C-VF-1h} & 0 & $ 2.43 \pm 0.08 $ &  ($0.77$) & 40 &  15 & Yes & 100 \\ 
{} & 22.5 & $ 2.38 \pm 0.15 $ &  ($0.74$) & 40 &  15 & Yes & 100 \\
{} & 45 & $ 2.37 \pm 0.06 $ &  ($0.78$) & 40 &  15 & Yes & 100 \\ 
{} & 67.5 & $ 2.48 \pm 0.15 $ &  ($0.81
$) & 40 &  15 & Yes & 100 \\ 
{} & 90 & $ 2.46 \pm 0.18 $ &  ($0.78$) & 40 &  15 & Yes & 100 \\ 
{} & 112.5 & $ 4.51 \pm 3.96 $ &  ($2.92$) & 40 &  15 & Yes & 100 \\
{} & 135 & $ 86.61 \pm 4.27 $ &  ($88.51$) & 40 &  15 & Yes & 100 \\ 
{} & 157.5 & $ 91.08 \pm 0.01 $ &  ($90.98$) & 40 &  15 & Yes & 100 \\ 
{} & 180 & $ 92.82 \pm 3.47 $ &  ($92.71$) & 40 &  15 & Yes & 100 \\
    \midrule
{C-VF-2h} & 0 & $ 2.97 \pm 0.19 $ &  ($1.58$) & 100 & 20
    & Yes & 150 \\
{} & 22.5 & $ 3.54 \pm 0.75 $ & ($2.70$) & 100 & 20
& Yes & 150 \\
{} & 45 & $ 3.78 \pm 0.78 $ &  $(2.86)$ & 100 & 20
& Yes & 150 \\
{} & 67.5 & $ 4.59 \pm 0.92 $ & $(4.68)$ & 100 & 20
& Yes & 150 \\
{} & 90 & $ 5.05 \pm 1.17 $ &  $(4.81)$ & 100 & 20
& Yes & 150 \\
{} & 112.5 & $ 20.33 \pm 13.03 $ & $(20.30)$ & 100 & 20
& Yes & 150 \\
{} & 135 & $ 59.04 \pm 17.97 $ & $(60.53)$ & 100 & 20
& Yes & 150 \\
{} & 157.5 & $ 77.90 \pm 13.49$ &  $(78.04)$ & 100 & 20
& Yes & 150 \\
{} & 180 & $ 74.17 \pm 12.76 $ &  $(74.05)$ & 100 & 20
& Yes & 150 \\
\bottomrule
\end{tabular}
    \label{table:full-results}
    \end{center}
\end{table}

\newpage
\subsection{Why C-EP does not perform as well as standard EP?}
\label{subsec:debug}
We provide here further ground for the training performance degradation observed on the MNIST task when implementing C-EP compared to standard EP. In practice, when training with C-EP, we have to make a trade-off between:
\begin{enumerate}
    \item having a learning rate that is small enough so that C-EP normalized updates are subsequently close enough to the gradients of BPTT (Theorem \ref{thm:gdd}),
    \item having a learning rate that is large enough to ensure convergence within a reasonable number of epochs.
\end{enumerate}

In other words, the degradation of accuracy observed in the table of Fig.~\ref{table:results} is due to using a learning rate that is too large to observe convergence within 100 epochs. To demonstrate this, we implement Alg.~\ref{alg:debug-cep} which simply consists in using a very small learning rate throughout the second phase (denoted as $\eta_{\rm tiny}$), and artificially rescaling the resulting weight update by a bigger learning rate (denoted as $\eta$). Applying Alg.~\ref{alg:debug-cep} to a fully connected layered architecture with one hidden layer, $T=30$, $K=10$, $\beta = 0.1$,  yields $2.06 \pm 0.13 \%$ test error and $0.18 \pm 0.01 \%$ train error over 5 trials, where we indicate mean and standard deviation. Similarly, applying Alg.~\ref{alg:debug-cep} to a fully connected layered architecture with two hidden layers, $T=100$, $K=20$, $\beta = 0.5$,  yields $1.89 \pm 0.22 \%$ test error and $0.02 \pm 0.02 \%$ train error. These results are exactly the same as the one provided by standard EP - see Table~\ref{table:full-results}.

\begin{algorithm}{\emph{Input}: $x$, $y$, $\theta$, $\beta$, $\eta$, $\eta_{\rm tiny} = 10^{-5}\eta$. \\
\emph{Output}: $\theta$.}
\caption{Debugging procedure of C-EP}
\begin{algorithmic}[1]
\State $s_0 \gets 0$ \Comment{First Phase}
\State $\Delta\theta \gets 0$ \Comment{Temporary variable accumulating parameter updates}
\Repeat
\State $s_{t+1}\gets \frac{\partial \Phi}{\partial s} \left( x, s_t, \theta \right)$
\Until $s_t = s_*$
\State $s_0^\beta \gets s_*$ \Comment{Second Phase}
\Repeat
\State $s_{t+1}^{\beta}\gets \frac{\partial \Phi}{\partial s} \left( x, s_t^\beta, \theta \right) - \beta \frac{\partial \ell}{\partial s} \left( s_t^\beta,y \right)$
\State 
\State $\theta \gets \theta + \frac{\eta_{\rm tiny}}{\beta} \left( \frac{\partial \Phi}{\partial \theta} \left( s_{t+1}^\beta \right) -  \frac{\partial \Phi}{\partial \theta} \left( s_{t}^\beta \right) \right)$
\State  $\Delta\theta \gets \Delta\theta + \frac{\eta_{\rm tiny}}{\beta} \left( \frac{\partial \Phi}{\partial \theta} \left( s_{t+1}^\beta \right) -  \frac{\partial \Phi}{\partial \theta} \left( s_{t}^\beta \right) \right)$
\Until $s_t^\beta$ and $\theta$ are converged.
\State $\theta \gets \theta - \Delta\theta + \frac{\eta}{\eta_{\rm tiny}}\Delta\theta$ \Comment{Rescale the total parameter update by $\frac{\eta}{\eta_{\rm tiny}}$}
\end{algorithmic}
\label{alg:debug-cep}
\end{algorithm}

\newpage
\subsection{Training curves}

\begin{figure}[ht!]
\begin{center}
\fbox{
   \includegraphics[width=0.32\textwidth]{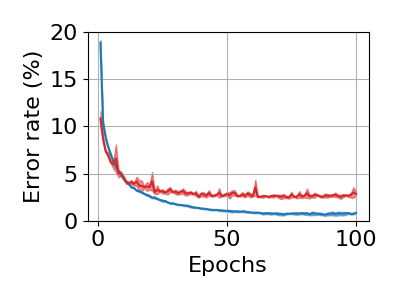}
      \hfill 
   \includegraphics[width=0.32\textwidth]{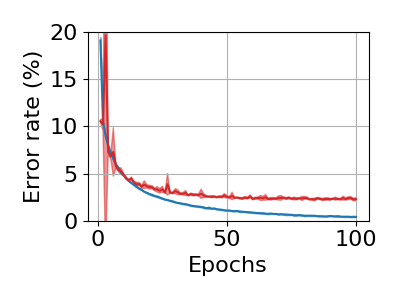}
      \hfill 
   \includegraphics[width=0.32\textwidth]{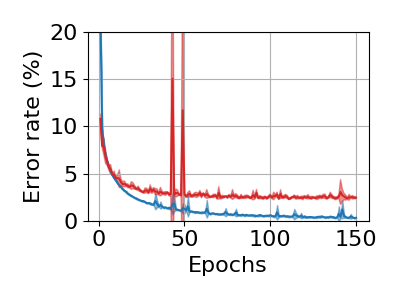}   
   }
\end{center}
\caption{Train and test error achieved on MNIST with Continual Equilibrium Propagation (C-EP) on the Discrete-Time RNN model with symmetric weights. Plain lines indicate mean, shaded zones delimiting mean plus/minus standard deviation over 5 trials. {\bfseries Left}: C-EP on the fully connected layered architecture with one hidden layer (784-512-10) without beta randomization. {\bfseries Middle}: C-EP on the fully connected layered architecture with one hidden layer (784-512-10) with beta randomization. {\bfseries Right}: C-EP on the fully connected layered architecture with two hidden layers (784-512-512-10) without beta randomization.}
\label{error_curves:cep}
\end{figure}

\begin{figure}[ht!]
\begin{center}
\fbox{
   \includegraphics[width=\textwidth]{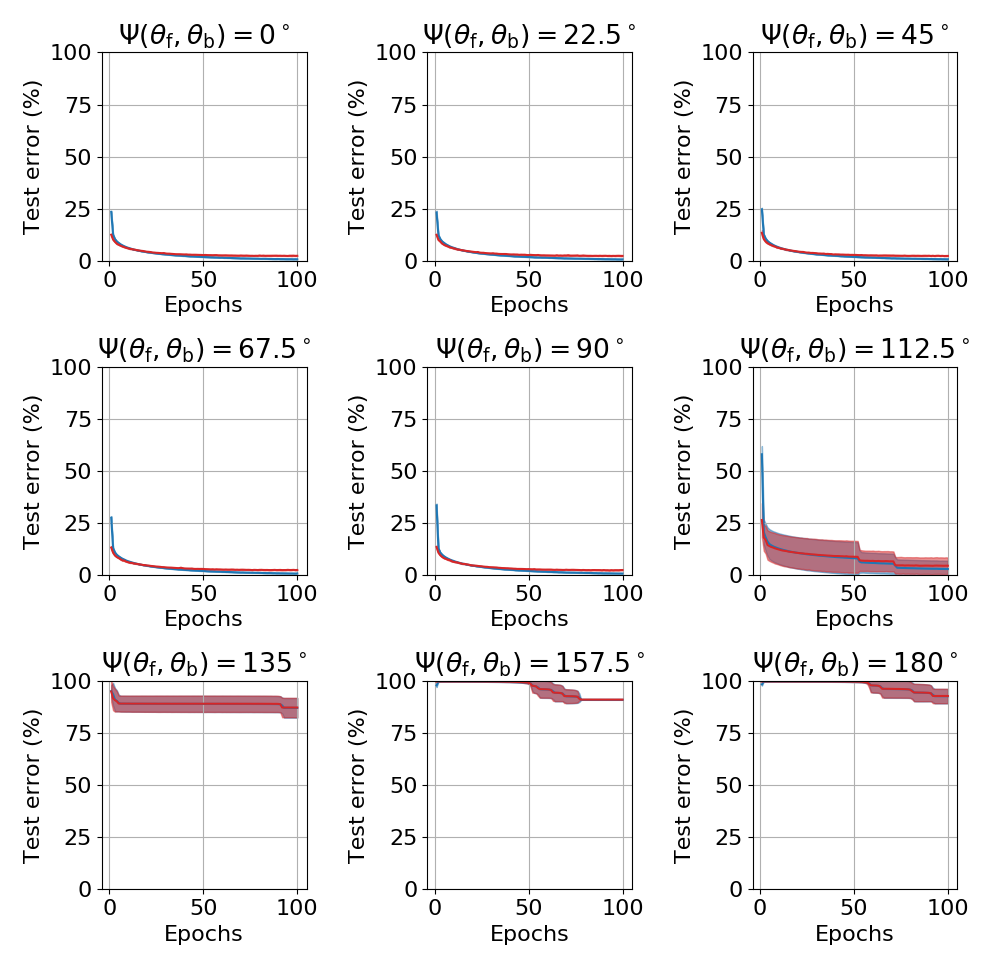}
   }
\end{center}
\caption{Train and test error achieved on MNIST by Continual Vector Field Equilibrium Propagation (C-VF) on the Discrete-Time RNN model with asymmetric weights with one hidden layer (784-512-10) for different initialization for the angle between forward and backward weights ($\Psi$). Plain lines indicate mean, shaded zones delimiting mean plus/minus standard deviation over 5 trials.}
\label{error_curves:C-VF-1hidden}
\end{figure}

\begin{figure}[ht!]
\begin{center}
\fbox{
   \includegraphics[width=\textwidth]{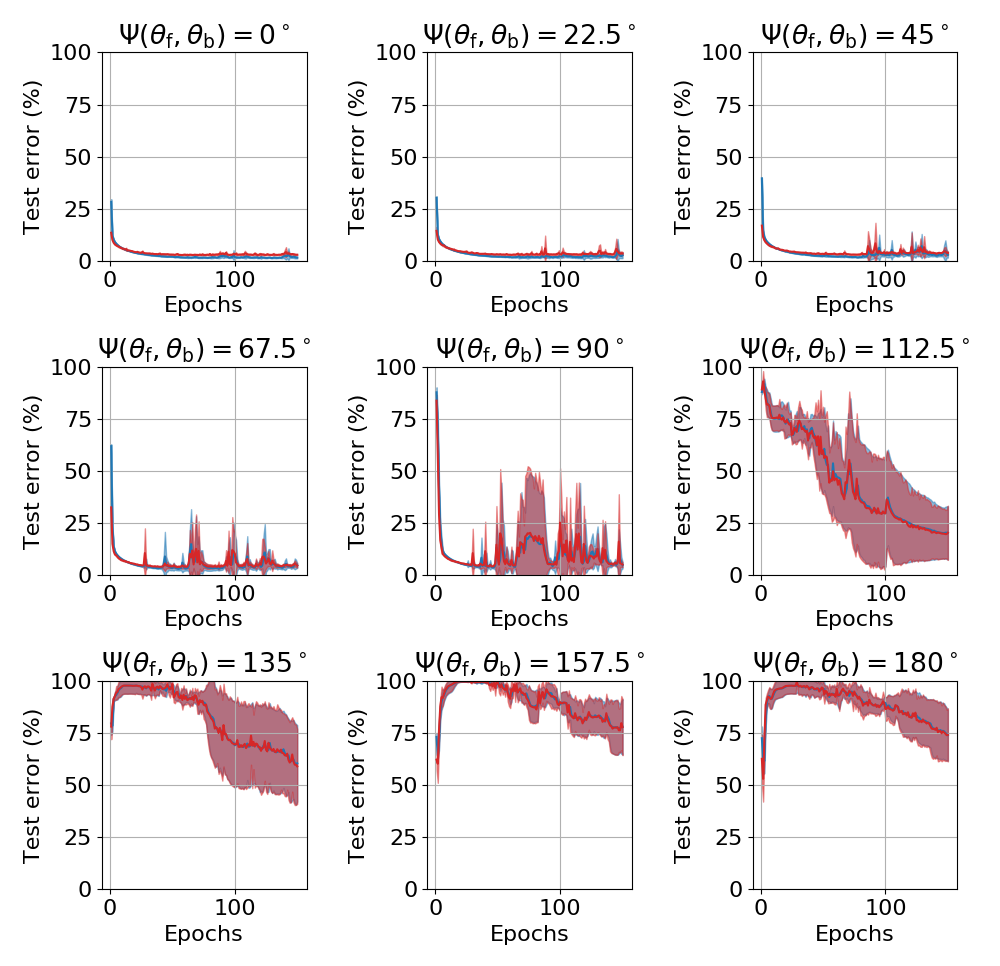}
   }
\end{center}
\caption{Train and test error achieved on MNIST by Continual Vector Field Equilibrium Propagation (C-VF) on the vanilla RNN model with asymmetric weights with two hidden layers (784-512-512-10) for different initialization for the angle between forward and backward weights ($\Psi$). Plain lines indicate mean, shaded zones delimiting mean plus/minus standard deviation over 5 trials.}
\label{error_curves:C-VF-2hidden}
\end{figure}

\end{document}